\def\1{\bm{1}}
\DeclareMathAlphabet{\mathsfit}{\encodingdefault}{\sfdefault}{m}{sl}
\SetMathAlphabet{\mathsfit}{bold}{\encodingdefault}{\sfdefault}{bx}{n}
\newcommand{\E}{\mathbb{E}}
\newcommand{\KL}{D_{\mathrm{KL}}}
\DeclareMathOperator*{\argmax}{arg\,max}
\DeclareMathOperator*{\argmin}{arg\,min}
\newtheorem{theorem}{Theorem}
\newtheorem*{theorem*}{Theorem}
\newtheorem{lemma}[theorem]{Lemma}
\newtheorem*{proposition*}{Proposition}
\newtheorem*{corollary*}{Corollary}
\newtheorem{assumption}[theorem]{Assumption}
\newtheorem{remark}[theorem]{Remark}
\def\##1\#{\begin{align}#1\end{align}}
\def\$#1\${\begin{align*}#1\end{align*}}
\def\pie{\pi_{\mathrm{E}}}
\def\tpie{\tilde{\pi}_{\mathrm{E}}}
\def\KL{\mathrm{KL}}
\def\EE{\mathbb{E}}
\def\cD{\mathcal{D}}
\title{Towards Robust Offline Reinforcement Learning under Diverse Data Corruption}
\author{Rui Yang$^1$\thanks{Equal Contribution. Correspondence to Rui Yang $\langle$\texttt{yangrui.thu2015@gmail.com}$\rangle$ and Han Zhong  $\langle$\texttt{hanzhong@stu.pku.edu.cn}$\rangle$. Our code is available at \href{https://github.com/YangRui2015/RIQL}{https://github.com/YangRui2015/RIQL}.}, Han Zhong$^{2*}$,  Jiawei Xu$^{3*}$, Amy Zhang$^4$,
Chongjie Zhang$^5$, Lei Han$^6$, Tong Zhang$^1$ \\
$^1$HKUST, \, $^2$Peking University, $^3$CUHK (SZ), \, $^4$University of Texas at Austin \\
$^5$Washington University (St. Louis), \, $^6$Tencent Robotics X 
}
\begin{document}

\maketitle

\begin{abstract}
Offline reinforcement learning (RL) presents a promising approach for learning reinforced policies from offline datasets without the need for costly or unsafe interactions with the environment. However, datasets collected by humans in real-world environments are often noisy and may even be maliciously corrupted, which can significantly degrade the performance of offline RL. In this work, we first investigate the performance of current offline RL algorithms under comprehensive data corruption, including states, actions, rewards, and dynamics. Our extensive experiments reveal that implicit Q-learning (IQL) demonstrates remarkable resilience to data corruption among various offline RL algorithms. Furthermore, we conduct both empirical and theoretical analyses to understand IQL's robust performance, identifying its supervised policy learning scheme as the key factor. Despite its relative robustness, IQL still suffers from heavy-tail targets of Q functions under dynamics corruption. To tackle this challenge, we draw inspiration from robust statistics to employ the Huber loss to handle the heavy-tailedness and utilize quantile estimators to balance penalization for corrupted data and learning stability. By incorporating these simple yet effective modifications into IQL, we propose a more robust offline RL approach named Robust IQL (RIQL). Extensive experiments demonstrate that RIQL exhibits highly robust performance when subjected to diverse data corruption scenarios. 
\end{abstract}

\section{Introduction}
Offline reinforcement learning (RL) is an increasingly prominent paradigm for learning decision-making from offline datasets \citep{levine2020offline,fujimoto2019off,kumar2019stabilizing,kumar2020conservative}. Its primary objective is to address the limitations of RL, which traditionally necessitates a large amount of online interactions with the environment. Despite its advantages, offline RL suffers from the challenges of distribution shift between the learned policy and the data distribution \citep{levine2020offline}. To address such issue, offline RL algorithms either enforce policy constraint \citep{wang2018exponentially,fujimoto2019off,fujimoto2021minimalist,kostrikov2021offline} or penalize values of out-of-distribution (OOD) actions \citep{kumar2020conservative,an2021uncertainty,bai2022pessimistic,ghasemipour2022so} to ensure the learned policy remains closely aligned with the training distribution.

Most previous studies on offline RL have primarily focused on simulation tasks, where data collection is relatively accurate. However, in real-world environments, data collected by humans or sensors can be subject to random noise or even malicious corruption. For example, during RLHF data collection, annotators may inadvertently or deliberately provide incorrect responses or assign higher rewards for harmful responses. This introduces challenges when applying offline RL algorithms, as constraining the policy to the corrupted data distribution can lead to a significant decrease in performance or a deviation from the original objectives. This limitation can restrict the applicability of offline RL to many real-world scenarios \citep{wang2024secrets,sun2023reinforcement}. To the best of our knowledge, the problem of robustly learning an RL policy from corrupted offline datasets remains relatively unexplored. It is important to note that this setting differs from many prior works on robust offline RL \citep{shi2022distributionally,yang2022rorl,blanchet2023double}, which mainly focus on testing-time robustness, i.e., learning from clean data and defending against attacks during evaluation. The most closely related works are \cite{zhang2022corruption,ye2023corruptionrobust,wu2022copa}, which focus on the statistical robustness and stability certification of offline RL under data corruption, respectively. Different from these studies, we aim to develop an offline RL algorithm that is robust to diverse data corruption scenarios.

In this paper, we initially investigate the performance of various offline RL algorithms when confronted with data corruption of all four elements in the dataset: states, actions, rewards, and dynamics. As illustrated in Figure~\ref{fig:motivating_example}, our observations reveal two findings: (1) current SOTA pessimism-based offline RL algorithms, such as EDAC \citep{an2021uncertainty} and MSG \citep{ghasemipour2022so}, experience a significant performance drop when subjected to data corruption. (2) Conversely, IQL \citep{kostrikov2021offline} demonstrates greater robustness against diverse data corruption. Further analysis indicates that the supervised learning scheme is the key to its resilience. This intriguing finding highlights the superiority of weighted imitation learning \citep{wang2018exponentially,peng2019advantage,kostrikov2021offline} for offline RL under data corruption. From a theoretical perspective, we prove that the corrupted dataset impacts IQL by introducing two types of errors: (i) intrinsic imitation errors, which are the generalization
errors of supervised learning objectives and typically diminish as the dataset size $N$ increases; and (ii) corruption errors at a rate of $\mathcal{O}(\sqrt{\zeta/N})$, where $\zeta$ is the cumulative corruption level (see Assumption~\ref{assumption:corruption:level}). 
This result underscores that, under the mild assumption that $\zeta = o(N)$, IQL exhibits robustness in the face of diverse data corruption scenarios.

Although IQL outperforms other algorithms under many types of data corruption, it still suffers significant performance degradation, especially when faced with dynamics corruption. To enhance its overall robustness, we propose three improvements: observation normalization, the Huber loss, and the quantile Q estimators. We note that the heavy-tailedness in the value targets under dynamics corruption greatly impacts IQL's performance. To mitigate this issue, we draw inspiration from robust statistics and employ the Huber loss \citep{huber1992robust,huber2004robust} to robustly learn value functions. Moreover, we find the Clipped Double Q-learning (CDQ) trick \citep{fujimoto2018addressing}, while effective in penalizing corrupted data, lacks stability under dynamics corruption. To balance penalizing corrupted data and enhancing learning stability, we introduce quantile Q estimators with an ensemble of Q functions. By integrating all of these modifications into IQL, we present Robust IQL (RIQL), a simple yet remarkably robust offline RL algorithm. Through extensive experiments, we demonstrate that RIQL consistently exhibits robust performance across various data corruption scenarios. We believe our study not only provides valuable insights for future robust offline RL research but also lays the groundwork for addressing data corruption in more realistic situations.

\begin{figure}[t]
    \centering
    \vspace{-10pt}
    \includegraphics[width=1.0\linewidth]{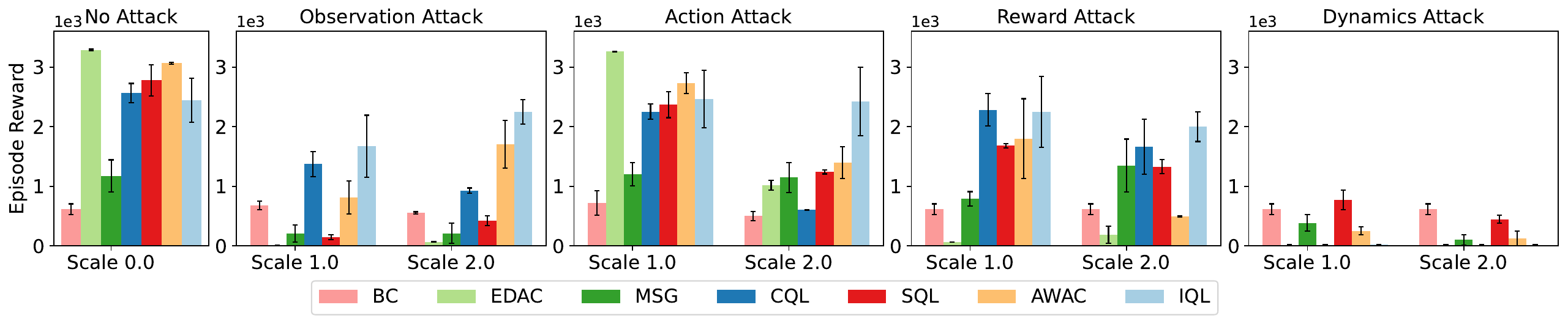}
    \vspace{-15pt}
    \caption{Performance of offline RL algorithms under random attacks on the Hopper task. Many offline RL algorithms are susceptible to different types of data corruption. Notably, IQL demonstrates superior resilience to 3 out of 4 types of data corruption.}
    \label{fig:motivating_example}
    \vspace{-10pt}
\end{figure}

\vspace{-5pt}
\section{Related Works} \label{sec:related}
\vspace{-5pt}
\textbf{Offline RL. } To address the distribution shift problem, offline RL algorithms typically fall into policy constraints-based \citep{wang2018exponentially,peng2019advantage,fujimoto2021minimalist,kostrikov2021offline,sun2024accountability,xu2023offline}, and pessimism-based approaches \citep{kumar2020conservative,an2021uncertainty,bai2022pessimistic,ghasemipour2022so,nikulin2023anti}. Among these algorithms, ensemble-based approaches \citep{an2021uncertainty,ghasemipour2022so} demonstrate superior performance by estimating the lower-confidence bound (LCB) of Q values for OOD actions. Additionally, weighted imitation-based algorithms \citep{kostrikov2021offline,xu2023offline} enjoy better simplicity and stability compared to pessimism-based approaches.

\vspace{4pt}
\noindent\textbf{Robust Offline RL. } In the robust offline setting, a number of works focus on testing-time or the distributional robustness \citep{zhou2021finite,shi2022distributionally,yang2022rorl,panaganti2022robust,blanchet2023double}. Regarding the training-time robustness of offline RL, \citet{li2023survival} investigate various reward attacks in offline RL. From a theoretical perspective, \cite{zhang2022corruption} study offline RL under data contamination. 
Different from these works, we propose an algorithm that is both provable and practical under diverse data corruption on all elements. More comprehensive related works are deferred to Appendix~\ref{ap:related:works}.

\vspace{-5pt}
\section{Preliminaries}
\vspace{-5pt}
\textbf{Reinforcement Learning (RL).} RL is generally represented as a Markov Decision Process (MDP) defined by a tuple $(S, A, P, r, \gamma)$. The tuple consists of a state space $S$, an action space $A$, a transition function $P$, a reward function $r$, and a discount factor $\gamma \in [0,1)$. 
For simplicity, we assume that the reward function is deterministic and bounded $| r(s, a)| \le R_{\max}$ for any $(s, a) \in S \times A$. 
The objective of an RL agent is to learn a policy $\pi(a|s)$ that maximizes the expected cumulative return:
$\max_{\pi} \mathbb{E}_{s_0\sim \rho_0, a_t\sim \pi(\cdot|s_t),s_{t+1}\sim P(\cdot|s_t,a_t)} \left[\sum_{t=0}^{\infty} \gamma^t r(s_t,a_t) \right]$,
where $\rho_0$ is the distribution of initial states. The value functions are defined as 
$V^{\pi}(s)=\E_{\pi, P} \left[ \sum_{t=0}^{\infty} \gamma^t r(s_t,a_t) |s_0 = s \right]$ and $Q^{\pi}(s,a)= r(s,a) + \gamma \E_{s'\sim P(\cdot|s,a)} \left[ V^{\pi}(s')\right]$.

\noindent\textbf{Offline RL and IQL.} We focus on offline RL, which aims to learn a near-optimal policy from a static dataset $\mathcal{D}$. IQL~\citep{kostrikov2021offline} employs expectile regression to learn the value functions:
\begin{align}
\centering
   & \mathcal{L}_{Q}(\theta)  = \E_{(s,a,s')\sim \mathcal{D}} \left[  (r(s,a) + \gamma V_{\psi}(s') - Q_{\theta}(s,a) )^2 \right], \label{eq:IQL_Q_loss} \\
   & \mathcal{L}_{V}(\psi)  = \E_{(s,a)\sim \mathcal{D}} \left[ \mathcal{L}_2^{\tau} (Q_{\theta}(s,a)-V_{\psi}(s))\right], \quad \mathcal{L}_2^{\tau}(x) = |\tau - \mathbf{1}(x<0)|x^2. \label{eq:IQL_V_loss}
\end{align}
IQL further extracts the policy using weighted imitation learning with a hyperparameter $\beta$:
\begin{align}
\label{eq:expweightedIL}
    \mathcal{L}_{\pi}(\phi) =  \mathbb{E}_{(s, a) \sim \mathcal{D}}[\exp(\beta \cdot A(s,a))\log \pi_{\phi}(a|s)], \quad A(s,a)=Q_{\theta}(s,a) - V_{\psi}(s).
\end{align}

\noindent\textbf{Clean Data and Corrupted Data.} We assume that the uncorrupted data $(s, a, r, s')$ follows the distribution $(s, a) \sim \mu(\cdot, \cdot)$, $r = r(s, a)$, and $s' \sim P(\cdot \mid s, a)$. Here $\mu(\cdot, \cdot)$ is the behavior distribution. Besides, we use $\pi_\mu(a \mid s)$ to denote the conditional distribution. For the corrupted dataset $\mathcal{D}$, we assume that $\mathcal{D}=\{(s_i, a_i, r_i, s'_i)\}_{i = 1}^N$ consists of $N$ samples, where we allow $(s_i, a_i)$ not to be sampled from the behavior distribution $\mu(\cdot,\cdot)$, $r_i = \tilde{r}(s_i, a_i)$, and $s'_i \sim \tilde{P}(\cdot \mid s_i, a_i)$. Here $\tilde{r}$ and $\tilde{P}$ are the corrupted reward function and transition dynamics for the $i$-th data, respectively. For ease of presentation, we denote the empirical state distribution and empirical state-action distribution as $d_{\mathcal{D}}(s)$ and $ d_{\mathcal{D}}(s, a)$ respectively, while the conditional distribution is represented by $\pi_{\mathcal{D}}(a|s)$, i.e.,  $\pi_{\mathcal{D}}(a|s) = \frac{\sum_{i = 1}^N\mathbf{1}\{s_i = s, a_i = a\}}{\max\{1, \sum_{i = 1}^N\mathbf{1}\{s_i = s\}\}}$. Also, we introduce the following notations:
\begin{align}
    [\mathcal{T} V](s, a) = r(s, a) + \E_{s' \sim P(\cdot \mid s, a)} [V(s')], \quad [\tilde{\mathcal{T}} V](s, a) = \tilde{r}(s, a) +  \E_{s' \sim \tilde{P}(\cdot \mid s, a)} [V(s')],
\end{align}
for any $(s, a) \in S \times A$ and $V: S \mapsto [0, R_{\max}/(1 - \gamma) ]$. 

\vspace{-5pt}
\section{Offline RL under Diverse Data Corruption}
\vspace{-5pt}
In this section, we first compare various offline RL algorithms in the context of data corruption. Subsequently, drawing on empirical observations, we delve into a theoretical understanding of the provable robustness of weighted imitation learning methods under diverse data corruption.

\subsection{Empirical Observation}
\label{sec:empirical_observation}
Initially, we develop random attacks on four elements: states, actions, rewards, and next-states (or dynamics). We sample $30\%$ of the transitions from the dataset and modify them by incorporating random noise at a scale of 1 standard deviation. Details about data corruption are provided in Appendix \ref{ap:corruption_details}. As illustrated in Figure \ref{fig:motivating_example}, current SOTA offline RL algorithms, such as EDAC and MSG, are vulnerable to various types of data corruption, with a particular susceptibility to observation and dynamics attacks. In contrast, IQL exhibits enhanced resilience to observation, action, and reward attacks, although it still struggles with dynamic attack. Remarkably, Behavior Cloning (BC) undergoes a mere $0.4\%$ average performance degradation under random corruption. However, BC sacrifices performance without considering future returns. Trading off between performance and robustness, weighted imitation learning, to which IQL belongs, is thus more favorable. In Appendix \ref{ap:ablation_IQL}, we perform ablation studies on IQL and uncover evidence indicating that the supervised policy learning scheme contributes more to IQL's robustness than other techniques, such as expectile regression. This finding motivates us to theoretically understand such a learning scheme.

\subsection{Theoretical analysis}\label{sec:theorey_IQL}

We define the cumulative corruption level as follows.
\begin{assumption}[Cumulative Corruption] \label{assumption:corruption:level}
    Let $\zeta = \sum_{i = 1}^N (2 \zeta_i + \log \zeta'_i )$ denote the cumulative corruption level, where $\zeta_i$ and $\zeta'_i$ are defined as
    \begin{align*}
         \| [\mathcal{T} V] (s_i, a) - [\tilde{\mathcal{T}} V] (s_i, a) \|_{\infty} \le \zeta_i, \quad \max\Big\{ \frac{\pi_{\mathcal{D}}(a \mid s_i)}{\pi_{\mu}(a \mid s_i)}, \frac{\pi_{\mu}(a \mid s_i)}{\pi_{\mathcal{D}}(a \mid s_i)} \Big\} \le \zeta'_{i}, \quad \forall a \in A. 
    \end{align*}
    Here $\|\cdot\|_{\infty}$ means taking supremum over $V: S \mapsto [0, R_{\max}/(1-\gamma)]$.
\end{assumption}
\looseness = -1 We remark that $\{\zeta_i\}_{i = 1}^N$ in Assumption~\ref{assumption:corruption:level} quantifies the corruption level of reward functions and transition dynamics, and $\{\zeta'_i\}_{i = 1}^N$ represents the corruption level of observations and actions. Although $\{\zeta'_i\}_{i = 1}^{N}$ could be relatively large, our final results only have a logarithmic dependency on them. In particular, if there is no observation and action corruptions, $\log \zeta'_i = \log 1 = 0$ for any $i = 1, \cdots, N$.

To facilitate our theoretical analysis, we assume that we have obtained the optimal value function $V^*$ by solving \eqref{eq:IQL_Q_loss} and \eqref{eq:IQL_V_loss}. This has been demonstrated in \citet{kostrikov2021offline} under the uncorrupted setting. We provide justification for this under the corrupted setting in the discussion part of Appendix~\ref{appendix:pf:huber}. Then the policy update rule in \eqref{eq:expweightedIL} takes the following form:
\begin{equation}
\begin{aligned} \label{eq:pi:iql}
    \pi_{\mathrm{IQL}} & = \argmax_{\pi} \E_{(s, a) \sim \mu} [\exp(\beta \cdot [\mathcal{T} V^* - V^*](s, a) ) \log \pi(a \mid s) ] \\ 
    & = \argmin_{\pi} \E_{s \sim \mu} [\mathrm{KL}( \pi_{\mathrm{E}}(\cdot \mid s), \pi(\cdot \mid s)) ],
\end{aligned}
\end{equation}
where $\pi_{\mathrm{E}}(a \mid s) \propto \pi_{\mu}(a \mid s) \cdot \exp(\beta \cdot [\mathcal{T} V^* - V^*](s, a) )$ is the policy that IQL imitates. 
However, the learner only has access to the corrupted dataset $\mathcal{D}$. With $\mathcal{D}$, IQL finds the following policy
\begin{equation}
\begin{aligned} \label{eq:tilde:pi:iql}
    \tilde{\pi}_{\mathrm{IQL}} 
    & = \argmin_{\pi} \E_{s \sim \mathcal{D}} [\mathrm{KL}( \tilde{\pi}_{\mathrm{E}}(\cdot \mid s), \pi(\cdot \mid s)) ], 
\end{aligned}
\end{equation}
where $\tilde{\pi}_{\mathrm{E}}(a \mid s) \propto \pi_{\mathcal{D}}(a \mid s) \cdot \exp(\beta \cdot [\tilde{\mathcal{T}} V^* - V^*](s, a) )$.

To bound the value difference between $\pi_{\mathrm{IQL}}$ and $\tilde{\pi}_{\mathrm{IQL}}$, our result relies on the coverage assumption. 
\begin{assumption}[Coverage] \label{assumption:coverage}
There exists an $M > 0$ satisfying
    $    \max\{ \frac{d^{\pie}(s, a)}{\mu(s, a)}, \frac{d^{\tpie}(s, a)}{d_{\mathcal{D}}(s, a)} \} \le M$ for any $ (s, a) \in S \times A.$
\end{assumption}
Assumption~\ref{assumption:coverage} requires that the corrupted dataset $\mathcal{D}$ and the clean data $\mu$ have good coverage of $\tpie$ and $\pie$ respectively, similar to the partial coverage condition in \citep{jin2021pessimism,rashidinejad2021bridging}.
The following theorem shows that IQL is robust to corrupted data. For simplicity, we choose $\beta = 1$ in \eqref{eq:pi:iql} and \eqref{eq:tilde:pi:iql}. Our analysis is ready to be extended to the case where $\beta$ takes on any constant.

\begin{theorem}[Robustness of IQL] \label{thm:iql}
    Fix $\beta = 1$. Under Assumptions~\ref{assumption:corruption:level} and \ref{assumption:coverage}, it holds that
    \begin{align*}
        V^{\pi_{\mathrm{IQL}}} - V^{\tilde{\pi}_{\mathrm{IQL}}}  \le \frac{\sqrt{2M} R_{\max}}{(1- \gamma)^2} [ \sqrt{\epsilon_{1}} + \sqrt{\epsilon_{2}} ] + \frac{2 R_{\max}}{(1 - \gamma)^2} \sqrt{\frac{M \zeta}{N} },
    \end{align*}
    where $\epsilon_1 =  \EE_{s \sim \mu} [\KL( \pie(\cdot \mid s), {\pi}_{\mathrm{IQL}}(\cdot \mid s) ) ]$ and $\epsilon_2 =  \EE_{s \sim \mathcal{D}} [\KL( \tpie(\cdot \mid s), \tilde{\pi}_{\mathrm{IQL}}(\cdot \mid s) ) ]$.
\end{theorem}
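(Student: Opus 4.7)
The plan is to decompose the value gap via the exponentially-reweighted pivot policies $\pie$ and $\tpie$:
$$V^{\pi_{\mathrm{IQL}}} - V^{\tilde{\pi}_{\mathrm{IQL}}} \le |V^{\pi_{\mathrm{IQL}}} - V^{\pie}| + |V^{\pie} - V^{\tpie}| + |V^{\tpie} - V^{\tilde{\pi}_{\mathrm{IQL}}}|,$$
so that the two imitation-error contributions $\sqrt{\epsilon_1},\sqrt{\epsilon_2}$ come from the outer two terms and the cumulative-corruption contribution $\sqrt{M\zeta/N}$ comes from the middle term.

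For each of the three pieces I would apply the performance-difference identity $V^{\pi_1} - V^{\pi_2} = (1-\gamma)^{-1}\EE_{s \sim d^{\pi_1}}\langle \pi_1(\cdot \mid s) - \pi_2(\cdot \mid s),\, Q^{\pi_2}(s,\cdot)\rangle$ together with $\|Q^{\pi_2}\|_\infty \le R_{\max}/(1-\gamma)$; since this identity is available in either direction, I can always choose the state distribution that Assumption~\ref{assumption:coverage} controls. For $|V^{\pi_{\mathrm{IQL}}} - V^{\pie}|$ I would run the identity through $d^{\pie}$ and then use $d^{\pie}(s) \le M\mu(s)$ (obtained by marginalizing the state-action coverage bound over $a$); for $|V^{\tpie} - V^{\tilde{\pi}_{\mathrm{IQL}}}|$ and the middle term $|V^{\pie} - V^{\tpie}|$ I would run it through $d^{\tpie}$ and then use $d^{\tpie}(s) \le M\, d_{\mathcal{D}}(s)$. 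In each case Pinsker's inequality followed by Jensen converts $\EE_s\|\pi_1(\cdot\mid s) - \pi_2(\cdot\mid s)\|_1$ into $\sqrt{2\,\EE_s [\KL(\pi_1(\cdot\mid s),\pi_2(\cdot\mid s))]}$, which directly yields $\frac{\sqrt{2M}R_{\max}}{(1-\gamma)^2}\sqrt{\epsilon_1}$ and $\frac{\sqrt{2M}R_{\max}}{(1-\gamma)^2}\sqrt{\epsilon_2}$ for the outer terms.

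The heart of the argument is the middle term, which after the reductions above boils down to bounding the empirical average $\frac{1}{N}\sum_{i=1}^N \KL(\pie(\cdot \mid s_i)\,\|\,\tpie(\cdot \mid s_i))$ by $\zeta/N$. Using the explicit exponential-family form of the pivots, I decompose
$$\log\frac{\pie(a \mid s_i)}{\tpie(a \mid s_i)} = \log\frac{\pi_\mu(a \mid s_i)}{\pi_{\mathcal{D}}(a \mid s_i)} + \bigl([\mathcal{T}V^*] - [\tilde{\mathcal{T}}V^*]\bigr)(s_i,a) + \log\frac{Z_{\tpie}(s_i)}{Z_{\pie}(s_i)},$$
where $Z_\pi$ denotes the normalizer of each pivot. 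The first two summands are bounded pointwise by $\log\zeta'_i$ and $\zeta_i$ respectively via Assumption~\ref{assumption:corruption:level}. For the normalizer ratio I would substitute the same pointwise bounds $\pi_{\mathcal{D}}\le\zeta'_i\pi_\mu$ and $[\tilde{\mathcal{T}}V^*]\le[\mathcal{T}V^*]+\zeta_i$ directly inside the sum defining $Z_{\tpie}(s_i)$, yielding a further contribution of order $\log\zeta'_i+\zeta_i$. Taking the $\pie$-expectation then gives a per-sample KL of order $2\zeta_i + \log\zeta'_i$, matching $\zeta/N$ after averaging over $i$. Combining with the outer-term bounds and collecting the $(1-\gamma)^{-2}$ factors delivers the claimed inequality.

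The step I expect to require the most care is the normalizer-ratio bound, because the direction of the KL and the sides on which each pointwise inequality is applied must be chosen so that the logarithmic factor contributes with the coefficient stipulated by the definition of $\zeta$ rather than being doubled. A smaller technical point is that the sup-norm in Assumption~\ref{assumption:corruption:level} is taken over all $V : S \to [0,R_{\max}/(1-\gamma)]$, so it specializes cleanly to $V=V^*$ because $V^*$ lies in that class. Finally, the $\beta=1$ simplification is cosmetic: any constant $\beta$ only rescales the exponent in the reweighting, and the same argument propagates this as a constant factor into the final bound.
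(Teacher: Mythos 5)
Your proposal is correct and follows essentially the same route as the paper's proof: the same three-term decomposition through the pivot policies $\pie$ and $\tpie$, the same performance-difference / H\"older / Pinsker / Jensen / coverage chain for each piece, and the same exponential-form log-ratio decomposition (per-point ratio plus normalizer ratio) to extract $2\zeta_i + \log\zeta'_i$ for the corruption term. The only differences are cosmetic --- you flip the KL direction in the middle term and change measure at the state level rather than the joint level --- and both are immaterial since Assumption~\ref{assumption:corruption:level} bounds the log-ratio uniformly over actions in both directions, and the apparent doubling of the $\log\zeta'_i$ coefficient you worried about is absorbed by the $\sqrt{2}$ from Pinsker exactly as in the paper.
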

See Appendix \ref{appendix:pf:iql} for a detailed proof. The $\epsilon_1$ and $\epsilon_2$ are standard imitation errors under clean data and corrupted data, respectively. These imitation errors can also be regarded as the generalization error of supervised learning objectives~\eqref{eq:pi:iql} and~\eqref{eq:tilde:pi:iql}. As $N$ increases, this type of error typically diminishes \citep{janner2019trust,xu2020error}. Besides, the corruption error term decays to zero with the mild assumption that $\zeta = o(N)$. Combining these two facts, we can conclude that IQL exhibits robustness in the face of diverse data corruption scenarios. Furthermore, we draw a comparison with a prior theoretical work \citep{zhang2022corruption} in robust offline RL (refer to Remark \ref{remark:compare} in Appendix~\ref{appendix:pf:iql}).

\section{Robust IQL for Diverse Data Corruption}
Our theoretical analysis suggests that the adoption of weighted imitation learning inherently offers robustness under data corruption. However, empirical results indicate that IQL still remains susceptible to dynamics attack. To enhance its resilience against all forms of data corruption, we introduce three improvements below.

\begin{wrapfigure}{rh}{0.3\linewidth}
\vspace{-10pt}
\includegraphics[width=1\linewidth]{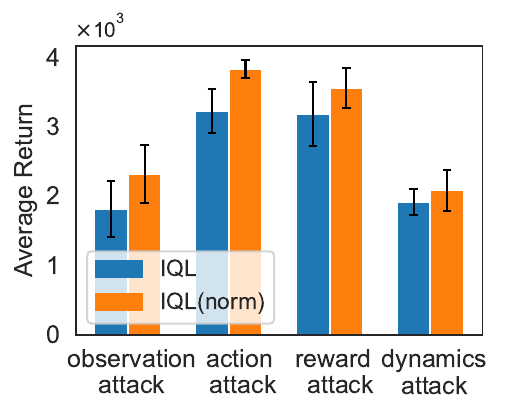}
\caption{Average performance of IQL w/ and w/o normalization over four datasets.}
\vspace{-5pt}
\label{fig:normalization_avg}
\end{wrapfigure}

\subsection{Observation Normalization}
\label{sec:obs_normalization}
In the context of offline RL, previous studies have suggested that normalization does not yield significant improvements \citep{fujimoto2021minimalist}. However, we find that normalization is highly advantageous in data corruption settings. Normalization is likely to help ensure that the algorithm's performance is not unduly affected by features with large-scale values. As depicted in Figure \ref{fig:normalization_avg}, it is evident that normalization enhances the performance of IQL across diverse attacks. More details regarding the ablation experiments on normalization is deferred to Appendix \ref{ap:normalization}. Given that $s$ and $s'$ in $(s,a,r,s')$ can be corrupted independently, the normalization is conducted by calculating the mean $\mu_o$ and variance $\sigma_o$ of all states and next-states in $\mathcal{D}$. Based on $\mu_o$ and $\sigma_o$, the states and next-states are normalized as $s_i = \frac{(s_{i} - \mu_o)}{\sigma_o}, s'_{i} = \frac{(s'_{i} - \mu_o)}{\sigma_o}$, where $\mu_o = \frac{1}{2 N} \sum_{i=1}^{N} (s_{i} +s'_{i} )$ and $ \sigma_o^2 = \frac{1}{2N}  \sum_{i=1}^{N} [(s_{i} - \mu_o)^2 + (s'_{i} - \mu_o)^2]$.

\begin{figure}[t]
\centering
\subfigure[]{
    \begin{minipage}[t]{0.245\linewidth}
        \centering
    \includegraphics[width=1\linewidth,height=0.77\linewidth]{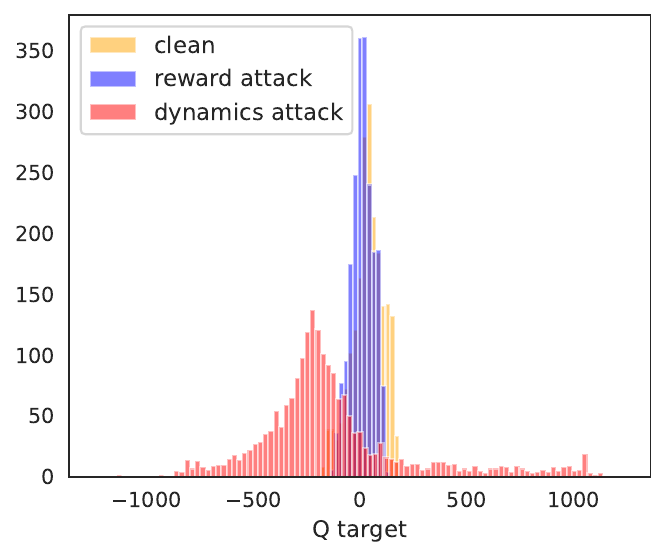}\\
    \end{minipage}%
}%
\subfigure[]{
    \begin{minipage}[t]{0.245\linewidth}
        \centering
        \includegraphics[width=1\linewidth,height=0.76\linewidth]{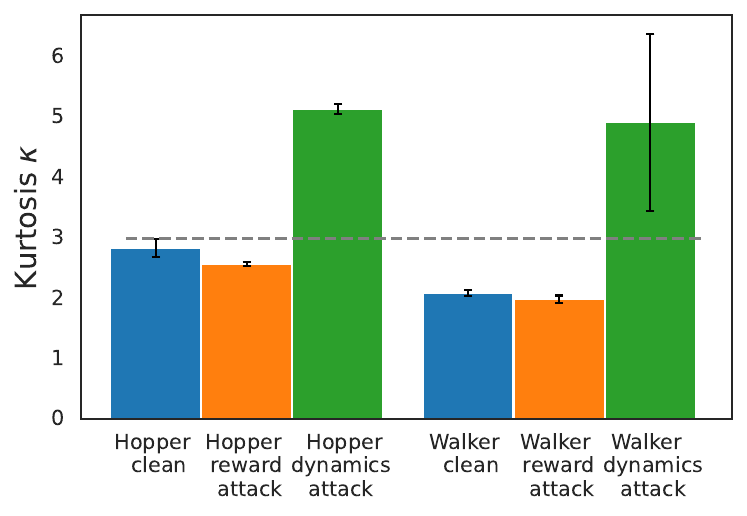}\\
    \end{minipage}%
}%
\subfigure[]{
    \begin{minipage}[t]{0.245\linewidth}
        \centering
        \includegraphics[width=1\linewidth,height=0.8\linewidth]{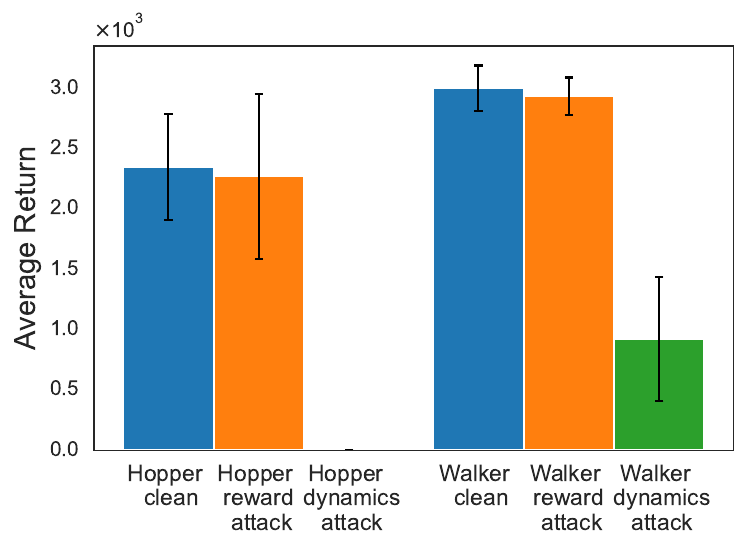}\\
    \end{minipage}%
}%
\subfigure[]{
    \begin{minipage}[t]{0.245\linewidth}
        \centering
        \includegraphics[width=0.95\linewidth,height=0.8\linewidth]{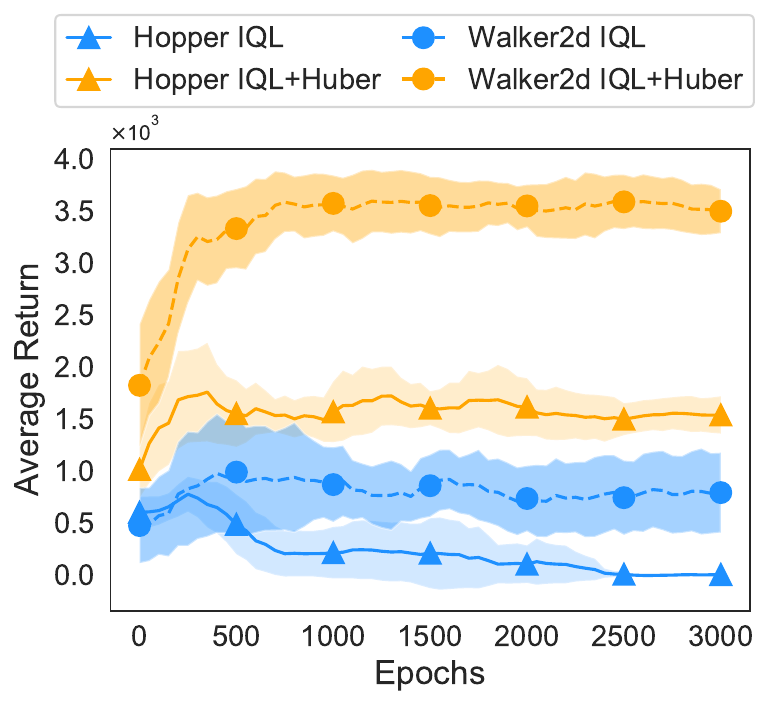}\\
    \end{minipage}%
}%
\centering
\vspace{-7pt}
\caption{(a) Heavy tailedness of Q target distributions on the Hopper task. The target values are normalized by subtracting the mean. (b) Kurtosis values of Q target distribution on Hopper and Walker tasks. The gray dashed line represents the kurtosis value of a Gaussian distribution. (c) Average final returns of IQL with different datasets. (d) Comparison of IQL w/ and w/o Huber loss. }
\label{fig:heavy_tail}
\vspace{-5pt}
\end{figure}

\subsection{Huber Loss for Robust Value Function Learning}
\label{sec:huber_loss}
As noted in Section \ref{sec:empirical_observation}, IQL, along with other offline RL algorithms, is susceptible to dynamics attack. The question arises: why are dynamics attacks challenging to handle? While numerous influential factors may contribute, we have identified an intriguing phenomenon: dynamics corruption introduces heavy-tailed targets for learning the Q function, which significantly impacts the performance of IQL. 
In Figure \ref{fig:heavy_tail} (a), we visualize the distribution of the Q target, i.e., $r(s,a)+\gamma V_{\psi}(s')$, which is essentially influenced by the rewards and the transition dynamics. Additionally, we collected 2048 samples to calculate statistical values after training IQL for $3\times 10^{6}$ steps. As depicted in Figure \ref{fig:heavy_tail} (a), the reward attack does not significantly impact the target distribution, whereas the dynamics attack markedly reshapes the target distribution into a heavy-tailed one. In Figure \ref{fig:heavy_tail} (b), we employ the metric Kurtosis $\kappa = \frac{\sum_{i=1}^{N}(X_i - \bar X)^4 / N}{\sum_{i=1}^{N}((X_i - \bar X)^2 / N)^2}$, which measures the heavy-tailedness relative to a normal distribution \citep{mardia1970measures,garg2021proximal}. The results demonstrate that dynamics corruption significantly increases the Kurtosis value compared to the clean dataset and the reward-attacked datasets. Correspondingly, as shown in Figure \ref{fig:heavy_tail} (c), a higher degree of heavy-tailedness generally correlates with significantly lower performance. This aligns with our theory, as the heavy-tailed issue leads to a substantial increase in $\{{\zeta_i}\}_{i=1}^N$, making the upper bound in Theorem~\ref{thm:iql} very loose. This observation underscores the challenges posed by dynamics corruption and the need for robust strategies to mitigate its impact.

The above finding motivates us to utilize Huber loss \citep{huber1973robust,huber1992robust}, which is commonly employed to mitigate the issue of heavy-tailedness \citep{sun2020adaptive,roy2021empirical}. In specific, given $V_{\psi}$, we replace the quadratic loss in \eqref{eq:IQL_Q_loss} by Huber loss $l_H^\delta(\cdot)$ and obtain the following objective 
\begin{align} \label{eq:huber:regression}
     \mathcal{L}_{Q}  = \E_{(s,a,r,s')\sim \mathcal{D}} \left[ l^\delta
_H (r + \gamma V(s') - Q(s,a) )\right], \quad \text{where }l^{\delta}_H(x) = 
\begin{cases}
    \frac{1}{2\delta} x^2, & \text{if } |x|
    \leq \delta \\
    |x| - \frac{1}{2} \delta, & \text{if } |x| > \delta
\end{cases}.
\end{align}
Here $\delta$ trades off the resilience to outliers from $\ell_1$ loss and the rapid convergence of $\ell_2$ loss. The $\ell_1$ loss can result in a geometric median estimator that is robust to outliers and heavy-tailed distributions. As illustrated in Figure \ref{fig:heavy_tail} (d), upon integrating Huber loss as in \eqref{eq:huber:regression}, we observe a significant enhancement in the robustness against dynamic attack. Next, we provide a theoretical justification for the usage of Huber loss. We first adopt the following heavy-tailed assumption.
\begin{assumption} \label{assumption:heavy-tail}
    Fix the value function parameter $\psi$, for any $(s, a, r, s') \sim \mathcal{D}$, it holds that 
    $
        r + \gamma V_{\psi}(s') = r(s, a) + \gamma \E_{\tilde{s} \sim P(\cdot \mid s, a)} [V_{\psi}(\tilde{s})] + \varepsilon_{s, a, s'},  
    $ 
    where $\varepsilon_{s, a, s'}$ is the heavy-tailed noise satisfying $\E[\varepsilon_{s, a, s'} \mid s, a, s'] = 0$ and  $\E[|\varepsilon_{s, a, s'}|^{1 + \nu}] < \infty$ for some $\nu > 0$. 
\end{assumption}
Assumption~\ref{assumption:heavy-tail} only assumes the existence of the $(1+\nu)$-th moment of the noise $\varepsilon_{s, a, s'}$, which is strictly weaker than the standard boundedness or sub-Gaussian assumption. This heavy-tailed assumption is widely used in heavy-tailed bandits \citep{bubeck2013bandits,shao2018almost} and heavy-tailed RL \citep{zhuang2021no,huang2023tackling}. Notably, the second moment of $\varepsilon_{s, a, s'}$ may not exist when $\nu < 1$, leading to the square loss inapplicable in the heavy-tailed setting. In contrast, by adopting the Huber regression, we obtain the following theoretical guarantee: 
\begin{lemma} \label{lemma:huber}
    Under Assumption~\ref{assumption:heavy-tail}, if we solve the Huber regression problem in \eqref{eq:huber:regression} and obtain the estimated $Q$-function $\hat{Q}$, then we have 
    \begin{align*}
        \|\hat{Q}(s, a) - r(s, a) - \gamma \E_{s' \sim P(\cdot \mid s, a)} [V_{\psi}(s')] \|_{\infty} \lesssim N^{-\min\{\nu/(1+\nu), 1/2\}},
    \end{align*}
    where $\|\cdot\|_{\infty}$ means taking supremum over $(s, a) \in S \times A$ and $N$ is the size of dataset $\mathcal{D}$. 
    Furthermore, if $N \rightarrow \infty$, we have $\hat{Q}(s, a) \rightarrow r(s, a) + \gamma \E_{s' \sim P(\cdot \mid s, a)} [V_{\psi}(s')]$ for any $(s, a) \in S \times A$.
\end{lemma}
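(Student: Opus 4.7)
The plan is to interpret Lemma~\ref{lemma:huber} as a heavy-tailed regression statement. First, I would set $f^*(s, a) := r(s, a) + \gamma \E_{s' \sim P(\cdot \mid s, a)}[V_\psi(s')]$, so that by Assumption~\ref{assumption:heavy-tail} the observed target $r + \gamma V_\psi(s')$ equals $f^*(s, a) + \varepsilon_{s, a, s'}$ with centered noise satisfying $\E|\varepsilon|^{1+\nu} < \infty$. The Huber regression in \eqref{eq:huber:regression} is then an M-estimator for $f^*$ driven by the score $\psi_\delta(x) := \frac{d}{dx} l_H^\delta(x)$, which equals $x/\delta$ on $[-\delta, \delta]$ and $\mathrm{sign}(x)$ otherwise, so $\|\psi_\delta\|_\infty \le 1$. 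The conceptual point is that, unlike the squared-loss score, $\psi_\delta$ truncates the influence of outliers, and hence its empirical mean concentrates well even when $\varepsilon$ has no second moment.

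Next, I would carry out the standard M-estimation expansion pointwise in $(s, a)$. The first-order condition for $\hat Q(s, a)$ together with a Taylor expansion of the convex population Huber loss $L$ gives $\hat Q(s, a) - f^*(s, a) \approx -[L''(f^*(s, a))]^{-1} \hat L'(f^*(s, a))$, and I would bound three quantities separately: (i) the curvature $L''(f^*) = \mathbb{P}(|\varepsilon| \le \delta)/\delta$, which by Markov's inequality applied to $|\varepsilon|^{1+\nu}$ is at least $1/(2\delta)$ for $\delta$ sufficiently large; (ii) the population bias $|\E \psi_\delta(\varepsilon)|$, which, using $\E \varepsilon = 0$ and the identity $\psi_\delta(x) - x/\delta = (\mathrm{sign}(x) - x/\delta)\mathbf{1}\{|x| > \delta\}$, is controlled by $\E[|\varepsilon|/\delta \cdot \mathbf{1}\{|\varepsilon| > \delta\}] \le \E|\varepsilon|^{1+\nu}/\delta^{1+\nu}$; and (iii) the empirical fluctuation $|\hat L'(f^*) - L'(f^*)|$, whose variance satisfies $\E \psi_\delta(\varepsilon)^2 \lesssim \delta^{-(1+\nu)}$ by the decomposition $\varepsilon^2 = |\varepsilon|^{1+\nu}\,|\varepsilon|^{1-\nu} \le \delta^{1-\nu}|\varepsilon|^{1+\nu}$ on $\{|\varepsilon| \le \delta\}$ combined with a tail bound. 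Plugging these into the expansion yields $|\hat Q(s, a) - f^*(s, a)| \lesssim \delta^{-\nu} + \sqrt{\delta^{1-\nu}/N}$ after multiplying by the inverse curvature $O(\delta)$.

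Finally, I would optimize over $\delta$. For $\nu \in (0, 1]$, balancing $\delta^{-\nu} \asymp \sqrt{\delta^{1-\nu}/N}$ gives $\delta \asymp N^{1/(1+\nu)}$ and the rate $N^{-\nu/(1+\nu)}$. For $\nu \ge 1$ the noise has finite second moment, so $\E \psi_\delta(\varepsilon)^2$ stays bounded by a constant, the bounded-score CLT kicks in, and the classical $N^{-1/2}$ rate is recovered; combining the two regimes yields the claimed $N^{-\min\{\nu/(1+\nu),\,1/2\}}$. The consistency clause then follows immediately by sending $N \to \infty$ with $\delta$ growing at the matched rate.

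The main obstacle, in my view, is promoting the above pointwise analysis to the uniform $\|\cdot\|_\infty$-guarantee stated in the lemma. The per-pair mean-estimation argument presumes many samples per $(s, a)$, which is natural in the tabular/finite setting where a union bound costs only a logarithmic factor absorbed into the $\lesssim$. For a continuous state-action space, one instead needs a realizability or capacity condition on the parametric $Q$-class and a uniform concentration inequality for bounded-score M-estimators, e.g., a chaining/empirical-process argument in the style of Sun--Zhou--Fan's adaptive Huber regression. I would adopt the tabular reading consistent with the paper's theoretical section and flag the extension to general function approximation as the standard-but-technical step left to a longer version.
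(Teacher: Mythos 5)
Your proposal is correct, but it takes a different route from the paper. The paper's proof is essentially a two-line reduction: it encodes each $(s_i,a_i)$ as a one-hot vector in $\mathbb{R}^{|S\times A|}$, sets $y_i = r_i + \gamma V_\psi(s'_i)$, identifies the target function with the coefficient vector $\beta^*$, and then invokes Theorem~1 of \citet{sun2020adaptive} as a black box to get $\|\hat\beta-\beta^*\|_2 \lesssim N^{-\min\{\nu/(1+\nu),1/2\}}$, finishing with $\|\cdot\|_\infty \le \|\cdot\|_2$. You instead re-derive the content of that cited theorem: the M-estimation expansion with curvature $\mathbb{P}(|\varepsilon|\le\delta)/\delta \ge 1/(2\delta)$, the truncation bias $|\E\psi_\delta(\varepsilon)| \le \E|\varepsilon|^{1+\nu}/\delta^{1+\nu}$, the variance bound $\E\psi_\delta(\varepsilon)^2 \lesssim \delta^{-(1+\nu)}$ via the split $\varepsilon^2 \le \delta^{1-\nu}|\varepsilon|^{1+\nu}$ on the truncation event, and the optimization $\delta \asymp N^{1/(1+\nu)}$ balancing $\delta^{-\nu}$ against $\delta\sqrt{\delta^{-(1+\nu)}/N}$ — all of which check out and reproduce the claimed rate, including the $N^{-1/2}$ regime for $\nu\ge 1$. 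What your approach buys is self-containedness and transparency about where the adaptive choice of $\delta$ comes from; it is also more candid than the paper about the fact that the guarantee is really a per-$(s,a)$ mean-estimation statement promoted to a sup-norm bound by a union bound in the tabular setting (the paper's one-hot reduction quietly sweeps the dependence on $|S\times A|$ and the design conditions of the cited theorem into the $\lesssim$). What the paper's approach buys is brevity and an explicit pointer to a rigorous reference covering the empirical-process details you flag as the remaining technical step. Neither route has a gap that the other avoids; if you were to write yours out in full, the localization argument justifying the first-order Taylor expansion of the convex population loss is the only step that still needs to be made rigorous, and it is standard for one-dimensional convex M-estimation.
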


The detailed proof is deferred to Appendix \ref{appendix:pf:huber}.
Thus,  by utilizing the Huber loss, we can recover the nearly unbiased value function even in the presence of a heavy-tailed target distribution.

\begin{figure}[t]
\centering
\subfigure[]{
    \begin{minipage}[t]{0.245\linewidth}
        \centering
        \includegraphics[width=1\linewidth,height=0.8\linewidth]{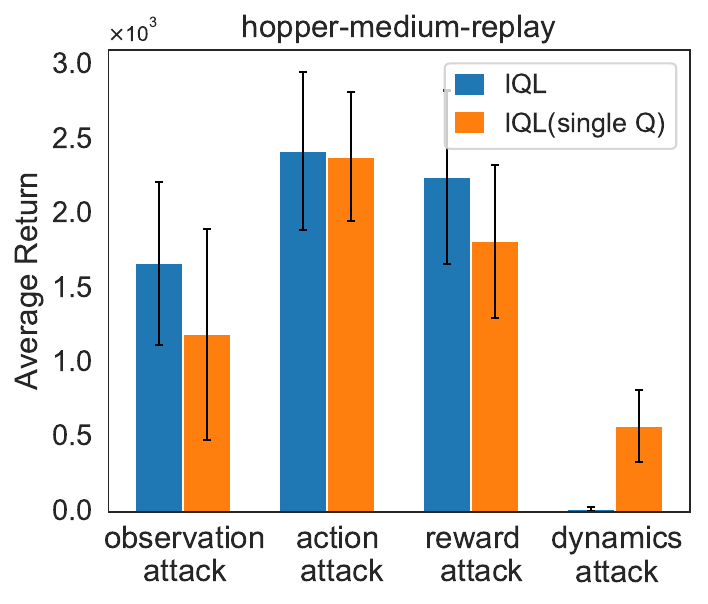}\\
    \end{minipage}%
}%
\subfigure[]{
    \begin{minipage}[t]{0.245\linewidth}
        \centering
        \includegraphics[width=1\linewidth,height=0.8\linewidth]{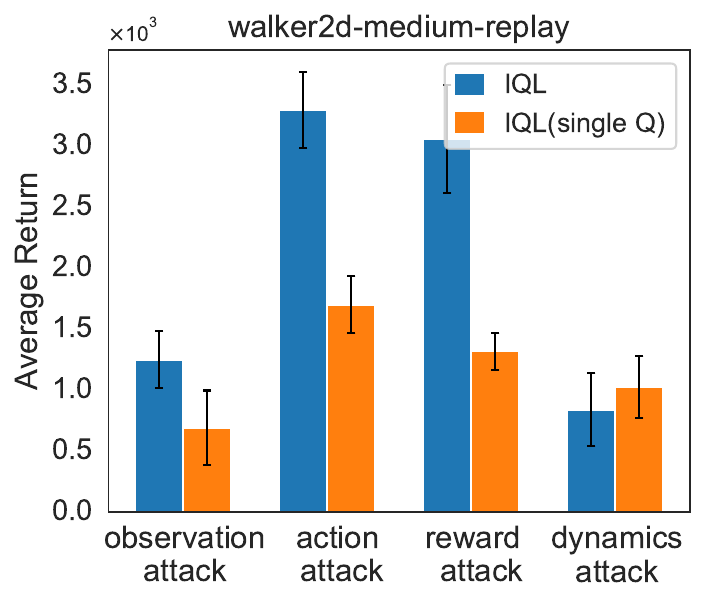}\\
    \end{minipage}%
}%
\subfigure[]{
    \begin{minipage}[t]{0.245\linewidth}
        \centering
        \includegraphics[width=1\linewidth,height=0.755\linewidth]{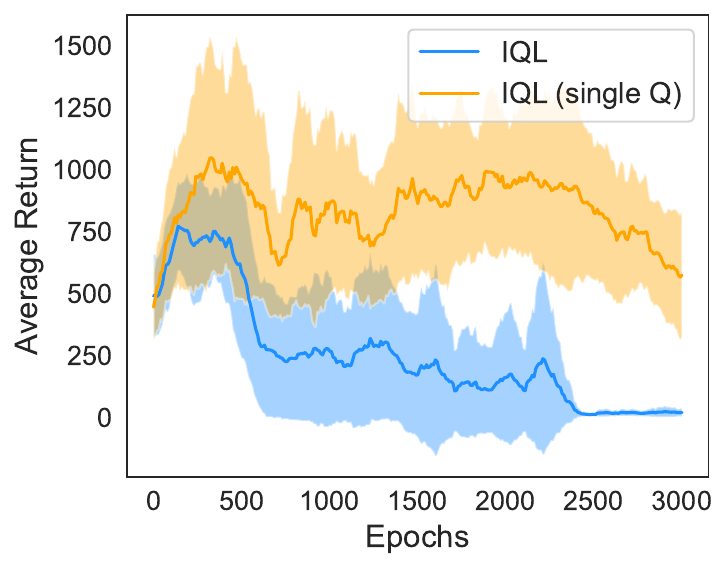}\\
    \end{minipage}%
}%
\subfigure[]{
    \begin{minipage}[t]{0.245\linewidth}
        \centering
        \includegraphics[width=1\linewidth,height=0.755\linewidth]{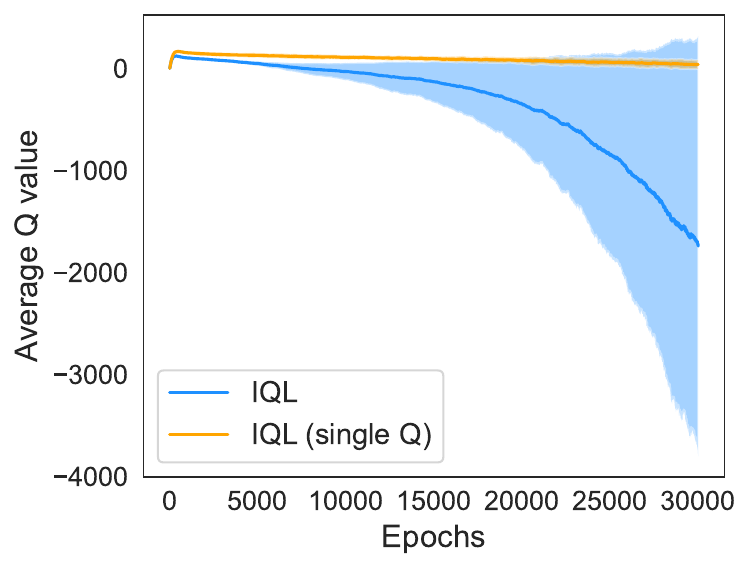}\\
    \end{minipage}%
}%
\centering
\vspace{-7pt}
\caption{Comparison of IQL and IQL with single Q function in the (a) Hopper and (b) Walker tasks. (c) Average returns and (d) average Q values in the Hopper task under dynamics attack. IQL can be unstable due to the minimum operator.}
\label{fig:minimum_operator}
\vspace{-5pt}
\end{figure}

\subsection{Penalizing Corrupted Data via In-dataset Uncertainty}\label{sec:quantile_Q_estimator}
We have noted a key technique within IQL's implementation - the Clipped Double Q-learning (CDQ) \citep{fujimoto2018addressing}. CDQ learns two Q functions and utilizes the minimum one for both value and policy updating. To verify its effectiveness, we set another variant of IQL without the CDQ trick, namely ``IQL(single Q)", which only learns a single Q function for IQL updates. As shown in Figure \ref{fig:minimum_operator} (a) and (b), ``IQL(single Q)" notably underperforms IQL on most corruption settings except the dynamics corruption. The intrigue lies within the minimum operator's correlation to the lower confidence bound (LCB) \citep{an2021uncertainty}. Essentially, it penalizes corrupted data, as corrupted data typically exhibits greater uncertainty, consequently reducing the influence of these data points on the policy.
Further results in Figure \ref{fig:minimum_operator} (c) and (d) explain why CDQ decreases the performance under dynamics corruption. Since the dynamics corruption introduces heavy-tailedness in the Q target as explained in Section \ref{sec:huber_loss}, the minimum operator would persistently reduce the value estimation, leading to a negative value exploding. Instead, ``IQL(single Q)" learns an average Q value, as opposed to the minimum operator, resulting in better performance under dynamics corruption.

The above observation inspires us to use the CDQ trick safely by extending it to quantile Q estimators using an ensemble of Q functions $\{Q_{\theta_i}\}_{i=1}^K$. The $\alpha$-quantile value $Q_{\alpha}$ is defined as $\Pr[Q < Q_{\alpha}] = \alpha$, where $\alpha \in [0,1]$. A detailed calculation of the quantiles is deferred to the Appendix \ref{ap:quantile_calculation}. The quantiles of a normal distribution are depicted in Figure \ref{fig:penalty} (a). When $\alpha$ corresponds to an integer index and the Q value follows a Gaussian distribution with mean $m(s,a)$ and variance $\sigma^2(s,a)$, it is well-known \citep{blom1958statistical,royston1982algorithm} that the $\alpha$-quantile is correlated to the LCB estimation of a variable:
$\E \left[ Q_{\alpha}(s,a) \right] \approx m(s,a) - \Phi^{-1} (\frac{ (1-\alpha) \times (K - 1)+1 - 0.375}{K - 2\times 0.375 + 1}) \cdot \sigma(s,a) .$

Hence, we can control the degree of penalty by controlling $\alpha$ and the ensemble size $K$. In Figure \ref{fig:penalty} (b) and (c), we illustrate the penalty, defined as $m(s,a) - Q_{\alpha}(s,a)$, for both clean and corrupted parts of the dataset. It becomes evident that (1) the attacked portion of data can incur a heavier penalty compared to the clean part, primarily due to the larger uncertainty and a higher estimation variance $\sigma^2(s,a)$ associated with these data points, and (2) the degree of penalty can be adjusted flexibly to be larger with a larger $K$ or a smaller $\alpha$. Notably, when $K=2$ and $\alpha=0$, the quantile Q estimator recovers the CDQ trick in IQL. We will discuss the impact of the quantile Q estimator on the equations of RIQL in Section \ref{sec:overall_algorithm}.

\begin{figure}
    \centering
    \subfigure[]{
    \begin{minipage}[t]{0.3\linewidth}
        \centering
    \includegraphics[width=0.95\linewidth,height=0.73\linewidth]{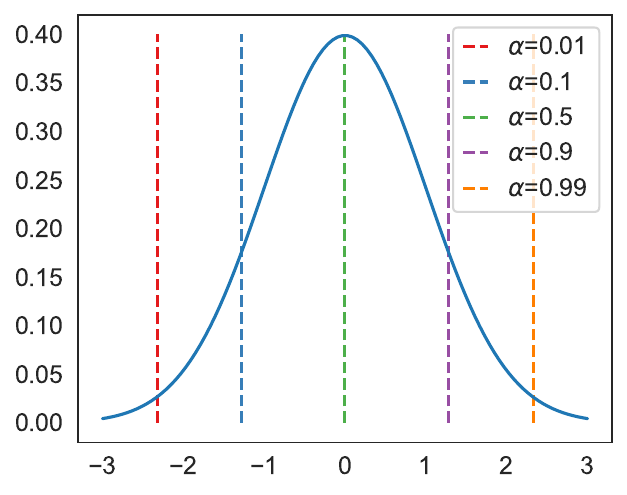}\\
    \end{minipage}%
}%
    \subfigure[]{
    \begin{minipage}[t]{0.31\linewidth}
        \centering
    \includegraphics[width=1.0\linewidth,height=0.75\linewidth]{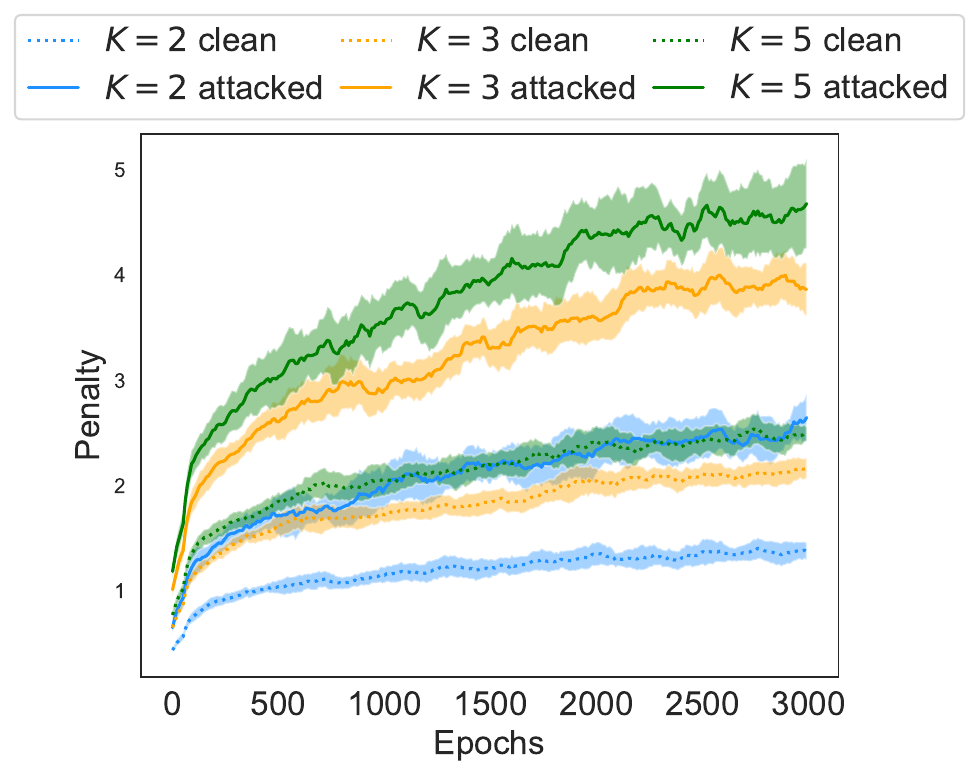}\\
    \end{minipage}%
}%
\subfigure[]{
    \begin{minipage}[t]{0.31\linewidth}
        \centering
    \includegraphics[width=0.85\linewidth,height=0.75\linewidth]{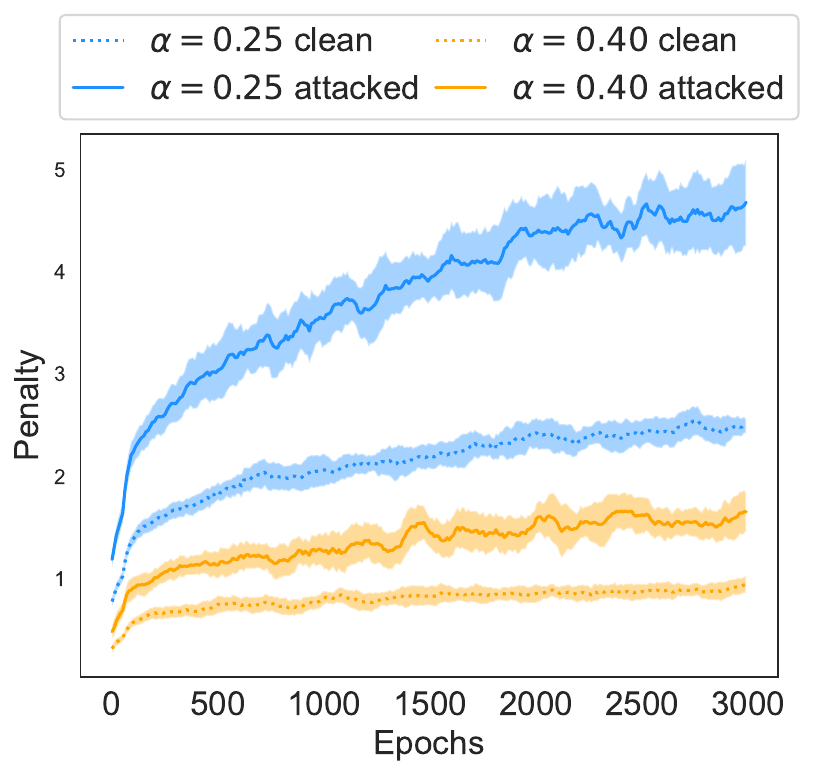}\\
    \end{minipage}%
}%
\vspace{-7pt}
    \caption{(a) Quantiles of a normal distribution. In-dataset penalty for attacked data and clean data across (b) different number of $K$ and (c) different quantile value $\alpha$. }
    \label{fig:penalty}
\vspace{-5pt}
\end{figure}

\subsection{RIQL Algorithm}\label{sec:overall_algorithm}
We summarize the overall RIQL algorithm, whose pseudocode is provided in Algorithm \ref{alg:RIQL} (Appendix~\ref{ap:psedocode}). First, RIQL normalizes the states and next states according to Section~\ref{sec:obs_normalization}. 
The ensemble Q value functions $\{Q_{\theta_i}(s,a)\}_{i = 1}^K$ are learned by solving the Huber regression
\begin{equation}
\label{eq:RIQL_huber_Q}
\mathcal{L}_{Q}(\theta_i)  = \E_{(s,a, r, s')\sim \mathcal{D}} [ l^\delta
_H (r + \gamma V_{\psi}(s') - Q_{\theta_i}(s,a) )], 
\end{equation}
where $l^\delta_H(\cdot)$ is the Huber loss defined in \eqref{eq:huber:regression}.
The value function $V_{\psi}(s)$ is learned to minimize
\begin{equation}
\label{eq:quantile_valuefunction}
 \mathcal{L}_{V}(\psi)  = \E_{(s,a)\sim \mathcal{D}} \left[ \mathcal{L}_2^{\tau} (Q_{\alpha}(s,a)-V_{\psi}(s))\right], \quad \mathcal{L}_2^\tau (x) = |\tau - \mathbf{1}(x < 0)|x^2.
\end{equation}
Here $Q_{\alpha}(s,a)$ is defined by the $\alpha$-quantile value among $\{Q_{\theta_i}(s,a)\}_{i=1}^K$. 
The policy is learned to maximize the $\alpha$-quantile advantage-weighted imitation learning objective:
\begin{equation}
\label{eq:quantile_policy}
\mathcal{L}_{\pi}(\phi) =  \E_{(s,a)\sim \mathcal{D}} \left[ \exp(\beta A_{\alpha}(s,a)) \log \pi_{\phi}(a|s) \right],
\end{equation}
where the advantage $A_{\alpha}(s,a) = Q_{\alpha}(s,a) - V_{\psi}(s)$. Intuitively, the quantile Q estimator serves a dual purpose: (1) underestimating the Q values with high uncertainty in \eqref{eq:quantile_valuefunction}, and (2) downweighting these samples to reduce their impact on the policy in \eqref{eq:quantile_policy}. An essential difference between our quantile Q estimator and those in prior works \citep{an2021uncertainty,bai2022pessimistic,ghasemipour2022so} is that we penalize \textbf{in-dataset data with high uncertainty} due to the presence of corrupted data, whereas conventional offline RL leverages uncertainty to penalize \textbf{out-of-distribution actions}.

\section{Experiments}
In this section, we empirically evaluate the performance of RIQL across diverse data corruption scenarios, including both random and adversarial attacks on states, actions, rewards, and next-states.

\textbf{Experimental Setup.}
Two parameters, corruption rate $c \in[0,1]$ and corruption scale $\epsilon$ are used to control the cumulative corruption level. \textbf{Random corruption} is applied by adding random noise to the attacked elements of a $c$ portion of the datasets. 
In contrast, \textbf{adversarial corruption} is introduced through Projected Gradient Descent attack \citep{madry2017towards,zhang2020robust} with the pretrained value functions. A unique case is the adversarial reward corruption, which directly multiplies $-\epsilon$ to the original rewards instead of performing gradient optimization. We utilize the ``medium-replay-v2'' dataset from \citep{fu2020d4rl}, which better represents real scenarios because it is collected during the training of a SAC agent. The corruption rate is set to $c=0.3$ and the corruption scale is $\epsilon=1$ for the main results. More details about all forms of data corruption used in our experiments are provided in Appendix \ref{ap:corruption_details}. We compare RIQL with SOTA offline RL algorithms, including ensemble-based algorithms such as EDAC \citep{an2021uncertainty} and MSG \citep{ghasemipour2022so}, as well as ensemble-free baselines like IQL \citep{kostrikov2021offline}, CQL \citep{kumar2020conservative}, and SQL \citep{xu2023offline}. To evaluate the performance, we test each trained agent in a clean environment and average their performance over four random seeds. Implementation details and additional empirical results are provided in Appendix \ref{ap:implementation_details} and Appendix \ref{ap:additional_exp}, respectively.

\begin{table}[t]
\small
  \caption{Average normalized performance under random data corruption. }
  \label{tab:random_corruption_mr}
  \centering
  \begin{adjustbox}{width=1\columnwidth}
  \begin{tabular}{l|l|c|c|c|c|c|c|c}
    \toprule
   Environment & Attack Element  & BC & EDAC & MSG & CQL &  SQL & IQL & RIQL (ours) \\
    \midrule
    \multirow{4}{*}{Halfcheetah}  &  observation & \textbf{33.4$\pm$1.8} & 2.1$\pm$0.5 & -0.2$\pm$2.2 & 9.0$\pm$7.5 & 4.1$\pm$1.4 & 21.4$\pm$1.9 & 27.3$\pm$2.4 \\
    &  action    & 36.2$\pm$0.3 & 47.4$\pm$1.3 & \textbf{52.0$\pm$0.9} & 19.9$\pm$21.3 & 42.9$\pm$0.4 & 42.2$\pm$1.9 & 42.9$\pm$0.6\\
    &  reward    & 35.8$\pm$0.9 & 38.6$\pm$0.3 & 17.5$\pm$16.4         & 32.6$\pm$19.6 & 41.7$\pm$0.8 & 42.3$\pm$0.4 & \textbf{43.6$\pm$0.6} \\
    &  dynamics  & 35.8$\pm$0.9 & 1.5$\pm$0.2  & 1.7$\pm$0.4           & 29.2$\pm$4.0  & 35.5$\pm$0.4 & 36.7$\pm$1.8 & \textbf{43.1$\pm$0.2} \\
    \hline
    \multirow{4}{*}{Walker2d}    &  observation & 9.6$\pm$3.9 & -0.2$\pm$0.3 & -0.4$\pm$0.1 & 19.4$\pm$1.6 & 0.6$\pm$1.0 & 27.2$\pm$5.1 &  \textbf{28.4$\pm$7.7} \\
    &  action   & 18.1$\pm$2.1 &  83.2$\pm$1.9 & 25.3$\pm$10.6  & 62.7$\pm$7.2 & 76.0$\pm$4.2  & 71.3$\pm$7.8   & \textbf{84.6$\pm$3.3}\\
    &  reward   & 16.0$\pm$7.4 &  4.3$\pm$3.6            & 18.4$\pm$9.5 & 69.4$\pm$7.4 & 33.8$\pm$13.8 & 65.3$\pm$8.4   & \textbf{83.2$\pm$2.6} \\
    &  dynamics & 16.0$\pm$7.4 &  -0.1$\pm$0.0           & 7.4$\pm$3.7   & -0.2$\pm$0.1 & 15.3$\pm$2.2  & 17.7$\pm$7.3   &  \textbf{78.2$\pm$1.8} \\
    \hline
    \multirow{4}{*}{Hopper}    &  observation  & 21.5$\pm$2.9  & 1.0$\pm$0.5  & 6.9$\pm$5.0  & 42.8$\pm$7.0  & 5.2$\pm$1.9 & 52.0$\pm$16.6 &  \textbf{62.4$\pm$1.8}\\
    &  action   & 22.8$\pm$7.0 & \textbf{100.8$\pm$0.5} & 37.6$\pm$6.5  & 69.8$\pm$4.5 & 73.4$\pm$7.3 & 76.3$\pm$15.4 &  90.6$\pm$5.6 \\
    &  reward   & 19.5$\pm$3.4 & 2.6$\pm$0.7            & 24.9$\pm$4.3 & 70.8$\pm$8.9 & 52.3$\pm$1.7 & 69.7$\pm$18.8 &  \textbf{84.8$\pm$13.1}\\
    &  dynamics & 19.5$\pm$3.4 & 0.8$\pm$0.0            & 12.4$\pm$4.9  & 0.8$\pm$0.0  & 24.3$\pm$5.6 & 1.3$\pm$0.5   &  \textbf{51.5$\pm$8.1} \\
    \hline
    \multicolumn{2}{l|}{Average score $\uparrow$}   & 23.7 & 23.5 & 17.0 & 35.5 & 33.8 & 43.6 & \textbf{60.0} \\
     \multicolumn{2}{l|}{Average degradation percentage $\downarrow$}   & 0.4\%  & 68.5\% & 61.5\% & 42.3\% & 45.0\% & 31.2\% & 17.0\% \\
     \bottomrule
  \end{tabular}
  \end{adjustbox}
\end{table}

\subsection{Evaluation under Random and Adversarial Corruption}
\label{sec:random_corruption}
\paragraph{Random Corruption} We begin by evaluating various algorithms under random data corruption. Table \ref{tab:random_corruption_mr} demonstrates the average normalized performance (calculated as $100 \times \frac{\text{score} - \text{random score}}{\text{expert score} - \text{random score}}$) of different algorithms. 
Overall, RIQL demonstrates superior performance compared to other baselines, achieving an average score improvement of $37.6\%$ over IQL. In 9 out of 12 settings, RIQL achieves the highest results, particularly excelling in the dynamics corruption settings, where it outperforms other methods by a large margin. Among the baselines, it is noteworthy that BC experiences the least average degradation percentage compared to its score on the clean dataset. This can be attributed to its supervised learning scheme, which avoids incorporating future information but also results in a relatively low score. Besides, offline RL methods like EDAC and MSG exhibit performance degradation exceeding $60\%$, highlighting their unreliability under data corruption.

\paragraph{Adversarial Corruption} \label{sec:adversarial_corruption}
To further investigate the robustness of RIQL under a more challenging corruption setting, we consider an adversarial corruption setting. As shown in Table \ref{tab:adversarial_corruption_mr}, RIQL consistently surpasses other baselines by a significant margin. Notably, RIQL improves the average score by almost $70\%$ over IQL. When compared to the results of random corruption, every algorithm experiences a larger performance drop. All baselines, except for BC, undergo a performance drop of more than or close to half. In contrast, RIQL's performance only diminishes by $22\%$ compared to its score on the clean data. This confirms that adversarial attacks present a more challenging setting and further highlights the effectiveness of RIQL against various data corruption scenarios.

\begin{table}[h]
\small
  \caption{Average normalized performance under adversarial data corruption.}
  \label{tab:adversarial_corruption_mr}
  \centering
  \begin{adjustbox}{width=1\columnwidth}
  \begin{tabular}{l|l|c|c|c|c|c|c|c}
    \toprule
   Environment & Attack Element  &  BC & EDAC & MSG & CQL & SQL & IQL & RIQL (ours) \\
    \midrule
   \multirow{4}{*}{Halfcheetah}  &  observation & 34.5$\pm$1.5 & 1.1$\pm$0.3 & 1.1$\pm$0.2 & 5.0$\pm$11.6 & 8.3$\pm$0.9 & 32.6$\pm$2.7 & \textbf{35.7$\pm$4.2} \\
    &  action   & 14.0$\pm$1.1 & 32.7$\pm$0.7 & \textbf{37.3$\pm$0.7} & -2.3$\pm$1.2 & 32.7$\pm$1.0 & 27.5$\pm$0.3 & 31.7$\pm$1.7 \\
    &  reward   & 35.8$\pm$0.9 & 40.3$\pm$0.5 & \textbf{47.7$\pm$0.4} & -1.7$\pm$0.3 & 42.9$\pm$0.1 & 42.6$\pm$0.4 & 44.1$\pm$0.8 \\
    &  dynamics & 35.8$\pm$0.9 & -1.3$\pm$0.1 & -1.5$\pm$0.0 & -1.6$\pm$0.0 & 10.4$\pm$2.6 & 26.7$\pm$0.7 & \textbf{35.8$\pm$2.1} \\
   \hline
 \multirow{4}{*}{Walker2d}    & observation & 12.7$\pm$5.9 & -0.0$\pm$0.1 & 2.9$\pm$2.7 & 61.8$\pm$7.4 & 1.8$\pm$1.9 & 37.7$\pm$13.0 & \textbf{70.0$\pm$5.3} \\
    &  action   & 5.4$\pm$0.4 & 41.9$\pm$24.0 & 5.4$\pm$0.9 & 27.0$\pm$7.5 & 31.3$\pm$8.8 & 27.5$\pm$0.6 & \textbf{66.1$\pm$4.6} \\
    &  reward   & 16.0$\pm$7.4 & 57.3$\pm$33.2 & 9.6$\pm$4.9 & 67.0$\pm$6.1 & 78.1$\pm$2.0 & 73.5$\pm$4.85 & \textbf{85.0$\pm$1.5} \\
    &  dynamics & 16.0$\pm$7.4  & 4.3$\pm$0.9 & 0.1$\pm$0.2 & 3.9$\pm$1.4 &2.7$\pm$1.9  & -0.1$\pm$0.1 & \textbf{60.6$\pm$21.8} \\
    \hline
     \multirow{4}{*}{Hopper}    &  observation & 21.6$\pm$7.1 & 36.2$\pm$16.2 & 16.0$\pm$2.8 & \textbf{78.0$\pm$6.5} & 8.2$\pm$4.7  & 32.8$\pm$6.4 &  50.8$\pm$7.6\\
    &  action   & 15.5$\pm$2.2 & 25.7$\pm$3.8 & 23.0$\pm$2.1 & 32.2$\pm$7.6 & 30.0$\pm$0.4 & 37.9$\pm$4.8 & \textbf{63.6$\pm$7.3} \\
    &  reward   & 19.5$\pm$3.4  & 21.2$\pm$1.9 & 22.6$\pm$2.8 & 49.6$\pm$12.3 & 57.9$\pm$4.8 & 57.3$\pm$9.7 & \textbf{65.8$\pm$9.8} \\
    &  dynamics & 19.5$\pm$3.4 & 0.6$\pm$0.0 & 0.6$\pm$0.0 & 0.6$\pm$0.0 & 18.9$\pm$12.6 & 1.3$\pm$1.1 &  \textbf{65.7$\pm$21.1} \\
    \hline
    \multicolumn{2}{l|}{Average score $\uparrow$}   & 20.5  &  21.7 & 13.7 & 26.6 & 25.8 & 33.1 & \textbf{56.2} \\
    \multicolumn{2}{l|}{Average degradation percentage $\downarrow$}   &  13.4\% & 71.2\% &69.9\% & 66.8\% & 57.5\% & 46.0\% &  22.0\% \\
     \bottomrule
  \end{tabular}
  \end{adjustbox}
\end{table}

\subsection{Evaluation with Varying Corruption Rates} \label{sec:varying_corruption_rate}
In the above experiments, we employ a constant corruption rate of 0.3. In this part, we assess RIQL and other baselines under varying corruption rates ranging from 0.0 to 0.5. The results are depicted in Figure \ref{fig:varying_corrupt_rate}. RIQL exhibits the most robust performance compared to other baselines, particularly under reward and dynamics attacks, due to its effective management of heavy-tailed targets using the Huber loss. Notably, EDAC and MSG are highly sensitive to a minimal corruption rate of 0.1 under observation, reward, and dynamics attacks. Furthermore, among the attacks on the four elements, the observation attack presents the greatest challenge for RIQL. This can be attributed to the additional distribution shift introduced by the observation corruption, highlighting the need for further enhancements.

\begin{figure}[t]
    \centering
    \includegraphics[width=0.97\linewidth]{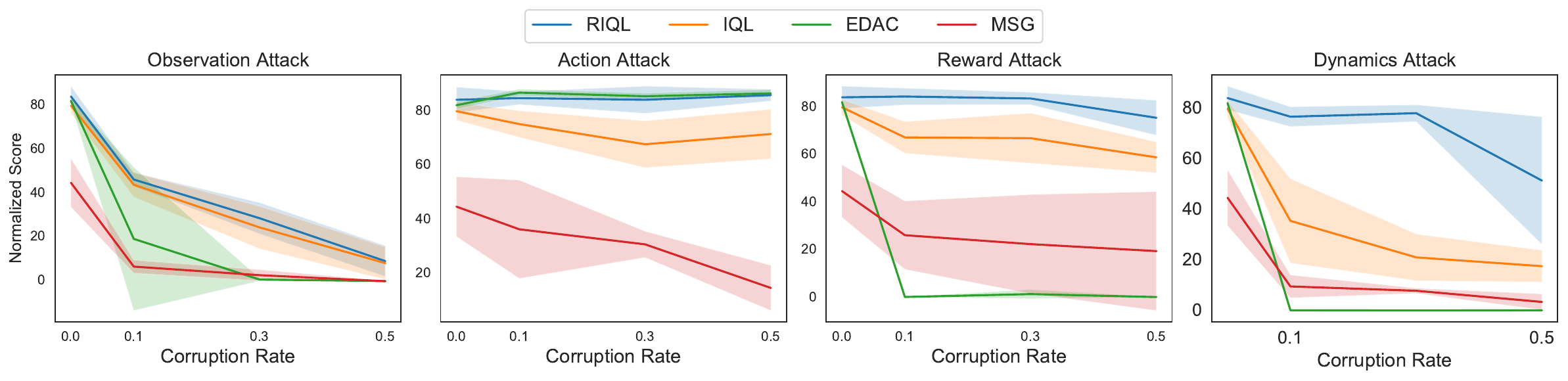}
    \caption{Varying the corruption rate of random attacks on the Walker2d task.}
    \label{fig:varying_corrupt_rate}
\end{figure}

\begin{wrapfigure}{rh}{0.32\linewidth}
\vspace{-10pt}
\includegraphics[width=1\linewidth]{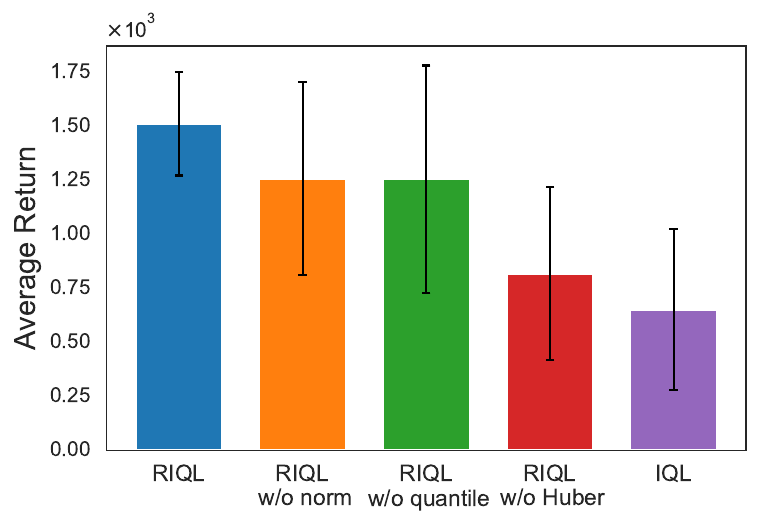}
\small
\caption{Ablation of RIQL under mixed attack.}
\label{fig:ablation}
\end{wrapfigure}

\subsection{Ablations}\label{sec:ablation}
We assess the individual contributions of RIQL's components under the mixed attack scenario, where all four elements are corrupted independently, each with a corruption rate of 0.2 and a corruption range of 1.0. We consider the following RIQL variants: RIQL without observation normalization (\textbf{RIQL w/o norm}), RIQL without the quantile Q estimator (\textbf{RIQL w/o quantile}), and RIQL without the Huber loss (\textbf{RIQL w/o Huber}). The average results for the Walker and Hopper tasks on the ``medium-replay-v2" dataset are presented in Figure \ref{fig:ablation}. These results indicate that each component contributes to RIQL's performance, with the Huber loss standing out as the most impactful factor due to its effectiveness in handling the dynamics corruption. The best performance is achieved when combining all three components into RIQL.

\section{Conclusion}
In this paper, we investigate the robustness of offline RL algorithms in the presence of diverse data corruption, including states, actions, rewards, and dynamics. Our empirical observations reveal that current offline RL algorithms are notably vulnerable to different forms of data corruption, particularly dynamics corruption, posing challenges for their application in real-world scenarios. To tackle this issue, we introduce Robust IQL (RIQL), an offline RL algorithm that incorporates three simple yet impactful enhancements: observation normalization, Huber loss, and quantile Q estimators. Through empirical evaluations conducted under both random and adversarial attacks, we demonstrate RIQL's exceptional robustness against various types of data corruption. We hope that our work will inspire further research aimed at addressing data corruption in more realistic scenarios.

\subsubsection*{Acknowledgments}
The work is done when Rui Yang and Tong Zhang were at HKUST. The authors would like to thank Chenlu Ye for insightful discussions and the anonymous reviewers for their valuable feedback.

\bibliography{main}
\bibliographystyle{main}

\newpage 
\appendix

\section{Related Works} \label{ap:related:works}

\textbf{Offline RL.} 
Ensuring that the policy distribution remains close to the data distribution is crucial for offline RL, as distributional shift can lead to unreliable estimation \citep{levine2020offline}. Offline RL algorithms typically fall into two categories to address this: those that enforce policy constraints on the learned policy \citep{wang2018exponentially,fujimoto2019off,peng2019advantage,li2020focal,fujimoto2021minimalist,kostrikov2021offline,chen2021decision,emmons2021rvs,sun2024accountability,xu2023offline}, and those that learn pessimistic value functions to penalize OOD actions \citep{kumar2020conservative,yu2020mopo,an2021uncertainty,bai2022pessimistic,yang2022rorl,ghasemipour2022so,sun2022exploit,nikulin2023anti}. Among these algorithms, ensemble-based approaches \citep{an2021uncertainty,ghasemipour2022so} demonstrate superior performance by estimating the lower-confidence bound (LCB) of Q values for OOD actions based on uncertainty estimation \citep{loquercio2020general,sun2022daux}. Additionally, imitation-based \citep{emmons2021rvs} or weighted imitation-based algorithms \citep{wang2018exponentially, nair2020awac, kostrikov2021offline} generally enjoy better simplicity and stability compared to other pessimism-based approaches. Studies in offline goal-conditioned RL \citep{yang2023essential,yang2022rethinking} further demonstrate that weighted imitation learning offers improved generalization compared to pessimism-based offline RL methods. In addition to traditional offline RL methods, recent studies employ advanced techniques such as transformer \citep{chen2021decision,chebotar2023q,yamagata2023q} and diffusion models \citep{janner2022planning,hansen2023idql,wang2022diffusion} for sequence modeling or function class enhancement, thereby increasing the potential of offline RL to tackle more challenging tasks.

\textbf{Offline RL Theory.} There is a large body of literature \citep{jin2021pessimism,rashidinejad2021bridging,zanette2021provable,xie2021bellman,xie2021policy,uehara2021pessimistic,shi2022pessimistic,zhong2022pessimistic,cui2022offline,xiong2022nearly,li2022settling,cheng2022adversarially} dedicating to the development of pessimism-based algorithms for offline RL. These algorithms can provably efficiently find near-optimal policies with only the partial coverage condition. However, these works do not consider the corrupted data. Moreover, when the cumulative corruption level is sublinear, i.e., $\zeta = o(N)$, our algorithm can handle the corrupted data under similar partial coverage assumptions.

\textbf{Robust RL.} One type of robust RL is the distributionally robust RL, which aims to learn a policy that optimizes the worst-case performance across MDPs within an uncertainty set, typically framed as a Robust MDP problem \citep{nilim2003robustness,iyengar2005robust,ho2018fast,moos2022robust}. \citet{hu2022provable} prove that distributionally robust RL can effectively reduce the sim-to-real gap. Besides, numerous studies in the online setting have explored robustness to perturbations on observations \citep{zhang2020robust,zhang2021robust}, actions \citep{pinto2017robust,tessler2019action}, rewards \citep{wang2020reinforcement,husain2021regularized}, and dynamics \citep{mankowitz2019robust}.  There is also a line of theory works \citep{lykouris2021corruption,wu2021reinforcement,wei2022model,ye2023corruption} studying online corruption-robust RL. 
In the offline setting, a number of works focus on testing-time robustness or distributional robustness in offline RL \citep{zhou2021finite,shi2022distributionally,hu2022provable,yang2022rorl,panaganti2022robust,wen2023towards,blanchet2023double}. Regarding the training-time robustness of offline RL, \citet{li2023survival} investigated reward attacks in offline RL, revealing that certain dataset biases can implicitly enhance offline RL's resilience to reward corruption. \citet{wu2022copa} propose a certification framework designed to ascertain the number of tolerable poisoning trajectories in relation to various certification criteria. From a theoretical perspective, \cite{zhang2022corruption} studied offline RL under contaminated data. One concurrent work \citep{ye2023corruptionrobust} leverages uncertainty weighting to tackle reward and dynamics corruption with theoretical guarantees. Different from prior work, we propose an algorithm that is both provable and practical under diverse data corruption on all elements.

\textbf{Robust Imitation Learning.} Robust imitation learning focuses on imitating the expert policy using corrupted demonstrations \citep{liu2022robust} or a mixture of expert and non-expert demonstrations \citep{wu2019imitation,tangkaratt2020variational,tangkaratt2020robust,sasaki2020behavioral}. These approaches primarily concentrate on noise or attacks on states and actions, without considering the future return. In contrast, robust offline RL faces the intricate challenges associated with corruption in rewards and dynamics.

\textbf{Heavy-tailedness in RL.}
In the realm of RL, \cite{zhuang2021no,huang2023tackling} delved into the issue of heavy-tailed rewards in tabular Markov Decision Processes (MDPs) and function approximation, respectively. There is also a line of works \citep{bubeck2013bandits,shao2018almost,xue2020nearly,zhong2021breaking,huang2022adaptive,kang2023heavy} studying the heavy-tailed bandit, which is a special case of MDPs. Besides,
\cite{garg2021proximal} investigated the heavy-tailed gradients in the training of Proximal Policy Optimization. In contrast, our work addresses the heavy-tailed target distribution that emerges from data corruption.

\textbf{Huber Loss in RL.}
The Huber loss, known for its robustness to outliers, has been widely employed in the Deep Q-Network (DQN) literature, \citep{dabney2018distributional,agarwal2020optimistic,patterson2022robust}. However, \cite{ceron2021revisiting} reevaluated the Huber loss and discovered that it fails to outperform the MSE loss on MinAtar environments. In our study, we leverage the Huber loss to address the heavy-tailedness in Q targets caused by data corruption, and we demonstrate its remarkable effectiveness.

\section{Algorithm Pseudocode}
\label{ap:psedocode}
The pseudocode of RIQL and IQL can be found in Algorithm \ref{alg:RIQL} and Algorithm \ref{alg:IQL}, respectively:

\begin{algorithm}
    \caption{Robust IQL algorithm 
	\label{alg:RIQL}}
	Initialize policy $\pi_{\phi}$ and value function $V_{\psi}$, $Q_{\theta_i}, i\in[1,K]$ \;
    Normalize the observation according to Section \ref{sec:obs_normalization} \;
	\For{training step$=1,2,\ldots, T$}{
	    Sample a mini-batch from the offline dataset: $\{(s, a, r, s')\} \sim D$\;
	    Update value function $V_{\psi}$ to minimize $$\mathcal{L}_{V}(\psi)  = \E_{(s,a)\sim \mathcal{D}} \left[ \mathcal{L}_2^{\tau} (Q_{\alpha}(s,a)-V_{\psi}(s))\right];$$ \\
        Update $\{Q_{\theta_i}\}_{i=1}^K$ independently to minimize $$\mathcal{L}_{Q}(\theta_i)  = \E_{(s,a, r, s')\sim \mathcal{D}} [ l^\delta_H (r + \gamma V_{\psi}(s') - Q_{\theta_i}(s,a) )];$$ \\
	Update policy $\pi_{\phi}$ to maximize $$\mathcal{L}_{\pi}(\phi) =  \E_{(s,a)\sim \mathcal{D}} \left[ \exp(\beta A_{\alpha}(s,a)) \log \pi_{\phi}(a|s) \right]$$
	}
\end{algorithm}

\begin{algorithm}
    \caption{IQL algorithm \citep{kostrikov2021offline}  (for comparison)
	\label{alg:IQL}}
	Initialize policy $\pi_{\phi}$ and value function $V_{\psi}$, $Q_{\theta}$ \;
	\For{training step$=1,2,\ldots, T$}{
	    Sample a mini-batch from the offline dataset: $\{(s, a, r, s')\} \sim D$\;
	    Update value function $V_{\psi}$ to minimize \eqref{eq:IQL_V_loss} \;
        Update $Q_{\theta}$ to minimize \eqref{eq:IQL_Q_loss} \;
	    Update policy $\pi_{\phi}$ to maximize \eqref{eq:expweightedIL}
	}
\end{algorithm}

\section{Theoretical Analysis}

\subsection{Proof of Theorem \ref{thm:iql}} \label{appendix:pf:iql}

Before giving the proof of Theorem~\ref{thm:iql}, we first state the performance difference lemma.

\begin{lemma}[Performance Difference Lemma \citep{kakade2002approximately}] \label{lemma:diff}
    For any $\pi$ and $\pi'$, it holds that
    \$
    V^{\pi} - V^{\pi'} = \frac{1}{1- \gamma} \E_{s \sim d^\pi} \E_{a \sim \pi(\cdot \mid s)} [A^{\pi'}(s, a)].
    \$
\end{lemma}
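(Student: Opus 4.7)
The plan is to prove this identity by the classical telescoping argument of Kakade and Langford. First I would fix an initial state $s_0$ and expand $V^{\pi}(s_0) = \EE_{\tau \sim \pi, s_0}\bigl[\sum_{t=0}^\infty \gamma^t r(s_t,a_t)\bigr]$, where the trajectory $\tau = (s_0,a_0,s_1,a_1,\dots)$ is generated by $a_t \sim \pi(\cdot \mid s_t)$ and $s_{t+1} \sim P(\cdot \mid s_t, a_t)$. The crucial trick is to rewrite $V^{\pi'}(s_0)$ as a telescoping series along the same $\pi$-trajectory: $V^{\pi'}(s_0) = \EE_{\tau \sim \pi, s_0}\bigl[\sum_{t=0}^\infty \gamma^t V^{\pi'}(s_t) - \sum_{t=0}^\infty \gamma^{t+1} V^{\pi'}(s_{t+1})\bigr]$. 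Each individual series is absolutely convergent because $|V^{\pi'}| \le R_{\max}/(1-\gamma)$ and $\gamma < 1$, so Fubini justifies interchanging the sum with the expectation and the telescoping actually collapses to $V^{\pi'}(s_0)$.

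Subtracting then gives $V^{\pi}(s_0) - V^{\pi'}(s_0) = \EE_{\tau \sim \pi, s_0}\bigl[\sum_{t=0}^\infty \gamma^t \bigl(r(s_t,a_t) + \gamma V^{\pi'}(s_{t+1}) - V^{\pi'}(s_t)\bigr)\bigr]$. Next I would condition on $(s_t,a_t)$ and apply the tower property together with the Bellman relation $Q^{\pi'}(s,a) = r(s,a) + \gamma \EE_{s' \sim P(\cdot \mid s,a)}[V^{\pi'}(s')]$, so that the bracketed term at time $t$ reduces to $Q^{\pi'}(s_t,a_t) - V^{\pi'}(s_t) = A^{\pi'}(s_t,a_t)$. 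Hence $V^{\pi}(s_0) - V^{\pi'}(s_0) = \sum_{t=0}^\infty \gamma^t \EE_{s_t, a_t \sim \pi}[A^{\pi'}(s_t,a_t)]$, where the subscript indicates the time-$t$ state-action law under $\pi$ starting from $s_0$.

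Finally, I would convert the time-indexed sum into the stated $d^{\pi}$ form. Writing the (normalized) discounted state occupancy $d^{\pi}(s) = (1-\gamma) \sum_{t=0}^\infty \gamma^t \Pr_{\pi}[s_t = s \mid s_0]$ and using $a \sim \pi(\cdot \mid s)$ gives $\sum_{t=0}^\infty \gamma^t \EE[A^{\pi'}(s_t,a_t)] = (1-\gamma)^{-1} \EE_{s \sim d^\pi}\EE_{a \sim \pi(\cdot \mid s)}[A^{\pi'}(s,a)]$. Taking an outer expectation over the initial distribution $\rho_0$ matches the paper's convention that $V^{\pi}$ denotes an expected return over initial states and completes the identity. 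The main obstacle is purely bookkeeping, namely verifying absolute convergence of the telescoping series so that the sum/expectation interchange is legal; once that is in hand, everything else is unfolding the definitions of $Q^{\pi'}$, $A^{\pi'}$, and $d^{\pi}$.
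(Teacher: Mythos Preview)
Your argument is correct and is precisely the classical telescoping proof of Kakade and Langford; the paper does not give its own proof but simply cites \citet{kakade2002approximately}, so there is nothing further to compare.
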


\begin{proof}
    See \citet{kakade2002approximately} for a detailed proof.
\end{proof}

\begin{proof}[Proof of Theorem \ref{thm:iql}]
     First, we have
     \$
      V^{\pi_{\mathrm{IQL}}} - V^{\tilde{\pi}_{\mathrm{IQL}}} = \underbrace{V^{\pi_{\mathrm{IQL}}} - V^{\pi_{\mathrm{E}}}}_{\text{imitation error 1}} + \underbrace{V^{\pi_{\mathrm{E}}} - V^{\tilde{\pi}_{\mathrm{E}}}}_{\text{corruption error}} + \underbrace{V^{\tilde{\pi}_{\mathrm{E}}} - V^{\tilde{\pi}_{\mathrm{IQL}}}}_{\text{imitation error 2}}.
     \$
     We then analyze these three error terms respectively.

    \vspace{4pt}
    \noindent\textbf{Corruption Error.} 
    By the performance difference lemma (Lemma~\ref{lemma:diff}), we have
    \# \label{eq:111}
    V^{\pi_{\mathrm{E}}} - V^{\tilde{\pi}_{\mathrm{E}}} & = - \frac{1}{1 - \gamma} \E_{s \sim d^{\tilde{\pi}_{\mathrm{E}}}} \E_{a \sim \tilde{\pi}_{\mathrm{E}}(\cdot \mid s)} [A^{\pi_{\mathrm{E}}}(s, a)] \notag \\
    & = \frac{1}{1 - \gamma} \E_{s \sim d^{\tilde{\pi}_{\mathrm{E}}}} \E_{a \sim \tilde{\pi}_{\mathrm{E}}(\cdot \mid s)} [V^{\pie}(s) - Q^{\pie}(s, a)] \notag \\
    & = \frac{1}{1 - \gamma} \E_{s \sim d^{\tilde{\pi}_{\mathrm{E}}}} \big [ \E_{a \sim {\pi}_{\mathrm{E}}(\cdot \mid s)} [Q^{\pie}(s, a)] - \E_{a \sim \tpie(\cdot \mid s)} [Q^{\pie}(s, a)] \big] \notag \\ 
    & \le \frac{R_{\max}}{(1 - \gamma)^2} \E_{s \sim d^{\tilde{\pi}_{\mathrm{E}}}} [ \|\tpie(\cdot \mid s) - \pie(\cdot \mid s)\|_1 ] ,
    \# 
    where the second inequality uses the fact that $A^{\pie}(s, a) = Q^{\pie}(s, a) - V^{\pie}(s)$, and the last inequality is obtained by H\"{o}lder's inequality and the fact that $\| Q^{\pie}\|_{\infty} \le R_{\max}/(1-\gamma)$. By Pinsker's inequality, we further have
    \# \label{eq:112}
    \E_{s \sim d^{\tilde{\pi}_{\mathrm{E}}}} [ \|\tpie(\cdot \mid s) - \pie(\cdot \mid s)\|_1 ] & \le \E_{s \sim d^{\tilde{\pi}_{\mathrm{E}}}} \big[ \sqrt{2 \KL\big( \tpie(\cdot \mid s), \pie(\cdot \mid s) \big)} \big] \notag \\
    & \le \sqrt{2 \EE_{(s, a) \sim d^{\tpie}} \log \frac{\tpie(a \mid s)}{\pie(a \mid s)}} \notag \\
    & \le \sqrt{2 M \EE_{(s, a) \sim d_\mathcal{D}} \log \frac{\tpie(a \mid s)}{\pie(a \mid s)}} \notag \\
    & = \sqrt{ \frac{2 M}{N} \sum_{i = 1}^N \log \frac{\tpie(a_i \mid s_i)}{\pie(a_i \mid s_i)}},
    \# 
    where the second inequality follows from Jensen's inequality and the definition of KL-divergence, the third inequality is obtained by the coverage assumption (Assumption~\ref{assumption:coverage}) that $\sup_{s, a} [d^{\tpie}(s, a)/d_{\mathcal{D}}(s, a)] \le M$, and the last inequality uses the definition of $d_\mathcal{D}$. Recall that $\pie$ and $\tpie$ take the following forms:
    \$
     {\pi}_{\mathrm{E}}(a \mid s) &\propto \pi_{\mu}(a \mid s) \cdot \exp(\beta \cdot [\mathcal{T} V^* - V^*](s, a) ), \\
    \tilde{\pi}_{\mathrm{E}}(a \mid s) &\propto \pi_{\mathcal{D}}(a \mid s) \cdot \exp(\beta \cdot [\tilde{\mathcal{T}} V^* - V^*](s, a) ) .
    \$
    We have
    \begin{equation}
        \begin{aligned} \label{eq:113}
            \frac{\tpie(a_i \mid s_i)}{\pie(a_i \mid s_i)} & = \frac{\pi_\cD(a_i \mid s_i) \cdot \exp(\beta \cdot [\tilde{\mathcal{T}} V^* - V^*](s_i, a_i) ) }{\pi_\mu(a_i \mid s_i) \cdot \exp(\beta \cdot [{\mathcal{T}} V^* - V^*](s_i, a_i) )} \\
            & \qquad \times \frac{\sum_{a \in A} \pi_\mu(a \mid s_i) \cdot \exp(\beta \cdot [{\mathcal{T}} V^* - V^*](s_i, a) ) }{\sum_{a \in A } \pi_\cD(a \mid s_i) \cdot \exp(\beta \cdot [\tilde{\mathcal{T}} V^* - V^*](s_i, a) )} .
        \end{aligned}
    \end{equation}
    By the definition of corruption levels in Assumption~\ref{assumption:corruption:level}, we have
    \$
    \zeta_i = \| [\mathcal{T} V] (s_i, a) - [\tilde{\mathcal{T}} V] (s_i, a) \|_{\infty}, \quad \max\Big\{ \frac{\pi_{\mathcal{D}}(a \mid s_i)}{\pi_{\mu}(a \mid s_i)}, \frac{\pi_{\mu}(a \mid s_i)}{\pi_{\mathcal{D}}(a \mid s_i)} \Big\} \le \zeta'_{i}, \quad \forall a \in A. 
    \$ 
    This implies that 
    \begin{equation}
        \begin{aligned} \label{eq:114}
     \frac{\pi_\cD(a_i \mid s_i) \cdot \exp(\beta \cdot [\tilde{\mathcal{T}} V^* - V^*](s_i, a_i) ) }{\pi_\mu(a_i \mid s_i) \cdot \exp(\beta \cdot [{\mathcal{T}} V^* - V^*](s_i, a_i) )} & \le \zeta'_i \cdot \exp(\beta \zeta_i) \\
     \frac{ \pi_\mu(a \mid s_i) \cdot \exp(\beta \cdot [{\mathcal{T}} V^* - V^*](s_i, a) ) }{ \pi_\cD(a \mid s_i) \cdot \exp(\beta \cdot [\tilde{\mathcal{T}} V^* - V^*](s_i, a) )} &\le \zeta'_i \cdot \exp(\beta \zeta_i), \quad \forall a \in A.
        \end{aligned}
    \end{equation}
    Plugging \eqref{eq:114} into \eqref{eq:113}, we have
    \# \label{eq:115}
    \frac{\tpie(a_i \mid s_i)}{\pie(a_i \mid s_i)} & \le \zeta'_i \cdot \exp(\beta \zeta_i) \cdot \frac{ \zeta'_i \cdot \exp(\beta \zeta_i) \sum_{a \in A} {{\pi_\mathcal{D}}(a \mid s_i) \cdot \exp(\beta \cdot [\tilde{\mathcal{T}} V^* - V^*](s_i, a) ) }}{\sum_{a \in A } \pi_\cD(a \mid s_i) \cdot \exp(\beta \cdot [\tilde{\mathcal{T}} V^* - V^*](s_i, a) )} \notag \\ 
    & = {\zeta'_i}^2 \cdot \exp(2 \beta \zeta_i) .
    \#
    Combining \eqref{eq:111}, \eqref{eq:112}, and \eqref{eq:115}, we have 
    \# \label{eq:116}
     V^{\pi_{\mathrm{E}}} - V^{\tilde{\pi}_{\mathrm{E}}} \le \frac{2 R_{\max}}{(1 - \gamma)^2} \sqrt{\frac{M}{N} \sum_{i = 1}^N (\log \zeta'_i + 2 \beta \zeta_i) } = \frac{2R_{\max}}{(1 - \gamma)^2} \sqrt{\frac{M \zeta}{N} } ,
    \# 
    where the last equality uses $\beta = 1$ and the definition of $\zeta$ in Assumption~\ref{assumption:corruption:level}.
    
    \vspace{4pt}
    \noindent\textbf{Imitation Error 1.} Following the derivation of \eqref{eq:111}, we have
     \# \label{eq:117}
     V^{\pi_{\mathrm{IQL}}} - V^{\pi_{\mathrm{E}}} 
     & \le \frac{R_{\max}}{(1- \gamma)^2} \EE_{s \sim d^{\pie}} [\| \pie(\cdot \mid s) - \pi_{\mathrm{IQL}}(\cdot \mid s) \|_1 ] \notag \\
     & \le \frac{\sqrt{2} R_{\max}}{(1- \gamma)^2} \EE_{s \sim d^{\pie}} [\sqrt{\KL( \pie(\cdot \mid s), \pi_{\mathrm{IQL}}(\cdot \mid s) ) }] \notag \\
     & \le \frac{\sqrt{2} R_{\max}}{(1- \gamma)^2} \sqrt{\EE_{s \sim d^{\pie}} [\KL( \pie(\cdot \mid s), \pi_{\mathrm{IQL}}(\cdot \mid s) ) ]} \notag \\
     & \le \frac{\sqrt{2M} R_{\max}}{(1- \gamma)^2} \sqrt{\EE_{s \sim \mu} [\KL( \pie(\cdot \mid s), \pi_{\mathrm{IQL}}(\cdot \mid s) ) ]} = \frac{\sqrt{2M} R_{\max}}{(1- \gamma)^2} \sqrt{\epsilon_{1}},
     \# 
     where the second inequality uses Pinsker's inequality, the third inequality is obtained by Jensen's inequality, the fourth inequality uses Assumption~\ref{assumption:coverage}, and the final equality follows from the definition of $\epsilon_1$.

     \vspace{4pt}
    \noindent\textbf{Imitation Error 2.} By the performance difference lemma (Lemma~\ref{lemma:diff}) and the same derivation of~\eqref{eq:111}, we have
    \$
    V^{\tilde{\pi}_{\mathrm{E}}} - V^{\tilde{\pi}_{\mathrm{IQL}}} &= \frac{R_{\max}}{1-\gamma}\E_{s \sim d^{\tpie} } \E_{a \sim \tpie(\cdot \mid s)} [A^{\tilde{\pi}_{\mathrm{IQL}}}(s, a)] \\
    & \le \frac{R_{\max}}{(1-\gamma)^2} \EE_{s \sim d^{\tpie}} [\|\tpie(\cdot \mid s) - \tilde{\pi}_{\mathrm{IQL}}(\cdot \mid s)\|_1 ] .
    \$
    By the same derivation of \eqref{eq:117}, we have
    \# \label{eq:118}
    V^{\tilde{\pi}_{\mathrm{E}}} - V^{\tilde{\pi}_{\mathrm{IQL}}} \le \frac{\sqrt{2M} R_{\max}}{(1- \gamma)^2} \sqrt{\EE_{s \sim \mathcal{D}} [\KL( \tpie(\cdot \mid s), \tilde{\pi}_{\mathrm{IQL}}(\cdot \mid s) ) ]} \le \frac{\sqrt{2M}R_{\max}}{(1- \gamma)^2} \sqrt{\epsilon_{2}}.
    \#

    \vspace{4pt}
    \noindent\textbf{Putting Together.} 
    Combining \eqref{eq:116}, \eqref{eq:117}, and \eqref{eq:118}, we obtain that
    \$
     V^{\pi_{\mathrm{IQL}}} - V^{\tilde{\pi}_{\mathrm{IQL}}} \le \frac{\sqrt{2M} R_{\max}}{(1- \gamma)^2} [ \sqrt{\epsilon_{1}} + \sqrt{\epsilon_{2}} ] + \frac{2R_{\max}}{(1 - \gamma)^2} \sqrt{\frac{M \zeta}{N} } ,
    \$ 
    which concludes the proof of Theorem~\ref{thm:iql}.
\end{proof}

\begin{remark} \label{remark:compare}
    We would like to make a comparison with \citet{zhang2022corruption}, a theory work about offline RL with corrupted data. They assume that $\epsilon N$ data points are corrupted and propose a least-squares value iteration (LSVI) type algorithm that achieves an $\mathcal{O}(\sqrt{\epsilon})$ optimality gap by ignoring other parameters. When applying our results to their setting, we have $\zeta \le \epsilon N$, which implies that our corruption error term $\mathcal{O}(\sqrt{\zeta/N}) \le \mathcal{O}(\sqrt{\epsilon})$. This matches the result in \citet{zhang2022corruption}, which combines LSVI with a robust regression oracle. However, their result may not readily apply to our scenario, given that we permit complete data corruption ($\epsilon = 1$).
Furthermore, from the empirical side, IQL does not require any robust regression oracle and outperforms LSVI-type algorithms, such EDAC and MSG.
\end{remark}

\subsection{Proof of Lemma \ref{lemma:huber}} \label{appendix:pf:huber}

\begin{proof}[Proof of Lemma \ref{lemma:huber}] 
    We first state Theorem 1 of \citet{sun2020adaptive} as follows.
    \begin{lemma}[Theorem 1 of \citet{sun2020adaptive}] \label{lemma:sun}
        Consider the following statistical model:
        \begin{align*}
            y_{i}=\left\langle{x}_{i}, {\beta}^{*}\right\rangle+\varepsilon_{i}, \quad \text { with } \mathbb{E}\left(\varepsilon_{i} \mid {x}_{i}\right)=0, \mathbb{E}(\left|\varepsilon_{i}\right|^{1+\nu})<\infty, \quad \forall 1 \le i \le N.
        \end{align*}
        By solving the Huber regression problem with a proper parameter $\delta$ 
        \begin{align*}
           \hat{\beta} = \argmin_{\beta} \sum_{i = 1}^N l_H^{\delta}( y_i - \langle x_i, \beta \rangle) ,        
        \end{align*}
        the obtained $\hat{\beta}$ satisfies
        \begin{align*}
            \|\hat{\beta} - \beta^*\|_2 \lesssim N^{- \min\{\nu/(1+\nu), 1/2\}}.
        \end{align*}
    \end{lemma}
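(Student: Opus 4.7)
The plan is to recast the Huber regression in \eqref{eq:huber:regression} as an instance of the heavy-tailed linear regression handled by Lemma~\ref{lemma:sun}, and then transfer the resulting $\ell_2$ parameter-error bound into the claimed sup-norm bound on $\hat Q$. First, I would set $y_i := r_i + \gamma V_{\psi}(s'_i)$ for each $(s_i, a_i, r_i, s'_i) \in \mathcal{D}$ and let $Q^\star(s, a) := r(s, a) + \gamma\, \E_{s' \sim P(\cdot \mid s, a)}[V_\psi(s')]$. Assumption~\ref{assumption:heavy-tail} then gives
\[
    y_i = Q^\star(s_i, a_i) + \varepsilon_{s_i, a_i, s'_i}, \qquad \E[\varepsilon_{s_i, a_i, s'_i} \mid s_i, a_i, s'_i] = 0, \qquad \E[|\varepsilon_{s_i, a_i, s'_i}|^{1+\nu}] < \infty,
\]
which matches exactly the noise model of Lemma~\ref{lemma:sun}.

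Next, I would expose the linear structure. In the tabular setting I would let $x_i := e_{(s_i, a_i)} \in \R^{|S||A|}$ be the canonical basis vector indexed by the observed state-action pair, and let $\beta^\star \in \R^{|S||A|}$ have $\beta^\star_{(s,a)} = Q^\star(s, a)$. Then $\langle x_i, \beta^\star \rangle = Q^\star(s_i, a_i)$, any tabular $Q$ corresponds to some $\beta \in \R^{|S||A|}$ via $\beta_{(s,a)} = Q(s, a)$, and the Huber program \eqref{eq:huber:regression} is literally the Huber regression in Lemma~\ref{lemma:sun} with response $y_i$, feature $x_i$, and parameter $\beta$. In particular, the minimizer $\hat\beta$ satisfies $\hat\beta_{(s,a)} = \hat Q(s, a)$.

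Applying Lemma~\ref{lemma:sun} with a properly tuned $\delta$ yields
\[
    \|\hat\beta - \beta^\star\|_2 \;\lesssim\; N^{-\min\{\nu/(1+\nu),\, 1/2\}}.
\]
Since the coordinates of $\hat\beta - \beta^\star$ are exactly $\hat Q(s,a) - Q^\star(s,a)$ over the finite index set $S \times A$, the pointwise bound $|x_k| \le \|x\|_2$ gives
\[
    \|\hat Q - Q^\star\|_\infty \;=\; \max_{(s,a)} \bigl|\hat\beta_{(s,a)} - \beta^\star_{(s,a)}\bigr| \;\le\; \|\hat\beta - \beta^\star\|_2 \;\lesssim\; N^{-\min\{\nu/(1+\nu),\, 1/2\}},
\]
which is the stated bound. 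The ``furthermore'' statement is then immediate: the RHS tends to $0$ as $N \to \infty$, so $\hat Q(s, a) \to Q^\star(s, a)$ pointwise.

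The main obstacle is handling the implicit regularity conditions buried in Lemma~\ref{lemma:sun}: (i) the tuning of $\delta$ must be matched to $N$ and to the $(1+\nu)$-th moment of $\varepsilon$; (ii) the design-matrix regularity condition must be verified for the chosen features, which in the tabular encoding reduces to requiring that every $(s, a)$ appears sufficiently often in $\mathcal{D}$ (a coverage/eigenvalue condition on the empirical Gram matrix); and (iii) if one wanted to move beyond the tabular case, the reduction would require a realizable, bounded-dimension linear feature map for $Q$ and a corresponding uniform coverage assumption, so the cleanest route is to carry out the argument in the tabular setting as above and note the extension.
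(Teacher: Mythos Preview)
Your proposal does not prove the stated lemma at all: Lemma~\ref{lemma:sun} is the Huber-regression error bound quoted from \citet{sun2020adaptive}, and throughout your write-up you treat it as a black box to be \emph{applied}, not as a claim to be established. What you have actually written is a proof of Lemma~\ref{lemma:huber} (the sup-norm bound on $\hat Q$ under Assumption~\ref{assumption:heavy-tail}), and for that target the paper proceeds by exactly the route you describe---set $x_i = \mathbbm{1}\{s_i,a_i\}$, $y_i = r_i + \gamma V_\psi(s_i')$, identify $\beta^*$ with the vector $\{r(s,a)+\gamma\E_{s'\sim P}[V_\psi(s')]\}_{(s,a)}$, invoke Lemma~\ref{lemma:sun}, and pass from $\|\cdot\|_2$ to $\|\cdot\|_\infty$. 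As a proof of Lemma~\ref{lemma:huber} your argument matches the paper essentially line for line.

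But the paper supplies no proof of Lemma~\ref{lemma:sun} itself; it is simply cited as Theorem~1 of \citet{sun2020adaptive}. A genuine proof would require the adaptive-Huber machinery of that reference: tuning $\delta$ as a suitable power of $N$ times the $(1+\nu)$-th moment bound, controlling the bias induced by the Huber truncation, and combining a restricted-strong-convexity argument for the empirical Huber loss with Bernstein-type concentration for its gradient. None of this appears in your write-up, so as a proof of the statement actually given, the proposal is vacuous---you have assumed the conclusion and used it to derive a downstream corollary.
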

    Back to our proof, for any $(s_i, a_i, s'_i) \sim \mathcal{D}$, we take (i) $x_i = \mathbbm{1}\{s_i, a_i\} \in \mathbb{R}^{|S \times A|}$ as the one-hot vector at $(s_i, a_i)$ and (ii) $y_i = r_i + \gamma V_{\psi}(s'_i)$. Moreover, we treat $\{r(s, a + \gamma \E_{s' \sim P(\cdot \mid s, a)} [V_{\psi}(s')] \}_{(s, a) \in S \times A} \in \mathbb{R}^{|S \times A|}$ as $\beta^*$. Hence, by Lemma~\ref{lemma:sun} and the fact that $\|\cdot\|_{\infty} \le \|\cdot \|_2$, we obtain
    \begin{align*}
        \|\hat{Q}(s, a) - r(s, a) - \gamma \E_{s' \sim P(\cdot \mid s, a)} [V_{\psi}(s')] \|_{\infty} \lesssim N^{-\min\{\nu/(1+\nu), 1/2\}},
    \end{align*}
    where $N$ is the size of dataset $\mathcal{D}$. This concludes the proof of Lemma~\ref{lemma:huber}.
\end{proof}

\paragraph{A Discussion about Recovering the Optimal Value Function}
We denote the optimal solutions to \eqref{eq:IQL_V_loss} and \eqref{eq:huber:regression} by $V_{\tau}$ and $Q_\tau$, respectively. When $N \rightarrow \infty$, 
we know
$$
    V_{\tau}(s)  =\mathbb{E}_{a \sim \pi_{\mathcal{D}}(\cdot \mid s)}^{\tau}\left[Q_{\tau}(s, a)\right], \quad  Q_{\tau}(s, a)  = r(s, a) + \gamma \mathbb{E}_{s^{\prime} \sim P(\cdot \mid s, a)}\left[V_{\tau}\left(s^{\prime}\right)\right],
$$
where $\E_{x \sim X}^\tau[x]$ denotes the $\tau$-th expectile of the random variable $X$. 
following the analysis of IQL \citep[][Theorem 3]{kostrikov2021offline}, we can further show that 
$ \lim _{\tau \rightarrow 1} V_{\tau}(s)=\max_{\substack{a \in \mathcal{A}  \text { s.t. } \pi_{\mathcal{D}}(a \mid s)>0}} Q^{*}(s, a)$. With the Huber loss, we can further recover the optimal value function even in the presence of a heavy-tailed target distribution.

\section{Implementation Details}
\label{ap:implementation_details}

\subsection{Data Corruption Details}
\label{ap:corruption_details}
We apply both random and adversarial corruption to the four elements, namely states, actions, rewards, and dynamics (or ``next-states"). In our experiments, we primarily utilize the ``medium-replay-v2'' and ``medium-expert-v2'' datasets from \citep{fu2020d4rl}. These datasets are gathered either during the training of a SAC agent or by combining equal proportions of expert demonstrations and medium data, making them more representative of real-world scenarios. To control the cumulative corruption level, we introduce two parameters, $c$ and $\epsilon$. Here, $c$ represents the corruption rate within the dataset of size $N$, while $\epsilon$ denotes the corruption scale for each dimension. We detail four types of random data corruption and a mixed corruption below:
\begin{itemize}
     \item  \textbf{Random observation attack}: We randomly sample $c \cdot N$ transitions $(s,a,r,s')$, and modify the state to $\hat s = s +\lambda \cdot \text{std}(s) , \lambda \sim \text{Uniform}[-\epsilon,\epsilon]^{d_s}$. Here, $d_s$ represents the dimension of states and ``$\text{std}(s)$'' is the $d_s$-dimensional standard deviation of all states in the offline dataset. The noise is scaled according to the standard deviation of each dimension and is independently added to each respective dimension.
      \item  \textbf{Random action attack}: We randomly select $c \cdot N$ transitions $(s,a,r,s')$, and modify the action to $\hat a = a +\lambda \cdot \text{std}(a) , \lambda \sim \text{Uniform}[-\epsilon,\epsilon]^{d_a}$, where $d_a$ represents the dimension of actions and ``$\text{std}(a)$'' is the $d_a$-dimensional standard deviation of all actions in the offline dataset.
    \item \textbf{Random reward attack}: We randomly sample $c \cdot N$ transitions $(s,a,r,s')$ from $D$, and modify the reward to $\hat r  \sim \text{Uniform}[-30 \cdot \epsilon, 30 \cdot \epsilon]$. We multiply by 30 because we have noticed that offline RL algorithms tend to be resilient to small-scale random reward corruption (as observed in \citep{li2023survival}), but would fail when faced with large-scale random reward corruption.
    \item  \textbf{Random dynamics attack}: We randomly sample $c \cdot N$ transitions $(s,a,r,s')$, and modify the next-step state $\hat s' = s'+\lambda \cdot \text{std}(s') , \lambda \sim \text{Uniform}[-\epsilon,\epsilon]^{d_s}$. Here, $d_s$ indicates the dimension of states and ``$\text{std}(s')$'' is the $d_s$-dimensional standard deviation of all next-states in the offline dataset.
    \item \textbf{Random mixed attack}:  We randomly select $c \cdot N$ of the transitions and execute the random observation attack. Subsequently, we again randomly sample $c \cdot N$ of the transitions and carry out the random action attack. The same process is repeated for both reward and dynamics attacks.
\end{itemize}

In addition, four types of adversarial data corruption are detailed as follows:
\begin{itemize}
    \item  \textbf{Adversarial observation attack}: We first pretrain an EDAC agent with a set of $Q_{p}$ functions and a policy function $\pi_{p}$ using clean dataset. Then, we randomly sample $c \cdot N$ transitions $(s,a,r,s')$, and modify the states to $\hat s=\min_{\hat s \in \mathbb{B}_d(s,\epsilon)} Q_{p}(\hat s, a)$. Here, $\mathbb{B}_d(s,\epsilon)=\{\hat s| |\hat s - s| \leq \epsilon \cdot \text{std}(s) \}$ regularizes the maximum difference for each state dimension. The Q function in the objective is the average of the Q functions in EDAC. The optimization is implemented through Projected Gradient Descent similar to prior works \citep{madry2017towards,zhang2020robust}. Specifically, We first initialize a learnable vector $z \in [-\epsilon, \epsilon]^{d_s}$, and then conduct a 100-step gradient descent with a step size of 0.01 for $\hat s = s + z \cdot \text{std}(s)$, and clip each dimension of $z$ within the range $[-\epsilon, \epsilon]$ after each update.
    \item \textbf{Adversarial action attack}: We use the pretrained EDAC agent with a group $Q_{p}$ functions and a policy function $\pi_{p}$. Then, we randomly sample $c \cdot N$ transitions $(s,a,r,s')$, and modify the actions to $\hat a=\min_{\hat a \in \mathbb{B}_d(a,\epsilon)} Q_{p}(s, \hat a)$. Here, $\mathbb{B}_d(a,\epsilon)=\{\hat a| |\hat a - a| \leq \epsilon \cdot \text{std}(a) \}$ regularizes the maximum difference for each action dimension. The optimization is implemented through Projected Gradient Descent, as discussed above.
    \item  \textbf{Adversarial reward attack}: We randomly sample $c \cdot N$ transitions $(s,a,r,s')$, and directly modify the rewards to: $\hat r=- \epsilon \times r$.
    \item \textbf{Adversarial dynamics attack}: We use the pretrained EDAC agent with a group of $Q_{p}$ functions and a policy function $\pi_{p}$. Then, we randomly select $c \cdot N$ transitions $(s,a,r,s')$, and modify the next-step states to $\hat s'=\min_{\hat s'\in \mathbb{B}_d(s',\epsilon)} Q_{p}(\hat s', \pi_{p}(\hat s'))$. Here, $\mathbb{B}_d(s',\epsilon)=\{\hat s' | |\hat s' - s'| \leq \epsilon \cdot \text{std}(s') \}$. The optimization is the same as discussed above.
\end{itemize}

\begin{table}[ht]
\small
    \centering
     \caption{Hyperparameters used for RIQL under the random corruption benchmark.}
    \begin{tabular}{c|c|cccc}
    \toprule
        Environments & Attack Element & $N$ & $\alpha$ & $\delta$  \\
        \midrule
        \multirow{4}{*}{Halfcheetah} & observation & 5 & 0.1 & 0.1 \\
        & action & 3 & 0.25 & 0.5 \\
        & reward & 5 & 0.25 &  3.0 \\
        & dynamics & 5 & 0.25 & 3.0  \\
        \hline
         \multirow{4}{*}{Walker2d} & observation & 5 & 0.25 & 0.1 \\
        & action & 5  & 0.1  &  0.5 \\
        & reward &  5 & 0.1 &  3.0 \\
        & dynamics & 3 & 0.25 & 1.0  \\
        \hline
        \multirow{4}{*}{Hopper} & observation & 3 & 0.25 & 0.1 \\
        & action & 5 & 0.25 & 0.1  \\
        & reward & 3 & 0.25 & 1.0  \\
        & dynamics & 5  & 0.5 & 1.0  \\
    \bottomrule
    \end{tabular}
    \label{tab:random_benchmark_hyper}
\end{table}

\begin{table}[ht]
\small
    \centering
     \caption{Hyperparameters used for RIQL under the adversarial corruption benchmark.}
    \begin{tabular}{c|c|cccc}
    \toprule
        Environments & Attack Element & $N$ & $\alpha$ & $\delta$  \\
        \midrule
        \multirow{4}{*}{Halfcheetah} & observation & 5 & 0.1 & 0.1 \\
        & action & 5 & 0.1 & 1.0 \\
        & reward & 5 & 0.1 &  1.0 \\
        & dynamics & 5 & 0.1 & 1.0  \\
        \hline
         \multirow{4}{*}{Walker2d} & observation & 5 & 0.25 & 1.0 \\
        & action & 5  & 0.1  &  1.0 \\
        & reward &  5 & 0.1 &  3.0 \\
        & dynamics & 5 & 0.25 & 1.0  \\
        \hline
        \multirow{4}{*}{Hopper} & observation & 5 & 0.25 & 1.0 \\
        & action & 5 & 0.25 & 1.0  \\
        & reward & 5 & 0.25 & 0.1  \\ 
        & dynamics & 5  & 0.5 & 1.0  \\
    \bottomrule
    \end{tabular}
    \label{tab:adversarial_benchmark_hyper}
\end{table}

\subsection{Implementation Details of IQL and RIQL}\label{ap:details_RIQL}
For the policy and value networks of IQL and RIQL, we utilize an MLP with 2 hidden layers, each consisting of 256 units, and ReLU activations. These neural networks are updated using the Adam optimizer with a learning rate of $3\times 10^{-4}$. We set the discount factor as $\gamma = 0.99$, the target networks are updated with a smoothing factor of 0.005 for soft updates. The hyperparameter $\beta$ and $\tau$ for IQL and RIQL are set to 3.0 and 0.7 across all experiments. In terms of policy parameterization, we argue that RIQL is robust to different policy parameterizations, such as deterministic policy and diagonal Gaussian policy. For the main results, IQL and RIQL employ a deterministic policy, which means that maximizing the weighted log-likelihood is equivalent to minimizing a weighted $l_2$ loss on the policy output: $\mathcal{L}_{\pi}(\phi) = \E_{(s,a)\sim \mathcal{D}} [ \exp(\beta A(s,a)) \|a - \pi_{\phi}(s)\|_2^2 ]$. We also include a discussion about the diagonal Gaussian parameterization in Appendix \ref{ap:policy_parameterization}. In the training phase, we train IQL, RIQL, and other baselines for $3\times 10^6$ steps following \citep{an2021uncertainty}, which corresponds to 3000 epochs with 1000 steps per epoch. The training is performed using a batch size of 256. For evaluation, we rollout each agent in the clean environment for 10 trajectories (maximum length equals 1000) and average the returns. All reported results are averaged over four random seeds. As for the specific hyperparameters of RIQL, we search $K\in\{3, \textbf{5}\}$ and quantile $\alpha\in \{\textbf{0.1}, \textbf{0.25}, 0.5\}$ for the quantile Q estimator, and $\delta \in \{\textbf{0.1} , 0.5, \textbf{1.0}, 3.0\}$ for the Huber loss. In most settings, we find that the highlighted hyperparameters often yield the best results. The specific hyperparameters used for the random and adversarial corruption experiment in Section \ref{sec:random_corruption} are listed in Table \ref{tab:random_benchmark_hyper} and Table \ref{tab:adversarial_benchmark_hyper}, respectively. Our code is available at \href{https://github.com/YangRui2015/RIQL}{https://github.com/YangRui2015/RIQL}, which is based on the open-source library of CORL \citep{tarasov2022corl}.

\subsection{Quantile Calculation}\label{ap:quantile_calculation}
To calculate the $\alpha$-quantile for a group of Q function $\{Q_{\theta_i}\}_{i=1}^{K}$, we can map $\alpha \in [0,1]$ to the range of indices $[1, K]$ in order to determine the location of the quantile in the sorted input. If the quantile lies between two data points $Q_{\theta_i} < Q_{\theta_{j}}$ with indices $i,j$ in the sorted order, the result is computed according to the linear interpolation: $Q_{\alpha} = Q_{\theta_i} + (Q_{\theta_j} - Q_{\theta_i}) * \mathrm{fraction}(\alpha \times (K - 1)+1)$, where the ``fraction'' represents the fractional part of the computed index. As a special case, when $K=2$, $Q_{\alpha} = (1-\alpha) \min(Q_{\theta_0}, Q_{\theta_1}) + \alpha \max (Q_{\theta_0}, Q_{\theta_1})$, and $Q_{\alpha}$ recovers the Clipped Double Q-learning trick when $\alpha=0$.

\section{Additional Experiments}
\label{ap:additional_exp}

\subsection{Ablation of IQL}
\label{ap:ablation_IQL}
As observed in Figure \ref{fig:motivating_example}, IQL demonstrates notable robustness against some types of data corruption. However, it raises the question: \textbf{which component of IQL contributes most to its robustness?} IQL can be interpreted as a combination of expectile regression and weighted imitation learning. To understand the contribution of each component, we perform an ablation study on IQL under mixed corruption in the Hopper and Walker tasks, setting the corruption rate to $0.1$ and the corruption scale to $1.0$. Specifically, we consider the following variants:

\textbf{IQL $\tau$ = 0.7}: This is the standard IQL baseline.

\textbf{IQL $\tau$ = 0.5}: This variant sets $\tau=0.5$ for IQL.

 \textbf{IQL w/o Expectile}: This variant removes the expectile regression that learns the value function and instead directly learns the Q function to minimize $\E_{(s,a,r,s')\sim \mathcal{D} }[(r+ \gamma Q(s',\pi(s')) - Q(s,a))^2]$, and updating the policy to maximize $\E_{(s,a)\sim \mathcal{D} }[\exp(Q(s,a)-Q(s,\pi(s))) \log \pi(a|s)]$.
 
 \textbf{ER w. Q gradient}: This variant retains the expectile regression to learn the Q function and V function, then uses only the learned Q function to perform a deterministic policy gradient: $\max_{\pi} [Q(s, \pi(s))]$.

\begin{wrapfigure}{rh}{0.33\linewidth}
\vspace{-25pt}
\includegraphics[width=1\linewidth]{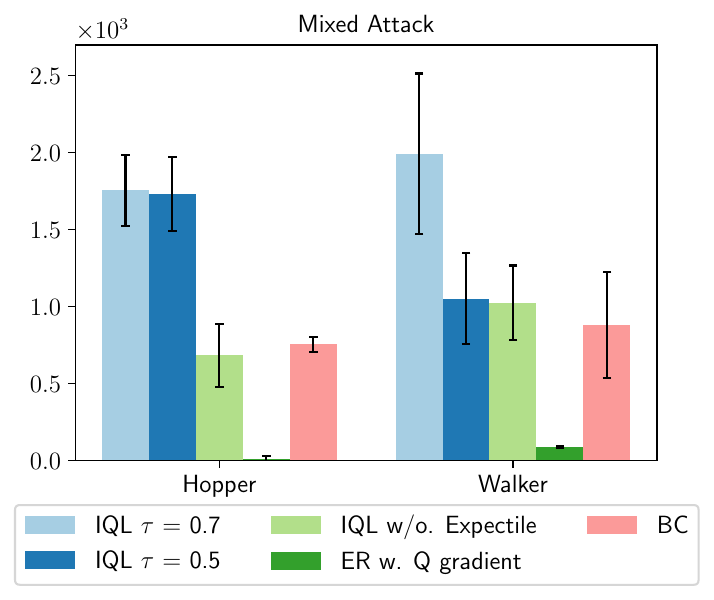}
\caption{Ablation results of IQL under mixed attack.}
\vspace{-25pt}
\label{fig:iql_variant}
\end{wrapfigure}

The results are presented in Figure \ref{fig:iql_variant}. From these, we can conclude that while expectile regression enhances performance, it is not the key factor for robustness. On the one hand, \textbf{IQL $\tau$ =  0.7} outperforms \textbf{IQL $\tau$ = 0.5} and \textbf{IQL w/o Expectile}, confirming that expectile regression is indeed an enhancement factor. On the other hand, the performance of \textbf{ER w. Q gradient} drops to nearly zero, significantly lower than BC, indicating that expectile regression is not the crucial component for robustness. Instead, the supervised policy learning scheme is proved to be the key to achieving better robustness.

\subsection{Ablation of Observation Normalization}\label{ap:normalization}
Figure \ref{fig:normalization} illustrates the comparison between IQL and ``IQL (norm)'' on the medium-replay and medium-expert datasets under various data corruption scenarios. In most settings, IQL with normalization surpasses the performance of the standard IQL. This finding contrasts with the general offline Reinforcement Learning (RL) setting, where normalization does not significantly enhance performance, as noted by \citep{fujimoto2021minimalist}. These results further justify our decision to incorporate observation normalization into RIQL.

\begin{figure}[th]
\centering
\subfigure[]{
    \begin{minipage}[t]{0.245\linewidth}
        \centering
        \includegraphics[width=1\linewidth,height=0.8\linewidth]{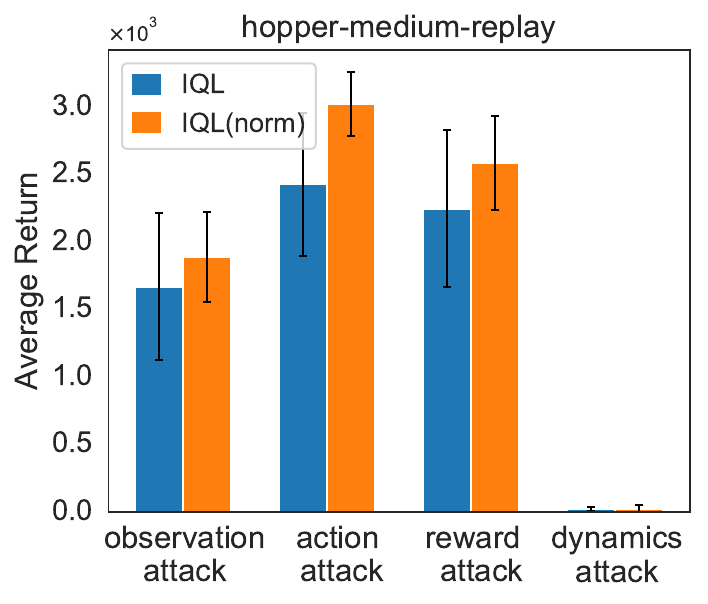}\\
    \end{minipage}%
}%
\subfigure[]{
    \begin{minipage}[t]{0.245\linewidth}
        \centering
        \includegraphics[width=1\linewidth,height=0.8\linewidth]{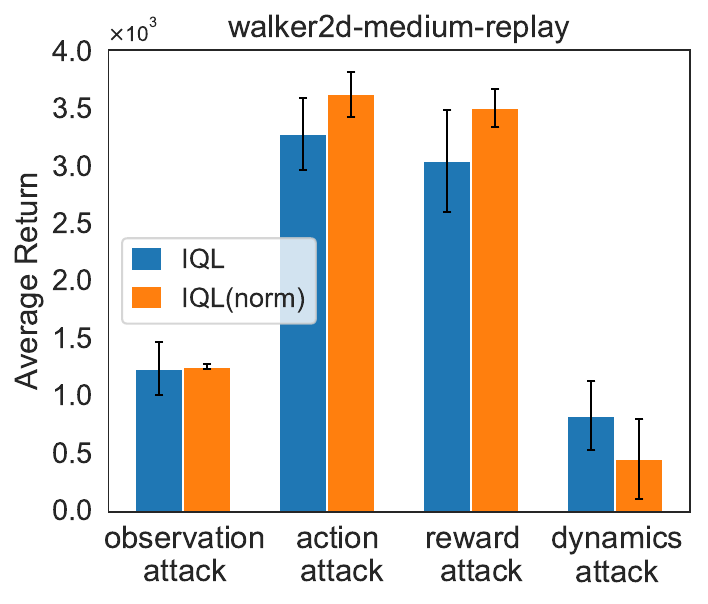}\\
    \end{minipage}%
}%
\subfigure[]{
    \begin{minipage}[t]{0.245\linewidth}
        \centering
        \includegraphics[width=1\linewidth,height=0.8\linewidth]{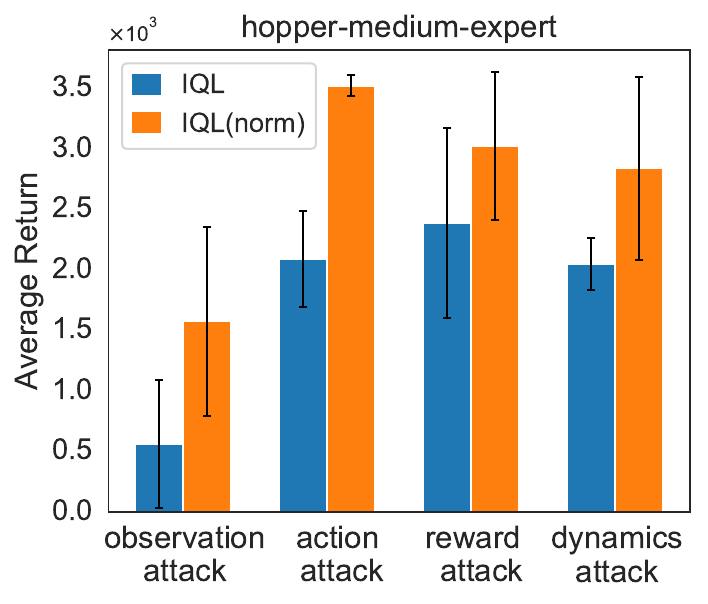}\\
    \end{minipage}%
}%
\subfigure[]{
    \begin{minipage}[t]{0.245\linewidth}
        \centering
        \includegraphics[width=1\linewidth,height=0.8\linewidth]{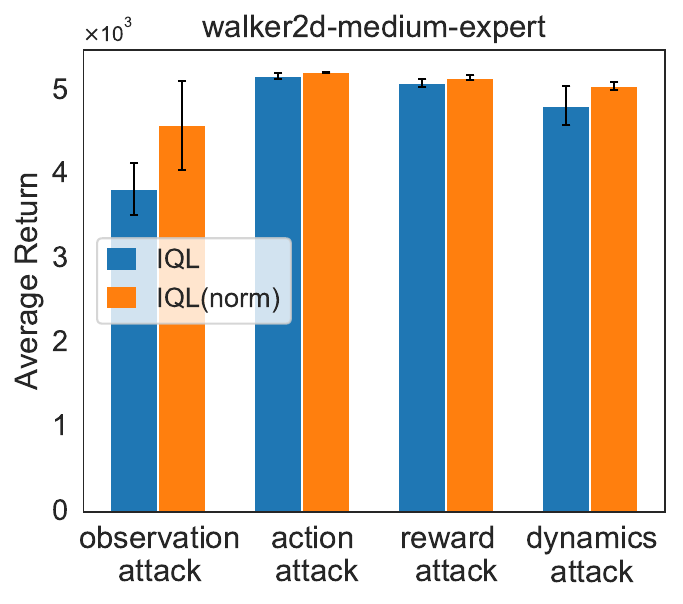}\\
    \end{minipage}%
}%
\centering
\caption{Comparison of IQL w/ and w/o observation normalization.}
\label{fig:normalization}
\end{figure}

\subsection{Evaluation under the Mixed Corruption} \label{ap:mixed_corruption}
We conducted experiments under mixed corruption settings, where corruption independently occurred on four elements: state, actions, rewards, and next-states (or dynamics). The results for corruption rates of 0.1/0.2 and corruption scales of 1.0/2.0 on the "medium-replay" datasets are presented in Figure \ref{fig:random_mixed_rate0.1} and Figure \ref{fig:random_mixed_rate0.2}.
From the figures, it is evident that \textbf{RIQL consistently outperforms other baselines by a significant margin in the mixed attack settings with varying corruption rates and scales}. Among the baselines, EDAC and MSG continue to struggle in this corruption setting. Besides, CQL also serves as a reasonable baseline, nearly matching the performance of IQL in such mixed corruption settings. Additionally, SQL also works reasonably under a small corruption rate of 0.1 but fails under a corruption rate of 0.2.

\begin{figure}[ht]
 \centering
    \includegraphics[width=0.75\linewidth]{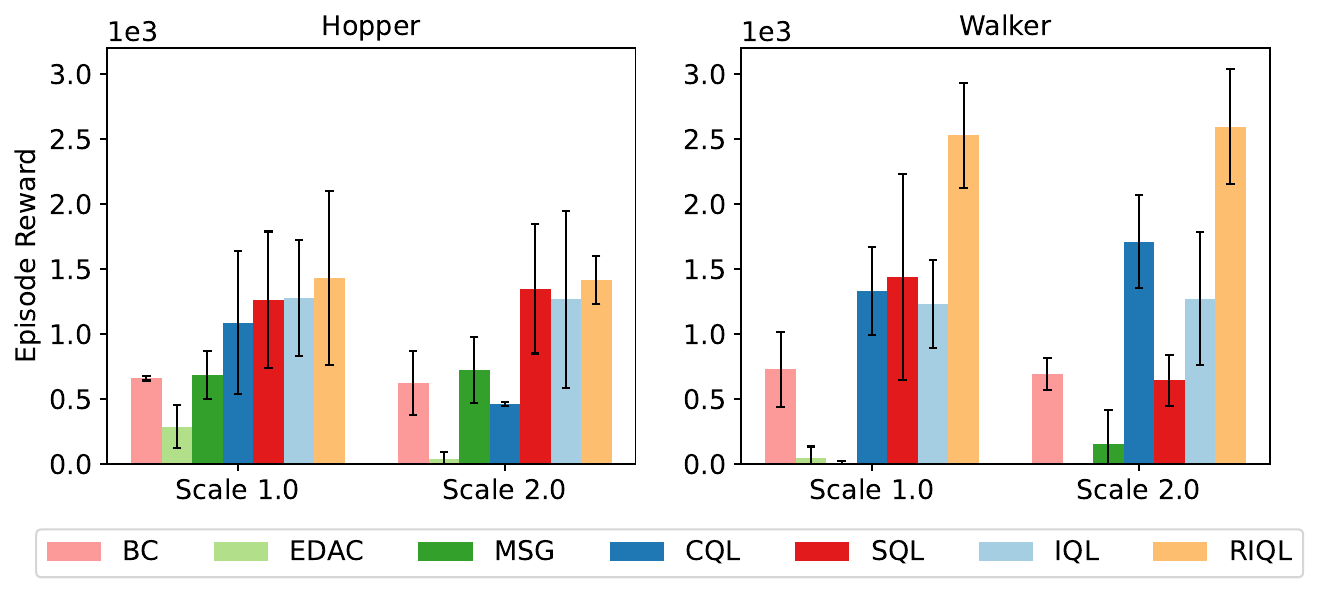}
    \caption{Results under random mixed attack with a corruption rate of 0.1.}
    \label{fig:random_mixed_rate0.1}
\end{figure}

\begin{figure}[ht]
    \centering
    \includegraphics[width=0.75\linewidth]{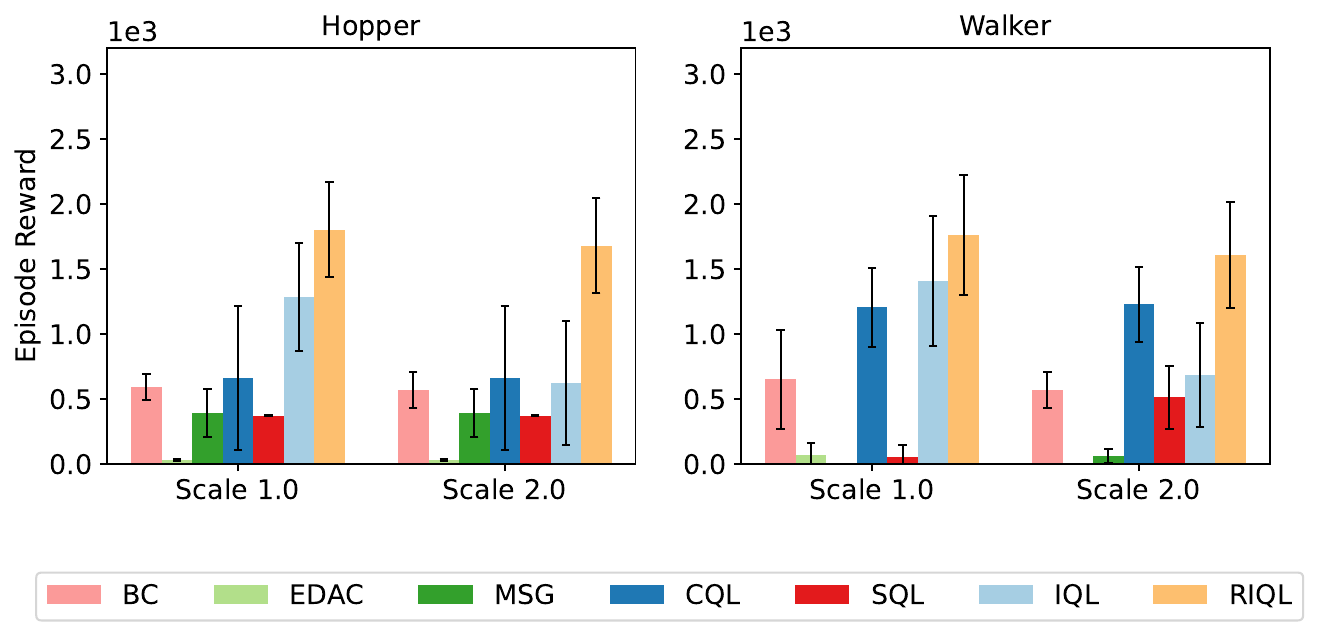}
    \caption{Results under random mixed attack with a corruption rate of 0.2.}
    \label{fig:random_mixed_rate0.2}
\end{figure}

\begin{figure}[h!]
    \centering
    \includegraphics[width=0.7\linewidth]{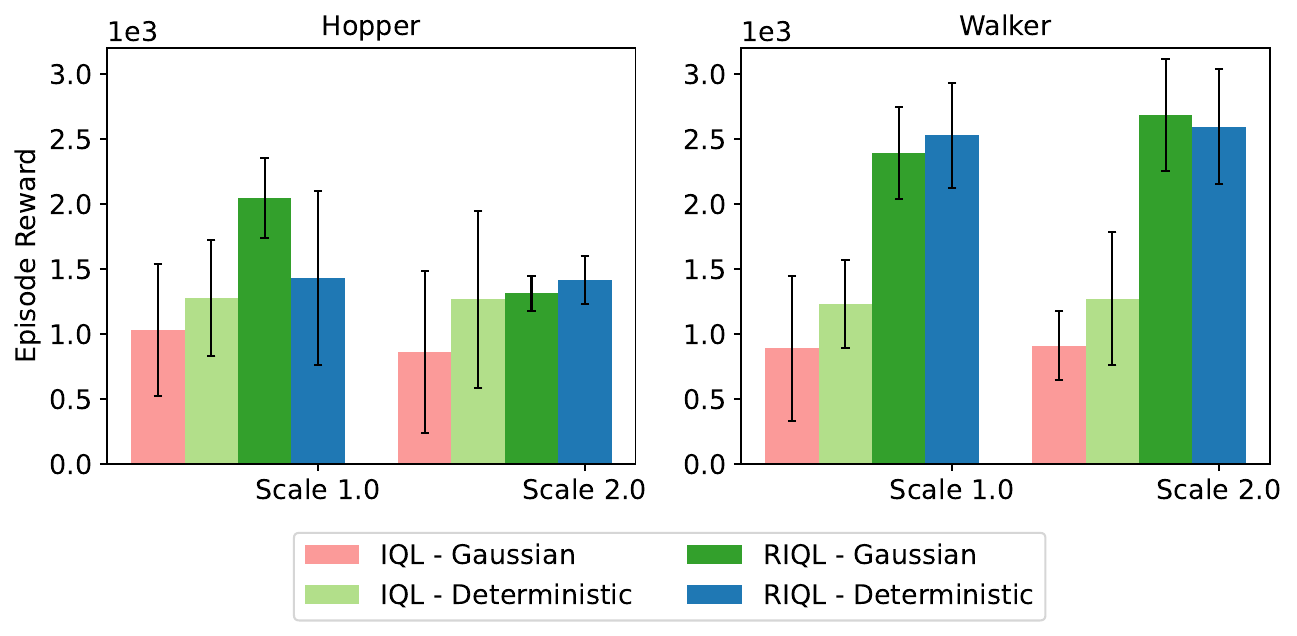}
    \caption{Comparison of different policy parameterizations under random mixed attack with a corruption rate of 0.1.}
    \label{fig:random_mixed_G_vs_D_rate0.1}
\end{figure}

\subsection{Evaluation of Different Policy Parameterization}\label{ap:policy_parameterization}
In the official implementation of IQL \citep{kostrikov2021offline}, the policy is parameterized as a diagonal Gaussian distribution with a state-independent standard deviation. However, in the data corruption setting, our findings indicate that this version of IQL generally underperforms IQL with a deterministic policy. This observation is supported by Figure \ref{fig:random_mixed_G_vs_D_rate0.1}, where we compare the performance under the mixed attack with a corruption rate of 0.1. Consequently, we use the deterministic policy for both IQL and RIQL by default. Intriguingly, \textbf{RIQL demonstrates greater robustness to the policy parameterization}. In Figure \ref{fig:random_mixed_G_vs_D_rate0.1}, ``RIQL - Gaussian'' can even surpass ``RIQL - Deterministic'' and exhibits lower performance variance. We also note the consistent improvement of RIQL over IQL under different policy parameterizations, which serves as an additional advantage of RIQL. Based on our extensive experiments, we have noticed that ``RIQL - Gaussian'' offers greater stability and necessitates less hyperparameter tuning than ``RIQL - Deterministic''. Conversely, ``RIQL - Deterministic'' requires more careful hyperparameter tuning but can yield better results on average in scenarios with higher corruption rates.

\begin{table}[t]
\small
  \caption{Average normalized performance under random data corruption (scale 1.0) using \textbf{``medium-expert'' datasets}. Results are averaged over 4 random seeds.}
  \vspace{3pt}
  \label{tab:medium_expert_random_scale1.0}
  \centering
  \begin{adjustbox}{width=1.0\columnwidth}
  \begin{tabular}{l|l|c|c|c|c|c|c|c}
    \toprule
   Environment & Attack Element  & BC & EDAC & MSG & CQL &  SQL & IQL & RIQL (ours) \\
    \midrule
    \multirow{4}{*}{Halfcheetah} & observation  & 51.6$\pm$2.9 & -4.0$\pm$3.2 &-4.9$\pm$4.5 &3.4$\pm$1.6 & 30.5$\pm$5.1 &71.0$\pm$3.4 & \textbf{79.7$\pm$7.2} \\
    &  action    & 61.2$\pm$5.6 & 45.4$\pm$3.2 & 32.8$\pm$4.1 & 67.5$\pm$9.7 & 89.8$\pm$0.4 & 88.6$\pm$1.2 &  \textbf{92.9$\pm$0.6} \\
    &  reward    & 59.9$\pm$3.3 &51.8$\pm$6.6 & 28.7$\pm$6.0 & 87.3$\pm$5.1 & 92.9$\pm$0.7 & \textbf{93.4$\pm$1.2} & 93.2$\pm$0.5 \\
    &  dynamics  & 59.9$\pm$3.3 & 25.4$\pm$11.3 & 2.5$\pm$2.4 & 54.0$\pm$1.9 & 78.3$\pm$3.2 & 80.4$\pm$2.0 & \textbf{92.2$\pm$0.9} \\
    \hline
    \multirow{4}{*}{Walker2d}   & observation   & \textbf{106.2$\pm$1.7} & -0.4$\pm$0.0 &-0.3$\pm$0.1 & 15.0$\pm$2.6& 17.5$\pm$4.3 & 71.1$\pm$23.5 &  101.3$\pm$11.8 \\
    &  action   & 89.5$\pm$17.3 & 48.4$\pm$20.5 & 3.8$\pm$1.7 & 109.2$\pm$0.3 & 109.7$\pm$0.7 & 112.7$\pm$0.5 & \textbf{113.5$\pm$0.5} \\
    &  reward   & 97.9$\pm$15.4 & 35.0$\pm$29.8 & -0.4$\pm$0.3 & 104.2$\pm$5.4 & 110.1$\pm$0.4 & 110.7$\pm$1.3 & \textbf{112.7$\pm$0.6} \\
    &  dynamics & 97.9$\pm$15.4 & 0.2$\pm$0.5 & 3.6$\pm$6.7 & 82.0$\pm$11.9 & 93.5$\pm$3.7 & 100.3$\pm$7.5  & \textbf{112.8$\pm$0.2} \\
    \hline
    \multirow{4}{*}{Hopper}     & observation   & 52.7$\pm$1.6 & 0.8$\pm$0.0 &0.9$\pm$0.1 & 46.2$\pm$12.9& 1.3$\pm$0.1 & 23.2$\pm$33.6 &  \textbf{76.2$\pm$24.2} \\
    &  action   & 50.4$\pm$1.5 & 23.5$\pm$6.0 & 18.8$\pm$10.6 & 83.5$\pm$13.5 & 59.1$\pm$32.0 & 77.8$\pm$16.4 & \textbf{92.6$\pm$38.9} \\
    &  reward   & 52.4$\pm$1.5 & 0.7$\pm$0.0 & 3.9$\pm$5.4 & 90.8$\pm$3.6 & 85.4$\pm$16.3 & 78.6$\pm$31.0  & \textbf{91.4$\pm$25.9} \\
    &  dynamics & 52.4$\pm$1.5 & 5.6$\pm$6.5 & 2.1$\pm$2.4 & 26.7$\pm$7.5 & 75.8$\pm$9.3 & 69.7$\pm$25.3 & \textbf{84.6$\pm$15.4} \\
    \hline
    \multicolumn{2}{l|}{Average score $\uparrow$}   & 69.3 &19.4 &7.6 & 64.2& 70.3& 81.5 &\textbf{95.3}\\
     \bottomrule
  \end{tabular}
  \end{adjustbox}
\end{table}

\subsection{Evaluation on the Medium-expert Datasets}
\label{ap:medium_expert_data}
In addition to the ``medium-replay" dataset, which closely resembles real-world environments, we also present results using the "medium-expert" dataset under random corruption in Table \ref{tab:medium_expert_random_scale1.0}. The ``medium-expert'' dataset is a mixture of $50\%$ expert and $50\%$ medium demonstrations. As a result of the inclusion of expert demonstrations, the overall performance of each algorithm is much better compared to the results obtained from the ``medium-replay" datasets. Similar to the main results, EDAC and MSG exhibit very low average scores across various data corruption scenarios. Additionally, CQL and SQL only manage to match the performance of BC. IQL, on the other hand, outperforms BC by $17.6\%$, confirming its effectiveness. Most notably, \textbf{RIQL achieves the highest score in 10 out of 12 settings and improves by $16.9\%$ over IQL}. These results align with our findings in the main paper, further demonstrating the superior effectiveness of RIQL across different types of datasets.

\begin{table}[h!]
\small
  \caption{Average normalized performance under \textbf{random corruption of scale 2.0} using the ``medium-replay'' datasets. Results are averaged over 4 random seeds.}
  \vspace{3pt}
  \label{tab:random_scale2.0}
  \centering
  \begin{adjustbox}{width=1.0\columnwidth}
  \begin{tabular}{l|l|c|c|c|c|c|c|c}
    \toprule
   Environment & Attack Element  & BC & EDAC & MSG & CQL &  SQL & IQL & RIQL (ours) \\
    \midrule
    \multirow{4}{*}{Walker2d}   & observation   & 16.1$\pm$5.5 & -0.4$\pm$0.0 & -0.4$\pm$0.1 & 12.5$\pm$21.4 & 0.7$\pm$1.2 & 27.4$\pm$5.7 & \textbf{50.3$\pm$11.5} \\ 
    &  action   & 13.3$\pm$2.5 & 77.5$\pm$3.4 & 16.3$\pm$3.7 & 15.4$\pm$9.1 & 79.0$\pm$5.2 & 69.9$\pm$2.8 & \textbf{82.9$\pm$3.8} \\
    &  reward   & 16.0$\pm$7.4 & 1.2$\pm$1.6 & 9.3$\pm$5.9 & 48.4$\pm$4.6 & 0.2$\pm$0.9 & 56.0$\pm$7.4 & \textbf{81.5$\pm$4.6}  \\
    &  dynamics & 16.0$\pm$7.4 & -0.1$\pm$0.0 &  4.2$\pm$3.8 & 0.1$\pm$0.4 & 18.5$\pm$3.6 & 12.0$\pm$5.6 & \textbf{77.8$\pm$6.3} \\
    \hline
    \multirow{4}{*}{Hopper}  & observation   & 17.6$\pm$1.2 & 2.7$\pm$0.8 & 7.1$\pm$5.8 & 29.2$\pm$1.9 & 13.7$\pm$3.1 & \textbf{69.7$\pm$7.0} & 55.8$\pm$17.4\\
    &  action   & 16.0$\pm$3.1 & 31.9$\pm$3.2 & 35.9$\pm$8.3 & 19.2$\pm$0.5 & 38.8$\pm$1.7 & 75.1$\pm$18.2 & \textbf{92.7$\pm$11.0} \\
    &  reward   & 19.5$\pm$3.4 & 6.4$\pm$5.0 & 42.1$\pm$14.2 & 51.7$\pm$14.8 & 41.5$\pm$4.2 & 62.1$\pm$8.3 & \textbf{85.0$\pm$17.3} \\
    &  dynamics & 19.5$\pm$3.4 & 0.8$\pm$0.0 & 3.7$\pm$3.2 & 0.8$\pm$0.1 & 14.4$\pm$2.7 & 0.9$\pm$0.3 & \textbf{45.8$\pm$9.3}\\
    \hline
    \multicolumn{2}{l|}{Average score $\uparrow$}   & 16.8 & 15.0& 14.8& 22.2& 25.9&  46.6& \textbf{71.6}\\
     \bottomrule
  \end{tabular}
  \end{adjustbox}
\end{table}

\begin{table}[h!]
\small
  \caption{Average normalized performance under \textbf{adversarial corruption of scale 2.0} using the ``medium-replay'' datasets. Results are averaged over 4 random seeds.}
  \vspace{3pt}
  \label{tab:adversarial_scale2.0}
  \centering
  \begin{adjustbox}{width=1.0\columnwidth}
  \begin{tabular}{l|l|c|c|c|c|c|c|c}
    \toprule
   Environment & Attack Element  & BC & EDAC & MSG & CQL &  SQL & IQL & RIQL (ours) \\
    \midrule
    \multirow{4}{*}{Walker2d}   & observation   & 13.0$\pm$1.5 & 3.9$\pm$6.7 & 3.5$\pm$3.9 & 63.3$\pm$5.1 & 2.7$\pm$3.0 & 47.0$\pm$8.3 & \textbf{71.5$\pm$6.4}\\
    &  action   & 0.3$\pm$0.5 & \textbf{9.2$\pm$2.4} & 4.8$\pm$0.4 & 1.7$\pm$3.5 & 5.6$\pm$1.4 & 3.0$\pm$0.9 & 7.9$\pm$0.5\\
    &  reward   & 16.0$\pm$7.4 &  -0.1$\pm$0.0 & 9.1$\pm$3.8 & 28.4$\pm$19.2 & -0.3$\pm$0.0 & 22.9$\pm$12.5 & \textbf{81.1$\pm$2.4}\\
    &  dynamics & 16.0$\pm$7.4 & 2.2$\pm$1.8 & 1.9$\pm$0.2 & 4.2$\pm$2.2 & 3.4$\pm$1.9 & 1.7$\pm$1.0 & \textbf{66.0$\pm$7.7} \\
    \hline
    \multirow{4}{*}{Hopper}   & observation     & 17.0$\pm$5.4 & 23.9$\pm$10.6 & 19.9$\pm$6.6 & 52.6$\pm$8.8 & 14.0$\pm$1.2 & 44.3$\pm$2.9 & \textbf{57.0$\pm$7.6}\\
    &  action   & 9.9$\pm$4.6 & 23.3$\pm$4.7 & 22.8$\pm$1.7 & 14.7$\pm$0.9 & 19.3$\pm$3.1 & 21.8$\pm$4.9 & \textbf{23.8$\pm$4.0} \\
    &  reward   & 19.5$\pm$3.4 & 1.2$\pm$0.5 & 22.9$\pm$1.1 & 22.9$\pm$1.1 & 0.6$\pm$0.0 & 24.7$\pm$1.6 & \textbf{35.5$\pm$6.7} \\
    &  dynamics & 19.5$\pm$3.4 & 0.6$\pm$0.0 & 0.6$\pm$0.0 & 0.6$\pm$0.0 & 24.3$\pm$3.2 & 0.7$\pm$0.0 & \textbf{29.9$\pm$1.5} \\
    \hline
    \multicolumn{2}{l|}{Average score $\uparrow$}   & 13.9 &8.0  &10.7 & 23.6& 8.7 & 20.8 & \textbf{46.6}\\
     \bottomrule
  \end{tabular}
  \end{adjustbox}
\end{table}

\subsection{Evaluation under Large-scale Corruption}
In our main results, we focused on random and adversarial corruption with a scale of 1.0. In this subsection, we present an empirical evaluation under a corruption scale of 2.0 using the ``medium-replay" datasets. The results for random and adversarial corruptions are presented in Table \ref{tab:random_scale2.0} and Table \ref{tab:adversarial_scale2.0}, respectively.

In both tables, \textbf{RIQL achieves the highest performance in 7 out of 8 settings, surpassing IQL by $53.5\%$ and $124.0\%$}, respectively. Most algorithms experience a significant decrease in performance under adversarial corruption at the same scale, with the exception of CQL. However, it is still not comparable to our algorithm, RIQL. These results highlight the superiority of RIQL in handling large-scale random and adversarial corruptions.

\subsection{Training Time}\label{ap:training_time}
We report the average epoch time on the Hopper task as a measure of computational cost in Table~\ref{tab:time}. From the table, it is evident that BC requires the least training time, as it has the simplest algorithm design. IQL requires more than twice the amount of time compared to BC, as it incorporates expectile regression and weighted imitation learning. Additionally, RIQL requires a comparable computational cost to IQL, introducing the Huber loss and quantile Q estimators. On the other hand, EDAC, MSG, and CQL require significantly longer training time. This is primarily due to their reliance on a larger number of Q ensembles and additional computationally intensive processes, such as the approximate logsumexp via sampling in CQL. Notably, DT necessitates the longest epoch time, due to its extensive transformer-based architecture. The results indicate that RIQL achieves significant gains in robustness without imposing heavy computational costs.

\begin{table}[ht]
 \caption{Average Epoch Time.}
    \centering
    \begin{tabular}{l|c|c|c|c|c|c|c}
    \toprule
         Algorithm & BC & DT & EDAC & MSG & CQL & IQL & RIQL \\
    \midrule
         Time (s)  & 3.8$\pm$0.1 & 28.9$\pm$0.2 & 14.1$\pm$0.3 & 12.0$\pm$0.8 & 22.8$\pm$0.7 & 8.7$\pm$0.3 & 9.2$\pm$0.4 \\
    \bottomrule
    \end{tabular}
    \label{tab:time}
\end{table}

\begin{figure}[h]
    \centering
    \includegraphics[width=0.75\linewidth]{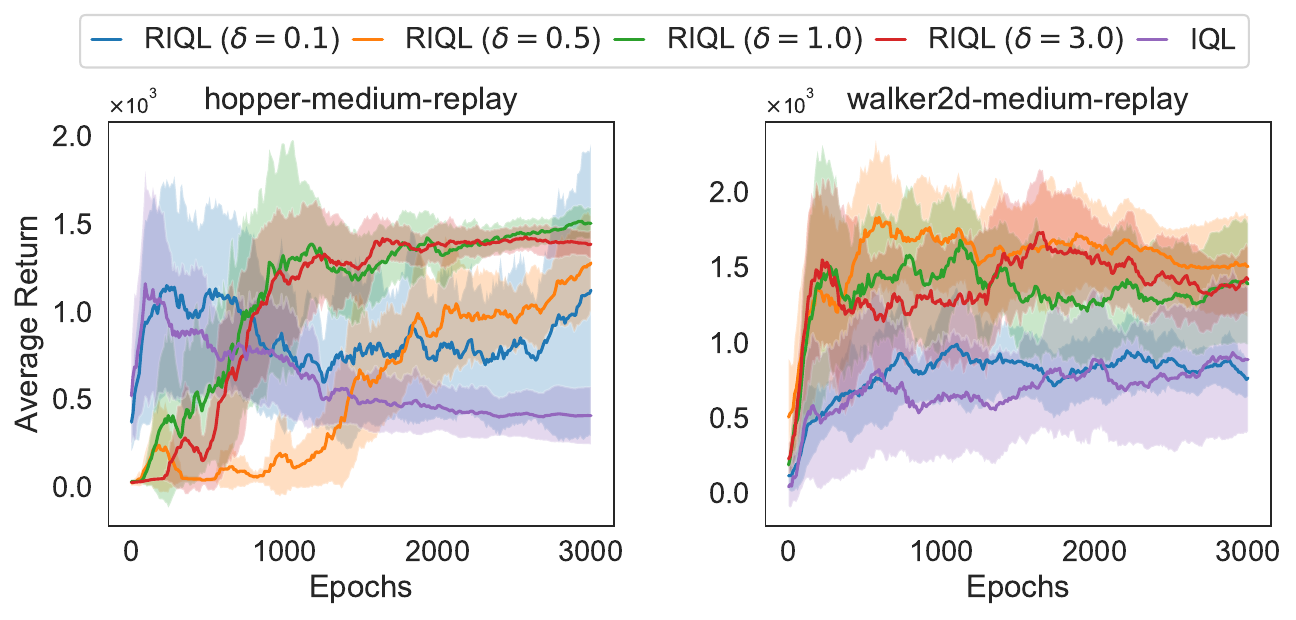}
    \caption{Hyperparameter study for $\delta$ in the Huber loss under the mixed attack.}
    \label{fig:Hyperparameter_delta}
\end{figure}

\subsection{Hyperparameter Study}\label{ap:hyperparameter_study}

In this subsection, we investigate the impact of key hyperparameters in RIQL, namely $\delta$ in the Huber loss,  and $\alpha$, $K$ in the quantile Q estimators. We conduct our experiments using the mixed attack, where corruption is independently applied to each element (states, actions, rewards, and next-states) with a corruption rate of $0.2$ and a corruption scale of $1.0$. The dataset used for this evaluation is the ``medium-replay" dataset of the Hopper and Walker environments.

\paragraph{Hyperparameter $\delta$} In this evaluation, we set $\alpha=0.1$ and $K=5$ for RIQL. The value of $\delta$ directly affects the position of the boundary between the $l_1$ loss and the $l_2$ loss. As $\delta$ approaches $0$, the boundary moves far from the origin of coordinates, making the loss function similar to the squared loss and resulting in performance closer to IQL. This can be observed in Figure \ref{fig:Hyperparameter_delta}, where $\delta=0.1$ is the closest to IQL. As $\delta$ increases, the Huber loss becomes more similar to the $l_1$ loss and exhibits greater robustness to heavy-tail distributions. However, excessively large values of $\delta$ can decrease performance, as the $l_1$ loss can also impact the convergence. This is evident in the results, where $\delta=3.0$ slightly decreases performance. Generally, $\delta=1.0$ achieves the best or nearly the best performance in the mixed corruption setting. However, it is important to note that the optimal value of $\delta$ also depends on the corruption type when only one type of corruption (e.g., state, action, reward, and next-state) takes place. For instance, we find that $\delta=0.1$ generally yields the best results for state and action corruption, while $\delta=1$ is generally optimal for the dynamics corruption.

\paragraph{Hyperparameter $\alpha$} In this evaluation, we set $\delta=1.0$ and $K=5$ for RIQL. The value of $\alpha$ determines the quantile used for estimating Q values and advantage values. A small $\alpha$ results in a higher penalty for the corrupted data, generally leading to better performance. This is evident in Figure \ref{fig:Hyperparameter_alpha}, where $\alpha=0.1$ achieves the best performance in the mixed corruption setting. However, as mentioned in Section \ref{sec:quantile_Q_estimator}, when only dynamics corruption is present, it is advisable to use a slightly larger $\alpha$, such as $0.25$, to prevent the excessive pessimism in the face of dynamics corruption.

\begin{figure}[t]
    \centering
    \includegraphics[width=0.75\linewidth]{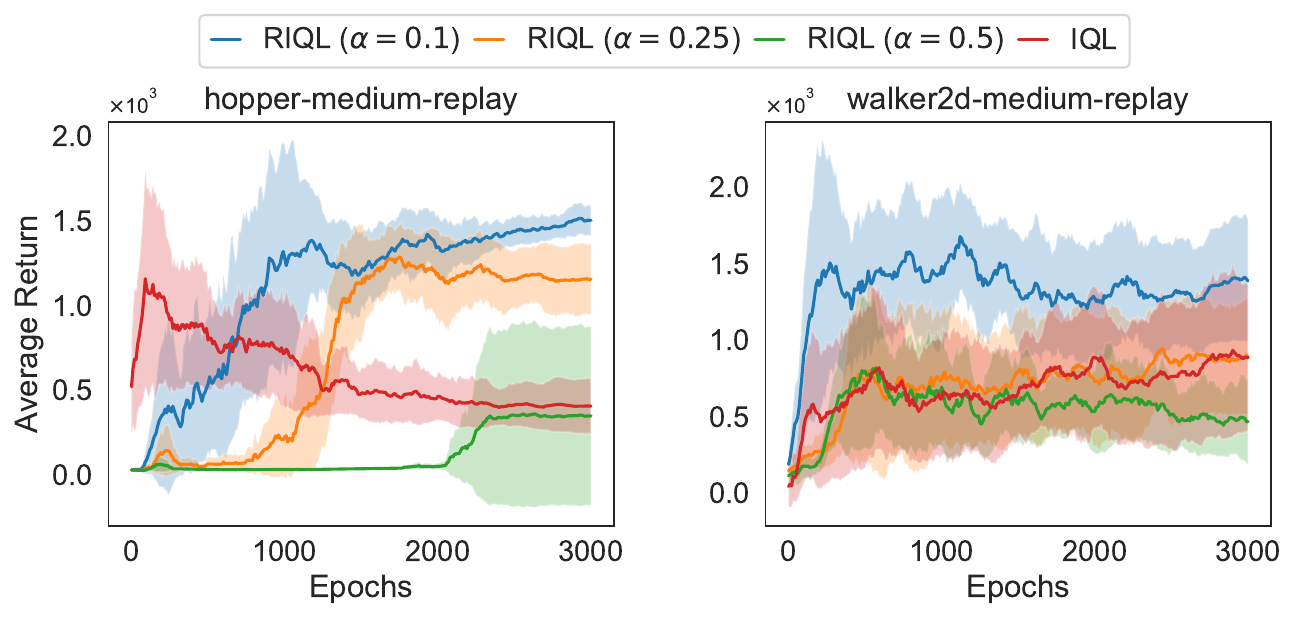}
    \caption{Hyperparameter study for $\alpha$ in the quantile estimators under the mixed attack.}
    \label{fig:Hyperparameter_alpha}
\end{figure}

\begin{figure}[t]
    \centering
    \includegraphics[width=0.75\linewidth]{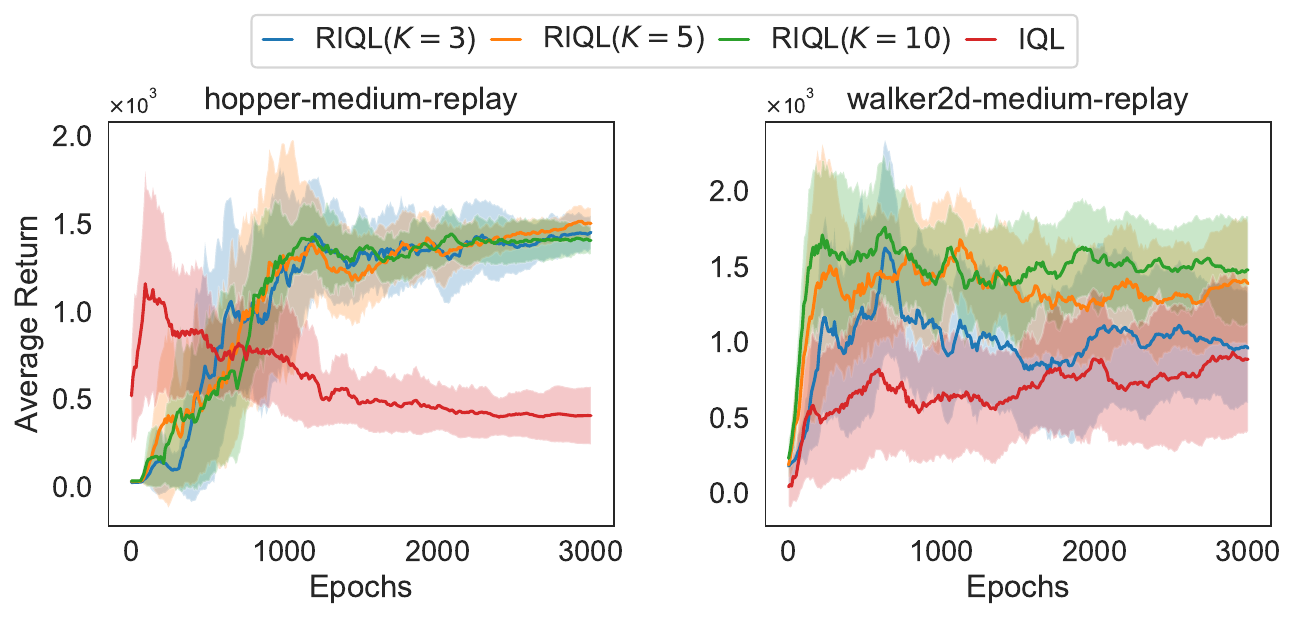}
    \caption{Hyperparameter study for $K$ in the quantile estimators under the mixed attack.}
    \label{fig:Hyperparameter_K}
\end{figure}

\paragraph{Hyperparameter $K$} In this evaluation, we set $\alpha=0.1$ and $\delta=1.0$ for RIQL. Hyperparameter $K$ is used to adjust the quantile Q estimator for in-dataset penalty. From Figure \ref{fig:Hyperparameter_K}, we can observe that the impact of $K$ is more pronounced in the walker2d task compared to the hopper task, whereas the above hyperparameter $\alpha$ has a greater influence on the hopper task. For the walker2d task, a larger value of $K$ leads to better performance. Overall, $K=5$ achieves the best or nearly the best performance in both settings. Therefore, we set $K=5$ by default, considering both computational cost and performance.




\subsection{Additional Baselines}\label{ap:additional_baselines}
In addition to the baselines compared in the main paper, we compare RIQL to a sequence-modeling baseline, Decision Transformer (DT) \citep{chen2021decision}, and a recently proposed robust offline RL algorithm for data corruption, UWMSG \citep{ye2023corruptionrobust}. The results are presented in Table \ref{tab:random_corruption_new_baseline} and Table \ref{tab:adversarial_corruption_new_baseline}. Overall, RIQL demonstrates robust performance compared to the two additional baselines.

In comparison to BC, DT demonstrates a notable improvement, highlighting the effectiveness of sequence modeling. Furthermore, as DT employs a distinct sequence modeling approach from TD-based methods, it does not store data in independent transitions $\{s_i,a_i,r_i,s'_i\}_{i=1}^{N}$. Instead, it utilizes a trajectory sequence $\{s_t^i,a_t^i, r_t^i\}_{t=1}^{T}$ for $i$-th trajectory. Consequently, the observation corruption and dynamics corruption are identical for DT. Analyzing Table \ref{tab:random_corruption_new_baseline} and Table \ref{tab:adversarial_corruption_new_baseline}, we can observe that DT holds an advantage under observation corruption due to its ability to condition on historical information rather than relying solely on a single state. Despite leveraging trajectory history information and requiring 3.1 times the epoch time compared to RIQL, DT still falls significantly behind RIQL in performance under both random and adversarial data corruption. Exploring the incorporation of trajectory history, similar to DT, into RIQL could be a promising avenue for future research.

With regard to UWMSG, we observe its improvement over MSG, confirming its effectiveness with uncertainty weighting. However, it is still more vulnerable to data corruption compared to RIQL. We speculate that the explicit uncertainty estimation used in UWMSG is still unstable.

\begin{table}[t]
\small
  \caption{Comparison with \textbf{additional baselines under random data corruption}. Average normalized scores are reported and the highest score is highlighted.}
  \label{tab:random_corruption_new_baseline}
  \centering
  \begin{adjustbox}{width=0.95\columnwidth}
  \begin{tabular}{l|l|c|c|c|c}
    \toprule
   Environment & Attack Element  & DT & MSG & UWMSG & RIQL (ours) \\
    \midrule
    \multirow{4}{*}{Halfcheetah}  &  observation  &\textbf{33.7$\pm$1.9}& -0.2$\pm$2.2 & 2.1$\pm$0.4 & 27.3$\pm$2.4 \\
    &  action     & 36.3$\pm$1.5 & 52.0$\pm$0.9 & \textbf{53.8$\pm$0.6} & 42.9$\pm$0.6\\
    &  reward     & 39.2$\pm$1.0 & 17.5$\pm$16.4 & 39.9$\pm$1.8 & \textbf{43.6$\pm$0.6} \\
    &  dynamics   & 33.7$\pm$1.9 & 1.7$\pm$0.4  & 3.6$\pm$2.0 & \textbf{43.1$\pm$0.2} \\
    \hline
    \multirow{4}{*}{Walker2d}    &  observation  & \textbf{54.5$\pm$5.0} & -0.4$\pm$0.1 & 1.5$\pm$2.0 &  28.4$\pm$7.7 \\
    &  action    & 10.3$\pm$3.1 & 25.3$\pm$10.6 & 61.2$\pm$9.9 & \textbf{84.6$\pm$3.3}\\
    &  reward    & 65.5$\pm$4.9 & 18.4$\pm$9.5 & 65.9$\pm$13.9 & \textbf{83.2$\pm$2.6} \\
    &  dynamics  & 54.5$\pm$5.0 & 7.4$\pm$3.7 & 6.5$\pm$2.8 &  \textbf{78.2$\pm$1.8} \\
    \hline
    \multirow{4}{*}{Hopper}    &  observation    & \textbf{65.2$\pm$12.1}& 6.9$\pm$5.0 & 11.3$\pm$4.0 &  62.4$\pm$1.8 \\
    &  action    & 15.5$\pm$0.7 & 37.6$\pm$6.5 & 82.0$\pm$17.6 &  \textbf{90.6$\pm$5.6} \\
    &  reward    & 78.0$\pm$5.3 & 24.9$\pm$4.3 & 61.0$\pm$19.5 &  \textbf{84.8$\pm$13.1}\\
    &  dynamics  & \textbf{65.2$\pm$12.1} & 12.4$\pm$4.9 & 18.7$\pm$4.6 &  51.5$\pm$8.1 \\
    \hline
    \multicolumn{1}{l|}{Average score $\uparrow$}  &   & 46.0 & 17.0 & 34.0 & \textbf{60.0} \\
     \bottomrule
  \end{tabular}
  \end{adjustbox}
\end{table}

\begin{table}[h]
\small
  \caption{Comparison with \textbf{additional baselines under adversarial data corruption}. Average normalized scores are reported and the highest score is highlighted.}\label{tab:adversarial_corruption_new_baseline}
  \centering
  \begin{adjustbox}{width=0.95\columnwidth}
  \begin{tabular}{l|l|c|c|c|c}
    \toprule
   Environment & Attack Element  &  DT & MSG & UWMSG & RIQL (ours) \\
    \midrule
   \multirow{4}{*}{Halfcheetah}  &  observation & 35.5$\pm$4.3 & 1.1$\pm$0.2 & 1.3$\pm$0.8 & \textbf{35.7$\pm$4.2} \\
    &  action    & 16.6$\pm$1.2 & 37.3$\pm$0.7 & \textbf{39.4$\pm$3.1}  & 31.7$\pm$1.7 \\
    &  reward   & 37.0$\pm$2.6 & \textbf{47.7$\pm$0.4} & 43.8$\pm$0.8  & 44.1$\pm$0.8 \\
    &  dynamics  & 35.5$\pm$4.3 & -1.5$\pm$0.0 & 10.9$\pm$1.8  & \textbf{35.8$\pm$2.1} \\
   \hline
 \multirow{4}{*}{Walker2d}   & observation  & 57.4$\pm$3.2 & 2.9$\pm$2.7 & 9.9$\pm$1.5 & \textbf{70.0$\pm$5.3} \\
    &  action   & 9.5$\pm$1.6 & 5.4$\pm$0.9 & 7.4$\pm$2.4  & \textbf{66.1$\pm$4.6} \\
    &  reward    & 64.9$\pm$5.4 &  9.6$\pm$4.9 & 52.9$\pm$12.1  & \textbf{85.0$\pm$1.5} \\
    &  dynamics  & 57.4$\pm$3.2 & 0.1$\pm$0.2 & 2.1$\pm$0.2  & \textbf{60.6$\pm$21.8} \\
    \hline
     \multirow{4}{*}{Hopper}    &  observation &  \textbf{57.7$\pm$15.1} & 16.0$\pm$2.8 & 13.6$\pm$0.7 &  50.8$\pm$7.6\\
    &  action    & 14.0$\pm$1.1 & 23.0$\pm$2.1 &  34.0$\pm$4.4 & \textbf{63.6$\pm$7.3} \\
    &  reward    & \textbf{68.6$\pm$10.5} &   22.6$\pm$2.8 & 23.6$\pm$2.1 & 65.8$\pm$9.8 \\
    &  dynamics  & 57.7$\pm$15.1 & 0.6$\pm$0.0 & 0.8$\pm$0.0  &  \textbf{65.7$\pm$21.1} \\
    \hline
    \multicolumn{1}{l|}{Average score $\uparrow$} &    &  42.7 & 13.7 & 20.0 &  \textbf{56.2} \\
     \bottomrule
  \end{tabular}
  \end{adjustbox}
\end{table}

\subsection{Additional Ablation Study}\label{ap:ablation_addition}
In this subsection, we analyze the contributions of RILQ's components in the presence of observation, action, reward, and dynamics corruption. Figure \ref{fig:ablation_new} illustrates the average normalized performance of Walker and Hopper tasks on the medium-replay datasets. We consider three variants of RIQL: RIQL without observation normalization (\textbf{RIQL w/o norm}), RIQL without the quantile Q estimator (\textbf{RIQL w/o quantile}), and RIQL without the Huber loss (\textbf{RIQL w/o Huber}). In \textbf{RIQL w/o quantile}, the Clipped Double Q trick used in IQL, is employed to replace the quantile Q estimator in RIQL. Based on the figure, we draw the following conclusions for each component: (1) normalization is beneficial for all four types of corruption, (2) quantile Q estimator is more effective for observation and action corruption but less so for reward and dynamics corruption, (3) conversely, the Huber loss is more useful for reward and dynamics corruption but less effective for observation and action corruption. Overall, all three components contribute to the performance of RIQL. 

\begin{figure}[t]
    \centering
    \includegraphics[width=1\linewidth]{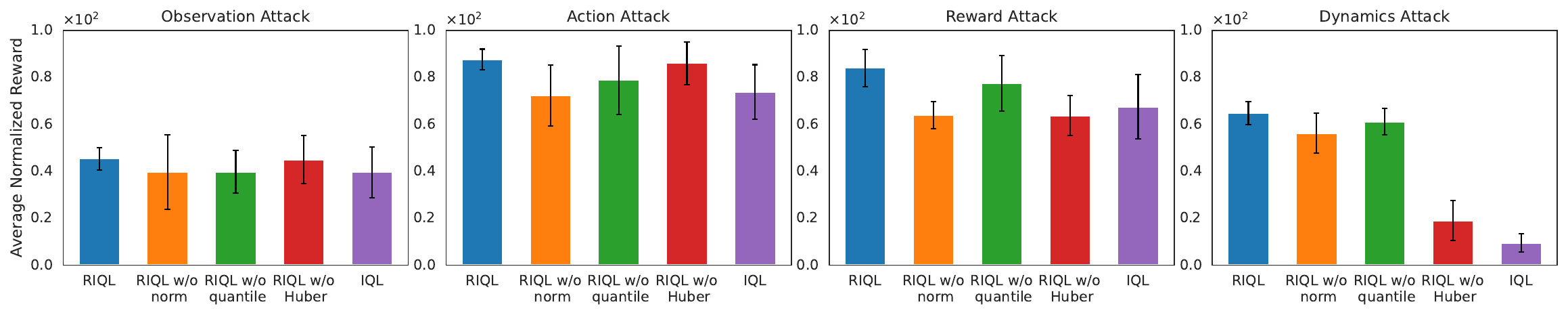}
    \caption{Ablations of RIQL under four types of data corruption.}
    \label{fig:ablation_new}
\end{figure}

\section{Evaluation on AntMaze Tasks} \label{ap:antmaze_exp}
In this section, we present the evaluation of RIQL and baselines on a more challenging benchmark, AntMaze, which introduces additional difficulties due to its sparse-reward setting. Specifically, we assess four ``medium'' and ``large'' level tasks. Similar to the MuJoCo tasks, we consider random data corruption, but with a corruption rate of 0.2 for all AntMaze tasks. We observe that algorithms in AntMaze tasks are more sensitive to data noise, particularly when it comes to observation and dynamics corruption. Therefore, we use smaller corruption scales for observation (0.3), action (1.0), reward (30.0), and dynamics (0.3). The hyperparameters used for RIQL are list in Talbe \ref{tab:random_antmaze_hyper}.

As shown in Table \ref{tab:random_corruption_antmaze}, methods such as BC, DT, EDAC, and CQL struggle to learn meaningful policies in the presence of data corruption. In contrast, IQL exhibits relatively higher robustness, particularly under action corruption. RIQL demonstrates a significant improvement of $77.8\%$ over IQL, successfully learning reasonable policies even in the presence of all types of corruption. These results further strengthen the findings in our paper and highlight the importance of RIQL.

\begin{table}[h]
\small
  \caption{Average normalized performance under random data corruption on AntMaze tasks.}
\label{tab:random_corruption_antmaze}
  \centering
  \begin{adjustbox}{width=1\columnwidth}
  \begin{tabular}{l|l|c|c|c|c|c|c}
    \toprule
   Environment & Attack Element & BC   & DT & EDAC & CQL & IQL & RIQL (ours) \\
    \midrule
    \multirow{4}{*}{antmaze-medium-play-v2}  &  observation & 0.0$\pm$0.0 & 0.0$\pm$0.0 & 0.0$\pm$0.0  & 0.8$\pm$1.4 & 7.4$\pm$5.0 & \textbf{18.0 $\pm$ 4.7}  \\
    &  action    & 0.0$\pm$0.0 & 0.0$\pm$0.0 & 0.0$\pm$0.0  & 0.8$\pm$1.4 & 63.5 $\pm$ 5.4 &  \textbf{64.6 $\pm$ 5.2} \\
    &  reward    & 0.0$\pm$0.0 & 0.0$\pm$0.0 & 0.0$\pm$0.0  & 0.0$\pm$0.0 & 12.4 $\pm$ 8.1 &  \textbf{61.0 $\pm$ 5.8} \\
    &  dynamics  & 0.0$\pm$0.0 & 0.0$\pm$0.0 & 0.0$\pm$0.0  & 37.5$\pm$28.6 & 32.7 $\pm$ 8.4 &  \textbf{63.2 $\pm$ 5.4} \\
    \hline
    \multirow{4}{*}{antmaze-medium-diverse-v2}    &  observation & 0.0$\pm$0.0 & 0.0$\pm$0.0 &  0.0$\pm$0.0 &0.0$\pm$0.0  &  11.9 $\pm$ 6.9 &  \textbf{12.5 $\pm$ 7.9} \\
    &  action   & 0.0$\pm$0.0 & 0.0$\pm$0.0 & 0.0$\pm$0.0  & 15.0$\pm$22.3 &  \textbf{65.6 $\pm$ 4.6} & 62.3 $\pm$ 5.5 \\
    &  reward   & 0.0$\pm$0.0 & 0.0$\pm$0.0 & 0.0$\pm$0.0  & 0.0$\pm$0.0 & 8.3 $\pm$ 4.5 &  \textbf{22.9 $\pm$ 17.9} \\
    &  dynamics & 0.0$\pm$0.0 & 0.0$\pm$0.0 & 0.0$\pm$0.0  & 20.8$\pm$14.2 & 35.5 $\pm$ 8.5  &  \textbf{41.8 $\pm$ 7.2} \\
    \hline
    \multirow{4}{*}{antmaze-large-play-v2}    &  observation  & 0.0$\pm$0.0 & 0.0$\pm$0.0 & 0.0$\pm$0.0  & 0.0$\pm$0.0 &  0.0$\pm$0.0 &  \textbf{27.3 $\pm$ 5.1} \\
    &  action   & 0.0$\pm$0.0 & 0.0$\pm$0.0 & 0.0$\pm$0.0  & 0.0$\pm$0.0 &  33.1 $\pm$ 6.2 &   \textbf{33.2 $\pm$ 6.3} \\
    &  reward   & 0.0$\pm$0.0 & 0.0$\pm$0.0 & 0.0$\pm$0.0  & 0.0$\pm$0.0 & 0.6 $\pm$ 0.9 &  \textbf{27.7 $\pm$ 8.0} \\
    &  dynamics & 0.0$\pm$0.0 & 0.0$\pm$0.0 & 0.0$\pm$0.0  & 4.2$\pm$5.5 & 1.6 $\pm$ 1.5 &  \textbf{25.0 $\pm$ 6.4} \\
    \hline
    \multirow{4}{*}{antmaze-large-diverse-v2}    &  observation  & 0.0$\pm$0.0 & 0.0$\pm$0.0 & 0.0$\pm$0.0  & 0.0$\pm$0.0  &  0.0$\pm$0.0 &   \textbf{24.9$\pm$10.7} \\
    &  action   & 0.0$\pm$0.0 & 0.0$\pm$0.0 & 0.0$\pm$0.0  & 0.0$\pm$0.0 &  \textbf{33.7 $\pm$ 5.9} & 30.9 $\pm$ 9.4\\
    &  reward   & 0.0$\pm$0.0 & 0.0$\pm$0.0 & 0.0$\pm$0.0  & 0.0$\pm$0.0 & 1.1 $\pm$ 1.1 &  \textbf{20.2 $\pm$ 17.0} \\
    &  dynamics & 0.0$\pm$0.0 & 0.0$\pm$0.0 & 0.0$\pm$0.0  & 4.2$\pm$7.2 & 2.5 $\pm$ 2.9 &  \textbf{16.5$\pm$5.7} \\
    \hline
    \multicolumn{1}{l|}{Average score $\uparrow$}  &   & 0.0  & 0.0  & 0.0   & 5.2 & 19.4 &  \textbf{34.5}\\
     \bottomrule
  \end{tabular}
  \end{adjustbox}
\end{table}

\begin{table}[ht]
\small
    \centering
     \caption{Hyperparameters used for RIQL under the random corruption on the AntMaze Tasks.}
    \begin{tabular}{c|c|cccc}
    \toprule
        Environments & Attack Element & $N$ & $\alpha$ & $\delta$  \\
        \midrule
        \multirow{4}{*}{antmaze-medium-play-v2} & observation & 5 & 0.25 & 0.5 \\
        & action & 3 & 0.5 & 0.1 \\
        & reward & 3 & 0.5 &  0.5 \\
        & dynamics & 3 & 0.5 & 0.5  \\
        \hline
         \multirow{4}{*}{antmaze-medium-diverse-v2} & observation & 3 & 0.25 & 0.1 \\
        & action & 3  & 0.5  &  0.1 \\
        & reward &  3 & 0.25 &  0.1 \\
        & dynamics & 3 & 0.5 & 0.1  \\
        \hline
        \multirow{4}{*}{antmaze-large-play-v2} & observation & 3 & 0.25 & 0.5 \\
        & action & 3 & 0.25 & 0.1  \\
        & reward & 3 & 0.5 & 0.5  \\
        & dynamics & 3  & 0.5 & 1.0  \\
        \hline
        \multirow{4}{*}{antmaze-large-diverse-v2} & observation & 3 & 0.25 & 0.5 \\
        & action & 5 & 0.25 & 0.05  \\
        & reward & 3 & 0.25 & 0.5  \\
        & dynamics & 3  & 0.5 & 0.5  \\
    \bottomrule
    \end{tabular}
    \label{tab:random_antmaze_hyper}
\end{table}

\subsection{Learning Curves of the Benchmark Experiments}\label{ap:learning_curves_random}
In Figure \ref{fig:cuvers_random_0.3_1.0} and Figure \ref{fig:cuvers_adversarial_0.3_1.0}, we present the learning curves of RIQL and other baselines on the ``medium-replay'' datasets, showcasing their performance against random and adversarial corruption, respectively. The corruption rate is set to 0.3 and the corruption scale is set to 1.0. These results correspond to Section \ref{sec:random_corruption} in the main paper. From these figures, it is evident that RIQL not only attains top-level performance but also exhibits remarkable learning stability compared to other baselines that can experience significant performance degradation during training. We postulate that this is attributed to the inherent characteristics of the Bellman backup, which can accumulate errors along the trajectory throughout the training process, particularly under the data corruption setting. On the contrary, \textbf{RIQL can effectively defend such effect and achieves both exceptional performance and remarkable stability}.

\begin{figure}[th]
    \centering
    \subfigure[HalfCheetah]{\includegraphics[width=1.\linewidth]{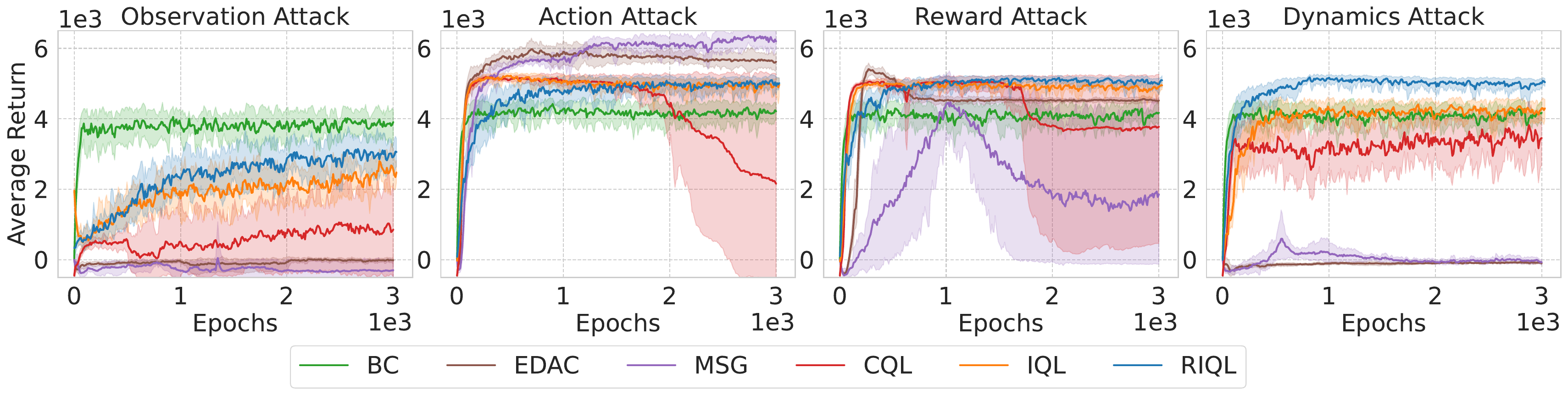}}

    \subfigure[Walker]{\includegraphics[width=1.\linewidth]{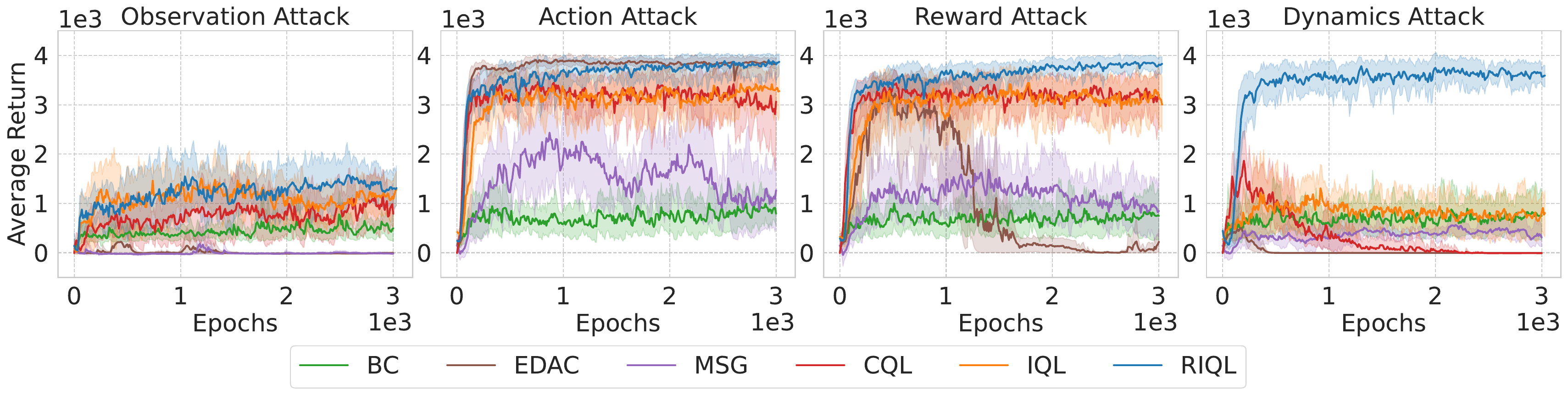}}

    \subfigure[Hopper]{\includegraphics[width=1.\linewidth]{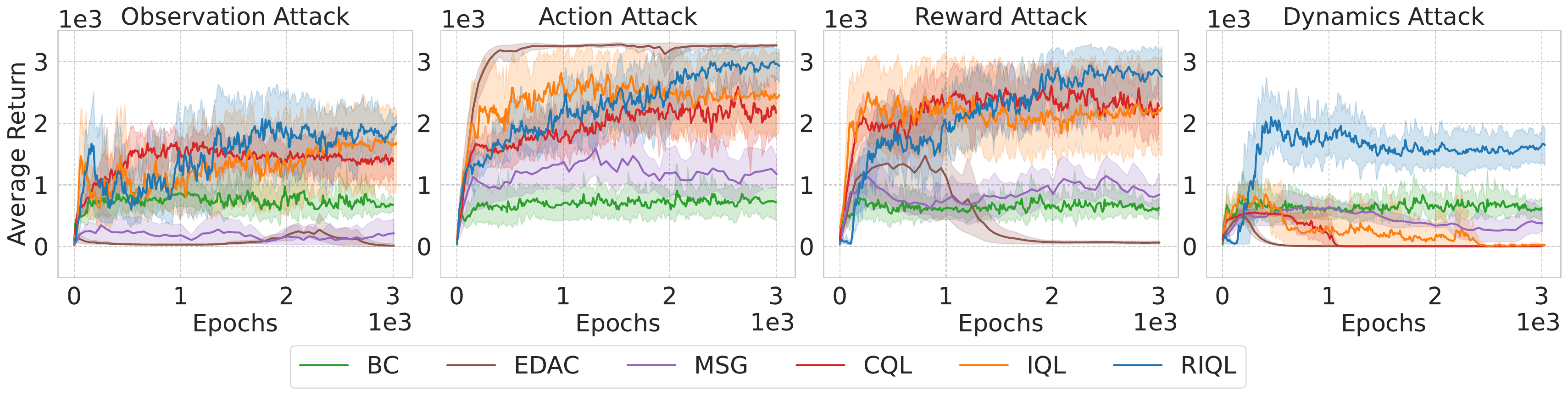}}
    \caption{Learning curves under \textbf{random data corruption} on the ``medium-replay'' datasets.}
    \label{fig:cuvers_random_0.3_1.0}
\end{figure}

\begin{figure}[th]
    \centering
    \subfigure[HalfCheetah]{\includegraphics[width=1.\linewidth]{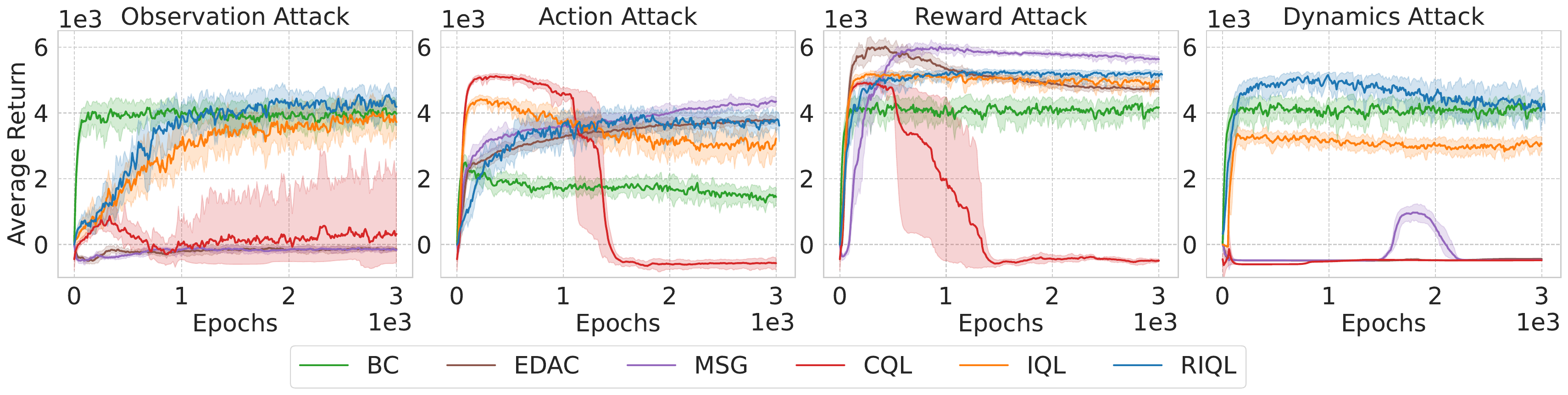}}

    \subfigure[Walker]{\includegraphics[width=1.\linewidth]{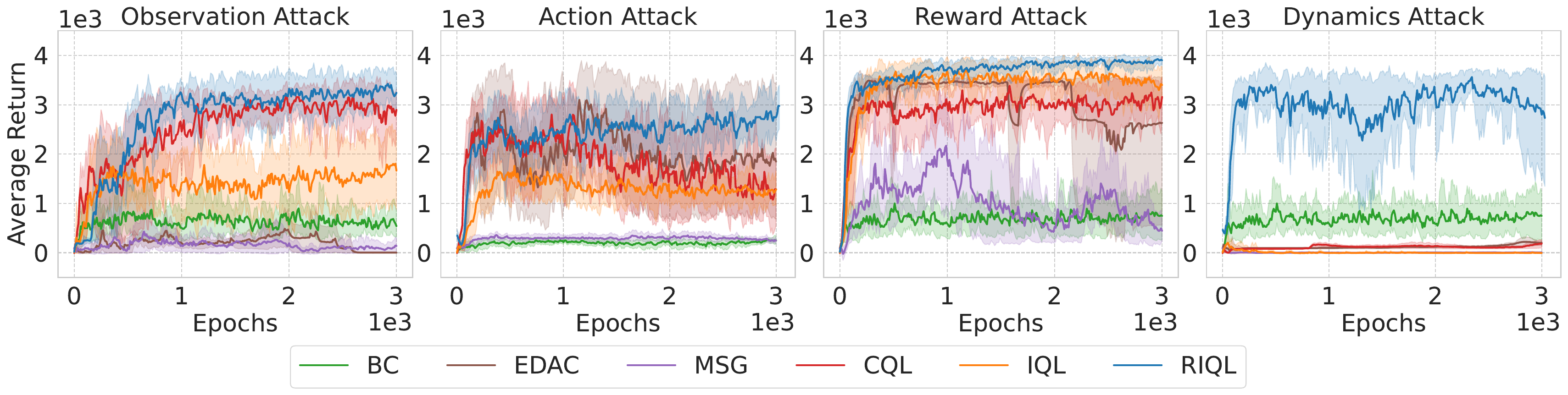}}

    \subfigure[Hopper]{\includegraphics[width=1.\linewidth]{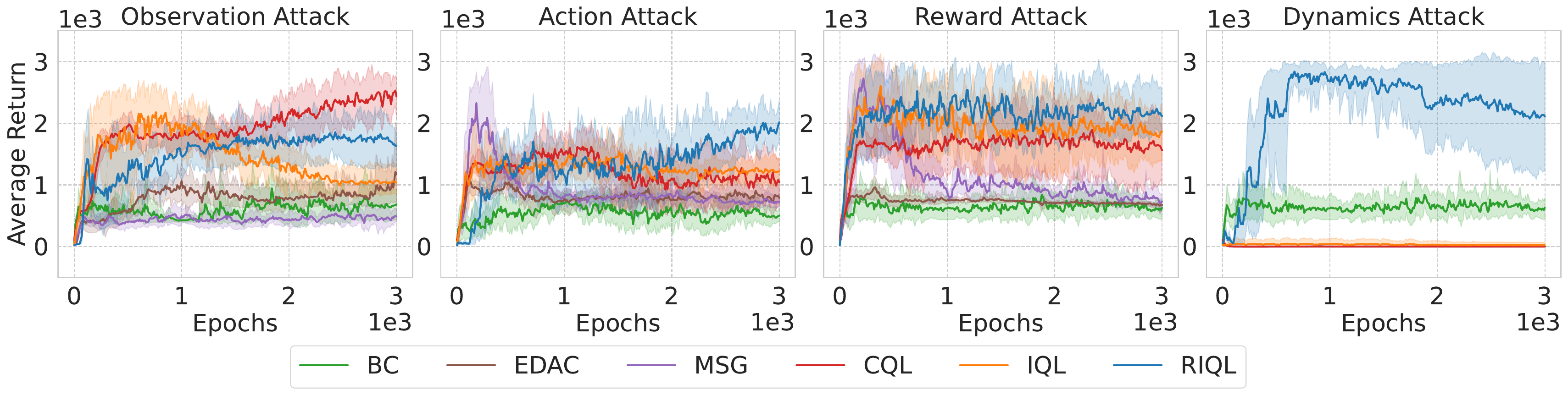}}
    \caption{Learning curves under \textbf{adversarial data corruption} on the ``medium-replay'' datasets.}
    \label{fig:cuvers_adversarial_0.3_1.0}
\end{figure}

\section{Discussion of the Corruption Setting}
In our corruption setting, the states and next-states are independently corrupted. However, our work does not consider the scenario where next-states are not explicitly saved in the dataset, meaning that corrupting a next-state would correspond to corrupting another state in the same trajectory.

The motivations behind our chosen corruption setting are as follows:
\begin{itemize}
    \item Our work aims to address a comprehensive range of data corruption types. Consequently, conducting an independent analysis of each element's vulnerability can effectively highlight the disparities in their susceptibility.
    \item When data are saved into $(s,a,r,s')$ for offline RL, each element in the tuple can be subject to corruption independently.
    \item Our corruption method aligns with the concept of $\epsilon$-Contamination in offline RL theoretical analysis \citep{zhang2022corruption}, which considers arbitrary corruption on the 4-tuples $(s,a,r,s')$. 
    \item Data corruption can potentially occur at any stage, such as storage, writing, reading, transmission, or processing of data. When data is loaded into memory, there is also a possibility of memory corruption and even intentional modifications by hackers. In the case of TD-based offline RL methods, it is common practice to load 4-tuples $(s,a,r,s')$ into memory and each element has the potential to be corrupted independently.
\end{itemize}
In the future, exploring the effects of data corruption in scenarios considering the dependence between states and next-states, or in the context of POMDP, holds promise for further research.

\end{document}